\documentclass[twoside, 10pt]{article} 

\usepackage[T1]{fontenc}
\usepackage[utf8]{inputenc}
\usepackage[french,english]{babel}

\usepackage[a4paper, margin=2.5cm]{geometry} 
\usepackage{parskip}
\usepackage{float}
\usepackage{tcolorbox}
\usepackage{appendix} 
\usepackage{titlesec}
\titlespacing*{\paragraph}{0pt}{2ex plus 0.2ex minus 0.1ex}{1em}

\usepackage{amsmath}
\usepackage{amssymb}
\usepackage{amsthm}
\usepackage{mathtools}
\usepackage{stmaryrd}

\usepackage{graphicx}
\usepackage{caption}
\usepackage{subcaption}
\usepackage{booktabs}
\usepackage{array}

\usepackage{lmodern}
\usepackage{titling}

\usepackage{listings}
\usepackage{xcolor}

\usepackage[style=authoryear, backend=biber, natbib=true, maxcitenames=2, maxbibnames=99, uniquelist=false]{biblatex}
\usepackage[colorinlistoftodos]{todonotes}

\definecolor{todored}{RGB}{189, 30, 30}

\usepackage{hyperref}
\hypersetup{
    colorlinks=true,
    linkcolor=blue,
    filecolor=magenta,      
    urlcolor=cyan,
    citecolor=blue,
}
\usepackage[capitalise]{cleveref}
\usepackage{aliascnt}

\newtheorem{theorem}{Theorem}[section]

\newaliascnt{lemma}{theorem}
\newtheorem{lemma}[lemma]{Lemma}
\aliascntresetthe{lemma}
\newaliascnt{proposition}{theorem}

\aliascntresetthe{proposition}
\newaliascnt{corollary}{theorem}
\newtheorem{corollary}[corollary]{Corollary}
\aliascntresetthe{corollary}
\newaliascnt{assumption}{theorem}
\newtheorem{assumption}[assumption]{Assumption}
\aliascntresetthe{assumption}
\newaliascnt{definition}{theorem}

\aliascntresetthe{definition}

\newcommand{\R}{\mathbb{R}}

\newcommand{\var}{\mathrm{Var}}
\newcommand{\diag}{\mathrm{Diag}}
\newcommand{\E}{\mathbb{E}}
\newcommand{\tr}{\mathrm{Tr}}

\newcommand{\vect}{\theta}
\newcommand{\featvec}{x}
\newcommand{\labelscl}{y}
\newcommand{\bs}{b}

\renewcommand{\a}{A^{(1)}}
\renewcommand{\b}{A^{(2)}}

\renewcommand{\c}{A^{(3)}}
\newcommand{\w}{w}
\newcommand{\s}{S}

\newcommand{\q}{A^{(J)}}
\newcommand{\m}{\rho}
\newcommand{\frm}{\rho^{(0)}}

\newcommand{\gammax}{\gamma_{\max}}
\DeclareMathOperator*{\argmin}{argmin}

\newcommand{\risk}{\mathcal R}
\newcommand{\eqrisk}{\widetilde{\mathcal R}}
\newcommand{\frrisk}{\widetilde{\mathcal R}^{(0)}}
\newcommand{\gammastar}{\widetilde{\gamma}^*}

\newcommand{\bcrit}{\widetilde \bs_{\mathrm{crit}}}
\newcommand{\bmax}{\widetilde \bs_{\mathrm{max}}}
\newcommand{\gammastarb}[1][\bs]{\widetilde\gamma^*_{T,#1}}

\newcommand{\gaussT}[1]{{#1}_T^{\mathrm{gauss}}}

\crefname{assumption}{Assumption}{Assumptions}
\Crefname{assumption}{Assumption}{Assumptions}
\crefname{lemma}{Lemma}{Lemmas}
\Crefname{lemma}{Lemma}{Lemmas}

\title{\Large \textbf{Asymptotic Risk of Learning-Rate Schedules for Least-Squares Regression with Discretely Distributed Features}
\vspace{0.5em}}

\author{
  {\Large Noé Colomban\thanks{Corresponding author: \texttt{noe.colomban.2023@polytechnique.org}}, 
  Francis Bach,
  Fabian Schaipp} \\[.2cm]
   Inria - École Normale Sup\'erieure \\
PSL Research University
}
\date{}

\begin{document}

\maketitle

\begin{abstract}
 We study the last iterate of stochastic gradient descent (SGD) on streaming least squares with discretely distributed features (Slock's model), which decouples the dynamics across eigendirections. Under power-law source and capacity conditions, we derive the exact finite-time risk and asymptotic equivalents of constant and Warmup-Stable-Decay (WSD) schedules. With a tuned base learning-rate, the resulting scaling law decays as $\Theta(T^{-q})$, where  $T$ is the horizon, and $q$ is independent of the cooldown length $c \in (0,1]$ but is strictly smaller for the constant schedule: any cooldown of fixed relative length $c > 0$, however short, improves the risk by a polynomial factor, while its length only affects the constant, which is minimized by linear decay ($c = 1$). Finally, we show that the optimal risk depends on the batch size $b$ and horizon $T$ only through $bT$, up to a critical batch size.
\end{abstract}

\section{Introduction}

While classical stochastic approximation theory relies heavily on iterate averaging to achieve optimal convergence rates, modern practice almost exclusively uses the final iterate. In
particular, the Warmup-Stable-Decay (WSD) schedule \citep{hu2024minicpm, haegele2024scaling, zhai2022scaling} has emerged as
a paradigm for large-scale model training, departing from the cosine and polynomial decays
that were previously standard. Recent theoretical efforts have begun to bridge the gap between
these practical schedules and the exact dynamics of the last iterate. Strong convergence
guarantees for the final iterate have been established in the interpolation regime
\citep{varre2021last}, and the necessity of geometrically tailored decay phases was formally
captured by \textcite{ge2019step}. Building on this intuition, \textcite{schaipp2025} recently
demonstrated that worst-case convex optimization bounds already capture the shape of the loss curves for
schedules used in practice, and even predict how the optimal base learning-rate transfers
between them. 

Such bounds are, however, agnostic to the quantities that ultimately govern the 
convergence rate, most notably the label noise and the spectrum of the data covariance. They
therefore constrain the shape of a good schedule without expressing the resulting risk in terms
of the difficulty of the problem, which is needed to quantify how much a decay phase
actually buys.

By restricting our focus to the streaming least-squares problem, we trade generality for
exactness. Since this setting explicitly accounts for the data spectrum and the noise, it admits
an exact bias-variance decomposition of the final iterate's risk. Exactness comes at a price:
computing the risk in closed form requires a covariance model that decouples the coordinates \citep{slock1993}, and we assume power-law decay of both the spectrum and the initial distance. Under these assumptions, we obtain the exact finite-time risk of every schedule
considered, and asymptotic equivalents that allow a precise comparison between them.

\paragraph{Summary and contributions.} In this paper, we study the WSD family, parameterized by
a relative cooldown length $c \in (0,1]$, together with the constant schedule,
which we treat separately. Under Slock's model and power-law source and capacity
conditions, we obtain the exact finite-time risk of the last iterate
(\cref{th:risk}) and its asymptotic equivalent for each schedule
(\cref{th:constant,th:wsd}), from which we derive the
asymptotically optimal base learning-rate $\gammastar_T$ and the resulting risk.
For a horizon $T$, the optimal risk is $\Theta(T^{-q})$ with an exponent $q$ that does not depend
on~$c$ on the whole interval $(0, 1]$, yet is strictly smaller for the constant
schedule. These exponents agree with those of \textcite{li2025functional}, derived up to constants: any cooldown of fixed relative length improves the risk polynomially in the horizon. Our exact equivalents also resolve the constant, minimized at $c^*=1$ (pure linear decay), and make learning-rate transfer across cooldown lengths explicit. Moreover, when the capacity and source exponents of \cref{as:alphabeta} satisfy $\alpha<2$ and $\beta<\alpha+1$, the optimal rate of WSD coincides with that of averaged SGD with a constant step size: a cooldown thus recovers, for the last iterate, the rate of averaging (which is unimprovable), up to a constant (\cref{sec:expression_gammastar}). Finally, we extend the analysis to a batch size
$b$ and show that the optimal risk depends on $b$ and $T$ only through $bT$, up
to a critical batch size that we can characterize.

\paragraph{Setup.}\label{sec:setup}
Let $\featvec $ and $\labelscl$ be random variables taking values in $\R^d$ and $\R$, respectively.
We consider the least-squares problem:
\begin{equation}\label{eq:least_squares}
\min_{\vect\in \R^d} f(\vect), \quad f(\vect) := \frac{1}{2} \E\left[(\featvec ^\top \vect - \labelscl)^2\right],
\end{equation}
where the expectation is over the joint distribution of $(\featvec , \labelscl)$ on $\R^d \times \R$.

Let $\vect_*$ be a solution to \cref{eq:least_squares}, and $f_* := f(\vect_*)$. We denote $\varepsilon:=\labelscl-\featvec ^\top \vect_*$ the label noise, and $\mathcal D$ the joint distribution of $(\featvec , \varepsilon)$.

Assume $H:=\E[\featvec  \featvec ^\top]$ exists and $\E[\labelscl^2]<\infty$. Then $f$ is differentiable with $\nabla f(\vect) =  H(\vect-\vect_*) - \E[\varepsilon \featvec]$ (see \cref{app:first_proofs} for the proof). Since $\vect_*$ is a minimizer of $f$ over $\R^d$, $\nabla f(\vect_*) = 0$, so $\E[\varepsilon \featvec ] = 0$, and
\begin{equation}\label{eq:gradient_f}
\nabla f(\vect)=H(\vect-\vect_*).
\end{equation}
$H$ is a symmetric positive semi-definite matrix, so it admits an orthogonal eigendecomposition $H = Q \Lambda Q^\top$. Here, $Q \in \R^{d \times d}$ is an orthogonal matrix and $\Lambda$ is a diagonal matrix containing the eigenvalues $\lambda = (\lambda_1, \dots,\lambda_d) \in \R^d$ such that $\lambda_1 \ge \lambda_2 \ge \dots \ge \lambda_d \ge 0$. For any $i \in \{1, \dots, d\}$, we denote the $i$-th eigenvector by $q_i = Q e_i$, where $e_i$ is the $i$-th vector of the canonical basis of $\R^d$.

\paragraph{Stochastic gradient descent (SGD).} Let $\vect_0 \in \R^d$. We study SGD with a learning-rate schedule,  where the update rule is given by:
\begin{equation}\label{eq:sgd}
    \vect_{t+1} =\vect_t - \gamma\eta_tg_t, \quad \E[g_t \mid\mathcal F_t] = \nabla f(\vect_t),
\end{equation}
denoting by $(\mathcal F_t)_{t \ge 0}$ the filtration generated by
$(\vect_0, \vect_1, \dots, \vect_t)$, so that $\mathcal F_t$ contains the
information available up to time $t$. In least-mean-squares, the unbiased
gradient $g_t$ is obtained as follows.

We draw i.i.d.~$(\featvec _t,\varepsilon_t)_{t\geq 0}$ from $\mathcal D$. Then $\labelscl_t := \featvec _t^\top \vect_* + \varepsilon_t \stackrel{d}{=} \labelscl$. Set $g_t = \featvec _t(\featvec _t^\top \vect_t- \labelscl_t)$.
Here, $\E[g_t \mid \mathcal F_t]= H(\vect_t-\vect_*)=\nabla f(\vect_t)$ by \cref{eq:gradient_f}. Hence $g_t$ is an unbiased estimator of the gradient $\nabla f(\vect_t)$, and SGD with a starting point $\vect_0$ is given by
$\vect_{t+1} = \vect_t - \gamma \eta_t  \featvec _t(\featvec _t^\top \vect_t-\labelscl_t) = (I - \gamma \eta_t \featvec _t \featvec_t^\top )\vect_t + \gamma \eta_t \featvec _t \labelscl_t$, which gives:
\begin{equation}\label{eq:sgd_batch1}
\vect_{t+1}-\vect_* = (I-\gamma\eta_t \featvec _t \featvec _t^\top)(\vect_t-\vect_*) + \gamma \eta_t \varepsilon_t \featvec _t.
\end{equation}

\paragraph{Learning-rate schedule.} \label{par:LR} In our analysis, for a fixed horizon $T$, we define a learning-rate schedule by two components: a \emph{base learning-rate} $\gamma \in \mathbb{R}_{>0}$ scaling the overall step size, and a \emph{schedule} sequence $(\eta_t)_{t=0}^{T-1} \in \mathbb{R}_{\geq0}^T$ dictating the update rule over $T$ steps. The effective step size is then $\gamma \eta_t$. We focus on the family of schedules known as Warmup-Stable-Decay (WSD), without warmup, defined as follows.

Let $c \in (0,1]$. We consider the WSD schedule (without warmup):
\[
\eta_j =  \min\Big\{ 1, \frac{T-j}{cT} \Big\} = \begin{cases} 
1 & \text{if } j \leq (1-c)T, \\
\frac{T-j}{cT} & \text{if } j > (1-c)T.
\end{cases}
\]
Here, $c$ represents the relative cooldown length, following \textcite{zhai2022scaling}. With this definition, the schedule begins its linear decay at step $\lfloor(1-c)T\rfloor+1$. Note the following boundary behaviors:
\begin{itemize}

\item When $c=1$, the model recovers the pure linear decay schedule: $\eta_j =  \frac{T-j}{T}$.

\item As $c \to 0$, the schedule approaches a constant rate: $\eta_j = 1$. To ensure the decay update remains well-defined, the strict constant schedule ($c=0$) is excluded from the WSD family and treated independently.

\end{itemize}

\paragraph{Notations.} \label{par:notations} We denote the trace of a matrix or a linear operator by $\tr(\cdot)$. For a vector $u \in \R^d$ (or a sequence $(u_i)_{i \in \mathbb N} \in \R^{\mathbb{N}}$ in the infinite-dimensional setting) 
$\diag(u)$ represents the diagonal matrix (resp. operator) whose main diagonal is given by $u$. The norm $\|\cdot \|$ will represent the Euclidean norm. Throughout the paper, $I$ will denote the identity operator.

We define the $\Gamma$ (Gamma) function for any real number $x>0$ as $\Gamma(x):=\int_0^\infty t^{x-1}e^{-t}dt$ and the $\zeta$ (zeta) function for $s>1$ as $\zeta(s) := \sum_{n=1}^\infty n^{-s}$. 

Throughout, a tilde marks a quantity attached to the asymptotic risk $\eqrisk_T$ where $T\to \infty$ rather than the exact one $\risk_T$.

\subsection{Slock's Model}
\label{subseq:slock}

To study least-squares problems, a common framework is to suppose Gaussian inputs $\featvec  \sim \mathcal N(0,H)$, in addition to $\sigma^2 = \var(\varepsilon)$ (see \cite{lin2024scaling}, Assumption 1). However, the computation of the recurrence relation in \cref{eq:sgd_batch1} involves the fourth-order moment $\E[\featvec _t \featvec _t^\top \vect_t \vect_t^\top \featvec _t \featvec _t^\top]$. Under the standard Gaussian framework, this term couples the dynamics across all dimensions, making an exact coordinate-wise analysis intractable. We therefore seek an alternative distribution for $\featvec $ that decouples these dimensions. One such approach relies on a discrete model \citep{slock1993}.

\begin{assumption}[Slock's Model]\label{as:slock}
Set $\sigma>0$. Let $(\varepsilon_t)_{t\geq0}$ be a sequence of i.i.d.~random variables with zero mean and $\var(\varepsilon_t)=\sigma^2$. Let $(\mu_t)_{t\geq0}$ be i.i.d. random vectors in $\R^d$, with distribution:
\begin{equation}
    \mu_t = r\, q_i, \quad i\sim \pi,
\end{equation}
where $\pi_i = \frac{\lambda_i}{\tr(\Lambda)}$ for $i\in \{1,\dots,d\}$ and $r=\sqrt{\tr(\Lambda)}$.

Let $(s_t)_{t\geq 0}$ be a sequence of i.i.d.~random variables satisfying $\E[s_t]=0$ and $s_t^2=1$ a.s.

We assume that $(\mu_t), (s_t)$ and $(\varepsilon_t)$ are mutually independent. At each step $t$, we draw $\featvec _t := s_t \mu_t$ and $\varepsilon_t$.
\end{assumption}

By construction, \cref{as:slock} implies $\E[x_t x_t^\top] = \sum_{i=1}^d
\frac{\lambda_i}{\tr(\Lambda)} r^2 q_i q_i^\top = Q\Lambda Q^\top = H$; while
$\E[x_t] = \E[s_t]\E[\mu_t] = 0$, since $s_t$ and $\mu_t$ are independent and
$s_t$ is zero mean.

Each sample activates a single eigendirection: in the eigenbasis, $Q^\top x_t =
s_t\, r\, e_i$ with $i \sim \pi$, so that $\|x_t\|^2 = \tr(\Lambda)$ almost
surely and $x_t x_t^\top = \tr(\Lambda)\, q_i q_i^\top$. Two samples are
therefore collinear when they share the same index and orthogonal otherwise, and
all the matrices $x_t x_t^\top$ are diagonalized by the same $Q$; in
particular they commute. The iteration in \cref{eq:sgd} thus preserves the
eigenbasis of $H$ and updates one coordinate at a time, drawn with probability
proportional to $\lambda_i$: this is what keeps the fourth-order moment diagonal
and reduces the risk to the scalar recursions of \cref{eq:recurrence_moment}.

One-hot features are an instance of the model: if $x = e_i$ with probability $p_i$, then
$H = \diag(p)$, $\tr(\Lambda) = 1$ and $\pi = p$, so \cref{as:slock}
recovers exactly this distribution up to the sign $s_t$, which plays no role in
\cref{eq:recurrence_moment}. This one-hot setting is closely related to the linear bigram model of
\citet{kunstner2025scaling}, whose inputs are one-hot encodings of tokens with
power-law frequencies.
Although Slock's model is chosen for tractability, it is not restrictive in the regime of interest: along the vanishing base learning-rates selected by our analysis, its bias and variance match those of Gaussian features (see \cref{app:gauss}).

\section{Related Work}

\paragraph{Learning-rate schedules.} The Warmup-Stable-Decay (WSD) schedule \citep{zhai2022scaling,hu2024minicpm,haegele2024scaling} has been proposed as an alternative to the cosine schedule. \textcite{haegele2024scaling} show that WSD matches the performance of cosine on LLM pretraining, while reducing the compute needed for scaling-law experiments, as the constant part of the schedule can be reused. In the worst-case convex setting, \textcite{defazio2023optimal} show that linear decay is the optimal schedule for the last iterate of SGD, and that it performs well in deep learning practice.

\paragraph{Least squares and optimal schedules.}
\citet{li2025functional} derive, up to constants, scaling laws for constant, exponential-decay and WSD-like schedules in power-law kernel regression; their $(s,\beta)$ are our $(\omega,\alpha)$. Our exact equivalents also resolve the constants, which determine the optimal cooldown $c^*$ and learning-rate transfer. \textcite{bordelon2026theory} derive the optimal schedule of a power-law random feature model through optimal control; with $b=\alpha$ and $a=\alpha+\beta$, $(a-1)/b=\omega$, placing our setting in their \emph{easy} phase ($a>b$). \textcite{meterez2026anytime} share our least-squares setting but study horizon-free schedules, where averaging recovers minimax rates without knowledge of $T$; we compare last iterates at fixed $T$.

\paragraph{Cooldown length.}
\textcite{schaipp2025} concentrate on last-iterate scheduling, and suggest that, with a tuned base learning-rate $\gamma$, the best cooldown length for WSD is $c^*=1$, which corresponds to a pure linear decay schedule. We obtain the same conclusion in our framework (\cref{th:c_star}), minimizing the asymptotic risk. \textcite{haegele2024scaling} find optimal cooldowns strictly inside $(0,1)$, but at a base learning-rate that is not retuned as a function of the horizon and of the cooldown length.

\paragraph{Batch size.} Increasing the batch size shortens a run
by trading steps for parallelism, but only up to a critical batch size. \textcite{zhang2025how} show, via
an infinite-dimensional least-squares analysis backed by language model experiments, that this threshold is governed by the data budget rather than the model size.

\section{Computation of the Risk}

For a fixed horizon $T$, we define the risk at step $t$ and the last-iterate risk as:
\begin{equation}
\risk_{t\mid T} := \E[f(\vect_t)] - f_*, \qquad \risk_T := \risk_{T\mid T}.
\end{equation}
In this section, our goal is to compute a closed form for the last-iterate excess risk
$\mathcal R_T$. Since Slock's model decouples the dynamics
across coordinates, an exact coordinate-wise analysis is tractable: we first
derive the finite-time risk $\mathcal R_{t\mid T}$ for every $1 \le t \le T$
(Theorem~\ref{th:risk}; proofs in \cref{app:base}). Writing $H = Q \Lambda Q^\top$
and the second-order moment matrix of the parameter error in the eigenbasis $M_t = Q^\top \E[(\vect_t - \vect_*)(\vect_t - \vect_*)^\top] Q$, the risk depends only on the
diagonal $m_t$ of $M_t$, through \cref{eq:risk_trace}.
\begin{equation}\label{eq:risk_trace}
\mathcal R_{t\mid T} = \frac12 \tr(\Lambda M_t) = \frac12 \lambda^\top m_t.
\end{equation}
We have $M_{t+1}= M_t - \gamma \eta_t \Lambda M_t - \gamma \eta_t M_t \Lambda + \gamma^2\eta_t^2  \E[\featvec_t \featvec_t^\top \vect_t \vect_t^\top \featvec_t \featvec_t^\top] + \gamma^2\eta_t^2\sigma^2\Lambda$. Under \cref{as:slock}, the fourth-order moment is diagonal and equal to $\tr(\Lambda)\Lambda\diag(m_t)$, and the recurrence for $m_t$ becomes for $t \in \{0,\dots,T-1\}$:
\begin{equation}\label{eq:recurrence_moment}
m_{t+1} = \left[I + (-2\gamma\eta_t + \gamma^2\eta_t^2\,\tr(\Lambda))\Lambda\right]m_t
+ \gamma^2\eta_t^2\sigma^2\lambda.
\end{equation}

As is classical, we observe that the risk decomposes into two terms \citep{lin2024scaling}: the first corresponds to the noiseless dynamics ($\sigma^2=0$), and the second represents the variance accumulated over $t$ steps.
We define the \emph{bias} $B_{t\mid T}$ and \emph{variance} $V_{t\mid T}$ below. We denote: 
\begin{equation*}\label{eq:B_V}
\begin{aligned}
    &\Pi(k,t,(\gamma\eta_j),\Lambda) := \prod_{j=k}^{t-1}[I - 2\gamma\eta_j \Lambda + \gamma^2\eta_j^2\tr(\Lambda)\Lambda],
    \\
    &B_{t\mid T} :=  \frac{1}{2}\lambda^\top  \Pi(0,t,(\gamma\eta_j),\Lambda)m_0, 
    \\
    &V_{t\mid T} :=  \frac{1}{2} \lambda^\top \sum_{k=0}^{t-1}\gamma^2\eta_k^2\sigma^2 \Pi(k+1,t,(\gamma\eta_j),\Lambda)\lambda.
\end{aligned}
\end{equation*}
In the same way, we can define the last-iterate bias and variance by $B_T = B_{T\mid T}$ and $V_T = V_{T\mid T}$.

\begin{theorem}\label{th:risk}
Let $m_0\in \R^d$ be the diagonal of $Q^\top(\vect_0-\vect_*) (\vect_0-\vect_*)^\top Q$.
Then, after $t$ steps of SGD, $t \in \{1,\dots,T\}$, the expected excess risk is given by:
\begin{equation}
    \mathcal R_{t\mid T} = B_{t\mid T} + V_{t\mid T}.
\end{equation}
\end{theorem}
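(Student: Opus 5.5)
The plan is to derive the recurrence \cref{eq:recurrence_moment} carefully, and then unroll it by induction on $t$.

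\textbf{Step 1: the matrix recursion.} Set $\delta_t := \vect_t-\vect_*$ and $\tilde\delta_t := Q^\top\delta_t$. From \cref{eq:sgd_batch1} we have $\delta_{t+1} = (I-\gamma\eta_t x_tx_t^\top)\delta_t + \gamma\eta_t\varepsilon_t x_t$. I would take the outer product and then the expectation. The independence used is that $x_t,\varepsilon_t$ are independent of $\mathcal F_t$, with $\delta_t$ being $\mathcal F_t$-measurable, and that $\varepsilon_t$ is independent of $x_t$ with zero mean under \cref{as:slock}.

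\begin{itemize}
\item The cross terms $\gamma\eta_t\E[\varepsilon_t (I-\gamma\eta_t x_tx_t^\top)\delta_t x_t^\top]$ vanish, because $\E[\varepsilon_t]=0$.
\item The noise term gives $\gamma^2\eta_t^2\sigma^2 H$.
\item The linear terms give $-\gamma\eta_t(H\E[\delta_t\delta_t^\top]+\E[\delta_t\delta_t^\top]H)$, after conditioning on $\mathcal F_t$.
\end{itemize}

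\textbf{Step 2: the fourth-order term.} This is the only model-specific point. Using $x_tx_t^\top = \tr(\Lambda)q_iq_i^\top$ with $i\sim\pi$ (the sign disappears since $s_t^2=1$), I get
\[
\E[x_tx_t^\top \delta_t\delta_t^\top x_tx_t^\top\mid\mathcal F_t] = \sum_i \pi_i \tr(\Lambda)^2 (q_i^\top\delta_t)^2 q_iq_i^\top .
\]
Since $\pi_i\tr(\Lambda)^2=\lambda_i\tr(\Lambda)$, conjugating by $Q$ and taking expectation gives $\tr(\Lambda)\Lambda\diag(m_t)$. This is diagonal, which is the key point.

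\textbf{Step 3: the diagonal recursion.} Conjugating the recursion by $Q$, the $i$-th diagonal entry of $\Lambda M_t+M_t\Lambda$ is $2\lambda_i (m_t)_i$. Reading off diagonals then gives exactly \cref{eq:recurrence_moment}. Note that $m_{t+1}$ depends only on $m_t$ and not on the off-diagonal entries of $M_t$. Write $D_j := I - 2\gamma\eta_j\Lambda+\gamma^2\eta_j^2\tr(\Lambda)\Lambda$. These are diagonal, so they commute, and $\Pi(k,t,\cdot)=\prod_{j=k}^{t-1}D_j$ is unambiguous, with $\Pi(t,t,\cdot)=I$.

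\textbf{Step 4: unrolling.} By induction on $t$ I would show
\[
m_t = \Pi(0,t)m_0 + \sum_{k=0}^{t-1}\gamma^2\eta_k^2\sigma^2\Pi(k+1,t)\lambda .
\]
The base case $t=0$ is trivial. For the inductive step, apply $D_t$ and add $\gamma^2\eta_t^2\sigma^2\lambda$, using $D_t\Pi(k+1,t)=\Pi(k+1,t+1)$.

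\textbf{Step 5: the risk identity.} It remains to justify \cref{eq:risk_trace}. Since $\nabla f(\vect_*)=0$ and $f$ is quadratic with Hessian $H$, we have $f(\vect)-f_* = \tfrac12\delta^\top H\delta$. Hence
\[
\E f(\vect_t)-f_* = \tfrac12\tr(H\E[\delta_t\delta_t^\top]) = \tfrac12\tr(\Lambda M_t) = \tfrac12\lambda^\top m_t .
\]
Substituting the unrolled formula yields $B_{t\mid T}+V_{t\mid T}$.

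I expect no real obstacle here. The only care needed is the conditioning argument in Step 2 and checking the finiteness of the moments, which follows since $\|x_t\|^2=\tr(\Lambda)$ is bounded and $\sigma^2<\infty$.
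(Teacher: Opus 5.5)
Your proposal is correct and follows essentially the same route as the paper. The paper also expands the recursion for $M_t$, uses Slock's model to show that the fourth-order term equals $\tr(\Lambda)\Lambda\diag(m_t)$ so that the diagonal $m_t$ closes on itself, unrolls the resulting recurrence, and concludes with $\mathcal R_{t\mid T}=\frac12\tr(\Lambda M_t)=\frac12\lambda^\top m_t$; the only cosmetic difference is that the paper assumes without loss of generality $H=\Lambda$ and $\vect_*=0$ instead of conjugating by $Q$, and your explicit conditioning on $\mathcal F_t$ and induction for the unrolling are slightly more careful than its presentation.
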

We assume a power-law spectrum for the data covariance (the capacity condition) and a power-law decay of the initial distance $\vect_0-\vect_*$ in the eigenbasis (the source condition) in \cref{as:alphabeta}.
These source and capacity conditions are standard in the least-squares literature \citep{caponnetto2007optimal, lin2024scaling}. Modeling task difficulty this way is precisely what allows one to theoretically recover the neural scaling laws empirically observed in large-scale training \citep{lin2024scaling,kunstner2025scaling}.

\begin{assumption}[Power-Law Spectrum and Initial Distance]\label{as:alphabeta}
There exist $\alpha>1, \beta>0$ and $L,\Delta>0$ such that for all $i \geq 1$, 
\begin{equation}
    m_{0,i} = \Delta i^{-\beta} \quad \text{and} \quad \lambda_i = L i^{-\alpha}.
\end{equation}
\end{assumption}

In particular, $\tr(\Lambda)$ is well-defined and its value is~$L\sum_{i=1}^d i^{-\alpha}$. Under \cref{as:alphabeta}, we define the constant $\omega:=\frac{\beta-1}{\alpha}+1.$

Thanks to the decoupling property of Slock's model, the matrices $I - 2\gamma\eta_k \Lambda + \gamma^2\eta_k^2\tr(\Lambda)\Lambda$ are diagonal and polynomial in $\Lambda$. Therefore, \cref{as:alphabeta} allows us to express the scalar products $B_{t \mid T}$ and $V_{t \mid T}$ as explicit sums where the terms are rational functions of the index $i$. This yields the formulas detailed in \cref{th:risk_alpha_beta}.

\paragraph{Empirical observation of \cref{th:risk}.}
To illustrate the correctness of \cref{th:risk} under \cref{as:alphabeta}, we simulate the empirical SGD dynamics by drawing random samples according to Slock's model. We evaluate the excess risk under two standard learning-rate schedules: a constant schedule $\eta_t = 1$, and a linear decay schedule $\eta_t =  \frac{T-t}{T}$ which depends on a predefined horizon $T$. \cref{fig:sgd_comparison} shows the agreement between the empirical loss and our theoretical formulas.

\begin{figure}
    \centering
    \captionsetup[subfigure]{skip=2pt, belowskip=-4pt}
 
    \begin{subfigure}[b]{0.48\textwidth}
        \centering
        \includegraphics[width=\textwidth]{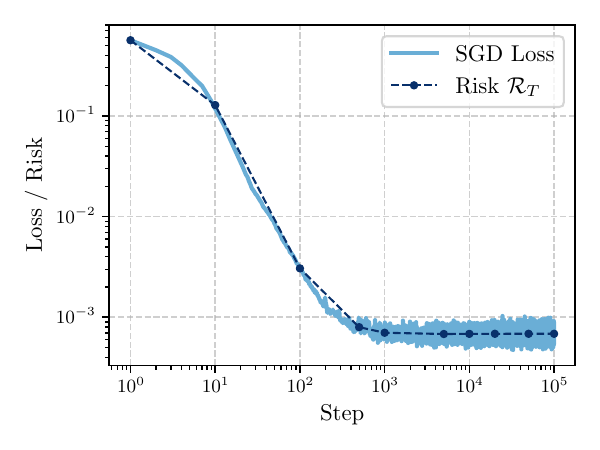}
        \caption{Constant schedule}
        \label{fig:sgd_constant}
    \end{subfigure}\hfill
    \begin{subfigure}[b]{0.48\textwidth}
        \centering
        \includegraphics[width=\textwidth]{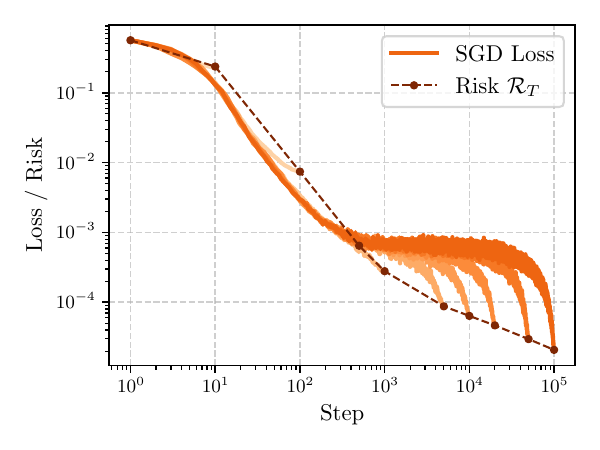}
        \caption{Linear schedule}
        \label{fig:sgd_linear}
    \end{subfigure}

    \caption{Comparison of the empirical SGD excess risk against the theoretical final risk from \cref{th:risk_alpha_beta}. Both experiments are averaged over $n = 50$ independent runs with parameters $\gamma = 0.1, L = \Delta = 1$, $\alpha = 1.5$, $\beta = 2$, $\sigma = 0.1$ and $d=100$. For the linear schedule (b), the plot displays the entire risk trajectories over time, where each trajectory is computed with a schedule decaying to zero at its specific horizon $T$.}
    \label{fig:sgd_comparison}
\end{figure}

Note that a constant schedule is horizon-free, so a single run suffices and the theoretical risk can be evaluated at every $t$ along it. The linear schedule, by
contrast, depends on $T$ through $\eta_t$: each horizon defines a different schedule, so a separate run is needed for each~$T$, and the theoretical curve matches the empirical loss only at its own final step $t=T$. This is why \cref{fig:sgd_comparison} displays one trajectory per horizon.

\paragraph{Comparison with Gaussian features.} For Gaussian inputs $x \sim \mathcal{N}(0,H)$, the exact risk remains computable, but the fourth-order moment couples all coordinates, which makes asymptotic equivalents out of reach. This coupling term is of order $\gamma^2$, and we observe numerically (\cref{app:gauss}) that along a vanishing base learning-rate $\gamma_T \to 0$, which is the regime selected by the optimal $\tilde\gamma^*_T$ (\cref{tab:gamma_star_combined}), the ratios of the biases and of the variances of the two models tend to 1.

\section{Asymptotic Analysis}\label{sec:equivalents}

In the following theorems, the equivalence between two sequences $(u_T)$ and $(v_T)$, denoted $u_T \sim v_T$, is defined by $u_T - v_T = o(v_T)$. There are no issues regarding the randomness of the variables, as all quantities are evaluated in expectation per \cref{th:risk}.

\paragraph{Dimension.} Since our primary goal is to derive tractable, exact formulas for the last-iterate risk, we consider the infinite-dimensional setting $d\to \infty$, where the parameter space becomes the set of sequences $\R^{\mathbb N}$. The infinite-dimensional limit is a standard setting in least-squares literature to derive exact analytical solutions and bridge the gap with kernel methods \citep{caponnetto2007optimal}. The trace $\tr(\Lambda) = L\zeta(\alpha)$ is still well-defined as $\alpha>1$. 

\paragraph{Admissible values of $\gamma$.} 
We must ensure that the terms $[1 - 2\gamma\eta_j L i^{-\alpha} + \gamma^2\eta_j^2 \tr(\Lambda)L i^{-\alpha}]$ do not cause the bias or the variance to diverge. Therefore, they must lie strictly between $-1$ and $1$. These terms are automatically positive, since their minimum over $\eta$ is $1 - Li^{-\alpha}/\mathrm{Tr}(\Lambda) > 0$ (\cref{lemma:stability}), so the constraint reduces to $1 - 2\gamma\eta L i^{-\alpha} + \gamma^2\eta^2\mathrm{Tr}(\Lambda) L i^{-\alpha} < 1$ for all $i \geq 1$ and $\eta \in (0,1]$.

\begin{assumption}[Stability Condition] \label{as:eta}
We assume the base learning-rate $\gamma$ satisfies:
\begin{equation}\label{eq:eta}
   0 < \gamma < {2}\big/{\tr(\Lambda)}.
\end{equation}
\end{assumption}
We recover Equation 2.4 of \textcite{defossez2015averaged}. We denote $\gammax := \frac{2}{\tr(\Lambda)}$.
The asymptotic equivalents are derived using the $z$-transform techniques described by \textcite{bach2025ztransform}.
For the decaying schedules (WSD including linear), we will identify two distinct regimes for the asymptotic variance $V_T$, depending on the spectral decay: $1 < \alpha < 2$ and $\alpha > 2$. The critical case $\alpha = 2$ typically introduces logarithmic corrections and is omitted from our analysis for simplicity.

\subsection{Constant Schedule}

There is no need to choose a horizon in this case so~$T$ may represent any integer. We find that the variance part $V_T$ tends to a limit $V_\infty$ proportional to~$\sigma^2$, while the bias part decreases as an order $T^{-\omega}$.

\begin{theorem}\label{th:constant}
Let $\eta_j = 1$ for all $j\geq 0$.
Under \cref{as:alphabeta} and \cref{as:eta}, as $T \to \infty$, we have
$B_T \sim \widetilde B_T$ and $V_T \to V_\infty$, where

\begin{equation}\label{eq:constant}
\begin{aligned}
\widetilde B_T &:= \frac{L\Delta}{2\alpha}\, \Gamma(\omega)
  \big((2-\gamma\tr(\Lambda))L\gamma T\big)^{-\omega}, \\[-.05cm]
V_\infty &:= \frac{\zeta(\alpha)}{2}\,
  \frac{\sigma^2 L\gamma}{2-\gamma\tr(\Lambda)}.
\end{aligned}
\end{equation}
\end{theorem}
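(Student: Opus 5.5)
With $\eta_j=1$, the product $\Pi$ is diagonal with entries $\rho_i^{t-k}$, where
\[
\rho_i := 1-\kappa\, L i^{-\alpha}, \qquad \kappa:=\gamma(2-\gamma\tr(\Lambda))>0 .
\]
Under \cref{as:eta} we have $0<\rho_i<1$; positivity comes from \cref{lemma:stability}. By \cref{th:risk}, in the infinite-dimensional setting,
\[
B_T=\frac12\sum_{i\ge1} L\Delta\, i^{-\alpha-\beta}\rho_i^T,\qquad
V_T=\frac{\gamma^2\sigma^2}{2}\sum_{i\ge1}\lambda_i^2\sum_{k=0}^{T-1}\rho_i^{T-1-k}.
\]
I would treat the two terms separately.

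\emph{Variance.} The inner geometric sum equals $(1-\rho_i^T)/(1-\rho_i)=(1-\rho_i^T)/(\kappa L i^{-\alpha})$. Hence
\[
V_T=\frac{\gamma^2\sigma^2}{2\kappa}\sum_i L i^{-\alpha}(1-\rho_i^T).
\]
Each term increases monotonically to $Li^{-\alpha}$ and is dominated by the summable $Li^{-\alpha}$, so monotone convergence gives $V_T\to \frac{\gamma^2\sigma^2}{2\kappa}L\zeta(\alpha)$. Substituting $\gamma^2/\kappa=\gamma/(2-\gamma\tr(\Lambda))$ yields exactly $V_\infty$.

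\emph{Bias.} The goal is a Laplace-type asymptotic. Write $\rho_i^T=\exp(T\log(1-\kappa L i^{-\alpha}))$. For large $i$ this is close to $e^{-\kappa L T i^{-\alpha}}$, and the relevant indices are $i\asymp T^{1/\alpha}$. I would then proceed in three steps.
\begin{enumerate}
\item Split the sum at $i=T^{\delta}$ for some small $\delta<1/\alpha$. For $i\le T^\delta$, use $\rho_i\le \rho_{\lceil T^\delta\rceil}$ on the relevant range, together with $\rho_i\le \max(\rho_1, 1-\kappa L T^{-\alpha\delta})$. This part is then at most $\exp(-cT^{1-\alpha\delta})$ times a polynomial, which is $o(T^{-\omega})$.
\item For $i>T^\delta$, use $\log(1-u)=-u+O(u^2)$ with $u=\kappa Li^{-\alpha}\to0$. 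The error $Tu^2\le T^{1-2\alpha\delta}\cdot O(i^{\alpha})u$ needs care. Rather, I would note $Tu^2 = O(Tu\cdot T^{-\alpha\delta})$, so $\rho_i^T=e^{-Tu(1+o(1))}$ uniformly. Sandwiching between $e^{-(1\pm\epsilon)Tu}$ then reduces the problem to the sum $S(\tau)=\sum_i i^{-\alpha-\beta}e^{-\tau i^{-\alpha}}$ with $\tau=(1\pm\epsilon)\kappa LT$.
\item Compare $S(\tau)$ to its integral, a Riemann sum at scale $\tau^{1/\alpha}$. Setting $x=i\tau^{-1/\alpha}$ gives $S(\tau)\sim \tau^{-(\alpha+\beta-1)/\alpha}\int_0^\infty x^{-\alpha-\beta}e^{-x^{-\alpha}}dx$. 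The substitution $s=x^{-\alpha}$ turns the integral into $\frac1\alpha\int_0^\infty s^{\omega-1}e^{-s}ds=\Gamma(\omega)/\alpha$, and the exponent is $(\alpha+\beta-1)/\alpha=\omega$. The integrand is unimodal and integrable, so the sum–integral error is $O(\max)=O(\tau^{-(\alpha+\beta)/\alpha})=o(\tau^{-\omega})$. Letting $\epsilon\to0$ gives $B_T\sim \frac{L\Delta}{2}\frac{\Gamma(\omega)}{\alpha}(\kappa LT)^{-\omega}$, which is $\widetilde B_T$ since $\kappa L T=(2-\gamma\tr(\Lambda))L\gamma T$.
\end{enumerate}

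The main obstacle is the uniform control in the bias: handling small $i$, where $\rho_i$ is not close to $1$ and may even be near $0$, separately from the scaling region, and justifying the replacement $\log\rho_i\approx-\kappa Li^{-\alpha}$ inside the $T$-th power. Alternatively, the $z$-transform method of \textcite{bach2025ztransform} packages exactly this argument.
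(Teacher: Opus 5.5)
Your proof is correct, but for the bias you take a genuinely different route from the paper. The variance part is the same as the paper's: the same geometric sum (your $\kappa$ is the paper's $2\bar\gamma$) and the same convergence argument.

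For the bias, the paper passes to the $z$-transform $\mathcal B(z)=\frac{L\Delta}{2}\sum_i i^{-(\alpha+\beta)}/(1-z\rho_i)$. It differentiates it $\mu=\lfloor(\beta-1)/\alpha\rfloor+1$ times so that it blows up as $z\to1^-$, and extracts the singularity through a Riemann sum at scale $s^{1/\alpha}$ and a Beta integral. It then returns to $B_T$ with the Tauberian theorem (\cref{th:tauberian}), whose one-sided condition comes from the monotonicity of $B_T$ in $T$.

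You stay in the time domain instead. Small indices are exponentially negligible, and on $i>T^\delta$ the sandwich $e^{-(1+\epsilon)Tu}\le\rho_i^T\le e^{-Tu}$ reduces everything to $S(\tau)=\sum_i i^{-\alpha-\beta}e^{-\tau i^{-\alpha}}$. Unimodality of the summand then gives $S(\tau)=\frac{\Gamma(\omega)}{\alpha}\tau^{-\omega}(1+O(\tau^{-1/\alpha}))$. Your step 1 can be stated more cleanly: $\rho_i$ increases with $i$, so $\rho_i^T\le e^{-\kappa L T^{1-\alpha\delta}}$ for $i\le T^\delta$, and $\sum_i i^{-\alpha-\beta}<\infty$. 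That $\rho_i$ may be close to $0$ for small $i$ is harmless for an upper bound.

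What your argument buys:
\begin{itemize}
\item It is more elementary: no Tauberian theorem, no differentiation trick, no one-sided condition to verify.
\item It gives quantitative error terms rather than only the leading-order limit.
\item It adapts directly to a $T$-dependent $\gamma_T$ with $\gamma_T T\to\infty$, taking the split point at a power of $\gamma_T T$ rather than of $T$. This is precisely the ``plain Laplace asymptotic'' the paper itself falls back on in \cref{lemma:uniform_epsilon} to get the uniformity needed for \cref{th:gammastar}.
\end{itemize}

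What the $z$-transform route buys is a single mechanical template, following \textcite{bach2025ztransform}. The paper reuses it for the linear and WSD schedules, where the per-coordinate factor is no longer a pure $T$-th power. Your approach would there first need the Euler--Maclaurin factorization $\Pi_{T,i}=S_{i,T}e^{TI_i}$, after which it goes through the same way.
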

In a common Gaussian features model that assumes $\featvec \sim \mathcal N(0,H)$, the bias part
is, to leading order in $\gamma$, $\frac{L\Delta}{2\alpha} \Gamma\left(\omega\right)(2\gamma L)^{-\omega}T^{-\omega}$
if $\beta < \alpha$ \citep{bach2025ztransform}. Therefore only the constant is different, and the
power of $T$ is the same. We recover exactly this behavior in Slock's model as the trace penalty
$\gamma \tr(\Lambda) \to 0$. \cref{sec:eta_star} will show that optimal base learning-rates tend
to 0 as $T\to \infty$ so the results are asymptotically equivalent (for $\beta > \alpha$, we
observe this numerically, see \cref{app:gauss}).

\paragraph{Experimental observation.} We compare the exact risk terms $B_T$ and $V_T$ with the equivalent $\widetilde B_T$ and $\widetilde V_T$ defined in \cref{eq:constant}, by plotting the ratios $\widetilde B_T/B_T$ and $V_\infty/V_T$, which should therefore tend to 1. We choose a fixed base learning-rate $\gamma:= 10^{-3}  \in (0, \gammax)$, and a set $\mathcal S_\alpha$ of values of $\alpha$: $\mathcal S_\alpha := \{1.4,1.8,2.2,2.6\}$.

As illustrated in \cref{fig:asymptotic_constant} in \cref{app:figures}, the asymptotic equivalent of the bias converges to its exact value faster than that of the variance. For values of $\alpha$ close to 1, the variance requires a larger horizon $T$ to approach its asymptote (e.g., at $T=10^5$, the ratio is still above $1.09$). This discrepancy occurs because the leading-order equivalent for the bias retains its dependence on $T$, whereas for the variance, all $T$-dependent terms are discarded in the constant limit. Consequently, the variance ratio converges only as fast as these discarded transient terms decay, which is slow when~$\alpha \to 1$.

\subsection{WSD Schedule}
We now turn to the general WSD schedule, parameterized by a relative cooldown length $c \in (0,1]$. Unlike the constant learning-rate, the WSD schedule explicitly depends on the total number of steps $T$. Therefore, the bias~$B_T$ and variance $V_T$ are evaluated exactly at the final step $T$. Depending on the spectral decay, two distinct asymptotic regimes arise: a regime where small dimensions dominate ($\alpha > 2$), and one where large dimensions dominate ($1 < \alpha < 2$). The total variance explicitly decomposes into two components: the variance accumulated during the constant phase ($V^A_T$) and the variance accumulated during the decay phase ($V^B_T$). Both components remain significant in the leading-order asymptotics when large dimensions dominate ($1 < \alpha < 2$).

Let $c \in (0,1]$. We consider the WSD schedule of relative cooldown length $c$.

\begin{theorem}\label{th:wsd}
Let $(\eta_j)$ be the WSD schedule with a cooldown length of $c$, and set
$\kappa_1 := 2-c-\gamma\tr(\Lambda)\big(1-\frac{2c}{3}\big)$ and
$\kappa_2 := c\big(1-\frac{\gamma\tr(\Lambda)}{3}\big)$.
Under \cref{as:alphabeta} and \cref{as:eta}, as $T \to \infty$, we have
$B_T \sim \widetilde B_T$ and $V_T \sim \widetilde V_T$, where
\begin{align}
\widetilde B_T &:= \frac{L\Delta}{2\alpha}\,\Gamma(\omega) \,
  (\kappa_1 L\gamma T)^{-\omega},
  \label{eq:bias_equiv} \\
\widetilde V_T &:= \mathcal{W}_\gamma(\alpha,c)\,\sigma^2 (L\gamma)^{\frac{1}{\alpha}}\,
  T^{\frac{1}{\alpha}-1}
  \quad \text{if } 1<\alpha<2,
  \label{eq:var_equiv_small} \\
\widetilde V_T &:= \frac{\sqrt{\pi}}{8}\,\zeta\big(\frac{\alpha}{2}\big)\,
  \sigma^2 (L\gamma/c)^{\frac12}\, T^{-\frac12}
  \quad \text{if } \alpha>2,
  \label{eq:var_equiv_large}
\end{align}
with \\
$\xi := 1/\alpha-1$, $\mathcal{I}_{\alpha,\gamma} := \int_0^1 x^{2\xi}(1-\gamma\tr(\Lambda)\,x/3)^{\xi-1}\,dx$
and $\mathcal{W}_\gamma(\alpha,c) := \frac{1}{2\alpha}\big[\frac{\Gamma(-\xi)}{2-\gamma\tr(\Lambda)}(\kappa_2^{\xi}-\kappa_1^{\xi}) + \Gamma(1-\xi)\,\mathcal{I}_{\alpha,\gamma}\,c^{\xi}\big]$.
\end{theorem}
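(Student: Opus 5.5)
We start from \cref{th:risk} in the infinite-dimensional limit. Under \cref{as:alphabeta}, each coordinate evolves independently with factor $a_j(\lambda_i) := 1 - 2\gamma\eta_j\lambda_i + \gamma^2\eta_j^2\tr(\Lambda)\lambda_i$, where $\lambda_i = Li^{-\alpha}$. The bias and variance are then
\[
B_T = \tfrac12\sum_i L\Delta\, i^{-\alpha-\beta}\prod_{j=0}^{T-1}a_j(\lambda_i),
\qquad
V_T = \tfrac{\sigma^2}{2}\sum_i \lambda_i^2\sum_{k=0}^{T-1}\gamma^2\eta_k^2\prod_{j=k+1}^{T-1}a_j(\lambda_i).
\]
The only relevant eigenvalues are small, $\lambda_i\to 0$, with $\lambda_i T\gamma$ of order one. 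We therefore linearize $\log a_j(\lambda) = -\lambda(2\gamma\eta_j - \gamma^2\eta_j^2\tr(\Lambda)) + O(\lambda^2)$. Uniform control of the error on the region $\lambda T = O(1)$, together with exponential smallness elsewhere, is supplied by \cref{lemma:stability} and \cref{as:eta}, since $a_j\in(0,1)$. For $k = uT$, Riemann sums give
\[
\sum_{j\ge k} \bigl(2\gamma\eta_j - \gamma^2\eta_j^2\tr(\Lambda)\bigr) \approx \gamma T\, F(u).
\]
Here $F(u) = \int_u^1 (2\eta(s) - \gamma\tr(\Lambda)\eta(s)^2)\,ds$ and $\eta(s) = \min\{1, (1-s)/c\}$. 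Direct integration gives $F(0) = 2(1-c) + c - \gamma\tr(\Lambda)(1-c+c/3) = \kappa_1$. For $u$ in the cooldown, write $v = (1-u)/c$; then $F = c v^2 - \gamma\tr(\Lambda) c v^3/3$.

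\textbf{Bias.} Under this approximation,
\[
B_T \approx \tfrac12\sum_i L\Delta\, i^{-\alpha-\beta} e^{-\kappa_1 L\gamma T i^{-\alpha}}.
\]
We compare this sum with the integral over $i$ and substitute $x = \kappa_1 L\gamma T i^{-\alpha}$. Since $(\alpha+\beta-1)/\alpha = \omega$, this gives $\frac{L\Delta}{2\alpha}\Gamma(\omega)(\kappa_1 L\gamma T)^{-\omega}$, which is \eqref{eq:bias_equiv}. The sum–integral comparison is justified by monotonicity and dominated convergence. This reproduces the argument for \cref{th:constant}, which corresponds to $c=0$ with $\kappa_1 = 2-\gamma\tr(\Lambda)$.

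\textbf{Variance for $1<\alpha<2$.} Here large indices dominate, so we may replace $\sum_i$ by an integral in $\lambda$: $\sum_i g(\lambda_i) \approx \frac{L^{1/\alpha}}{\alpha}\int g(\lambda)\lambda^{-1/\alpha-1}\,d\lambda$. We split the sum over $k$ into the stable part $V^A_T$ and the cooldown part $V^B_T$.

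For the stable part, the inner sum over $k$ is geometric with per-step factor $1 - \lambda\gamma(2-\gamma\tr(\Lambda))$ and ends at $\exp(-\lambda\gamma T\kappa_2)$ (the cooldown contribution, using $F$ at $v=1$). It therefore equals
\[
\frac{\gamma}{\lambda(2-\gamma\tr(\Lambda))}\bigl(e^{-\kappa_2\lambda\gamma T} - e^{-\kappa_1\lambda\gamma T}\bigr).
\]
Integrating $\lambda^{1-1/\alpha-1}\bigl(e^{-\kappa_2 s} - e^{-\kappa_1 s}\bigr)$ gives $\Gamma(-\xi)(\kappa_2^\xi - \kappa_1^\xi)(\lambda\gamma T)$-scaling. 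This is a convergent Frullani-type integral because $-\xi\in(0,1)$ when $\alpha>1$, and $\int_0^\infty s^{-\xi-1}(e^{-as}-e^{-bs})\,ds = \Gamma(-\xi)(a^\xi - b^\xi)$. Collecting powers yields $\sigma^2 (L\gamma)^{1/\alpha} T^{1/\alpha-1}$ times the first term of $\mathcal W_\gamma$.

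For the cooldown part, the Riemann sum $\gamma^2\sum_k \eta_k^2 e^{-\lambda\gamma T F}$ becomes $\gamma^2 T c\int_0^1 v^2 e^{-\lambda\gamma T c v^2(1-\gamma\tr(\Lambda)v/3)}\,dv$. We integrate in $\lambda$ first, which gives $\Gamma(1-\xi)\bigl(\gamma T c v^2(1-\gamma\tr(\Lambda)v/3)\bigr)^{\xi-1}$. The remaining $v$-integral is $\int v^{2+2\xi-2}(1-\gamma\tr(\Lambda)v/3)^{\xi-1}\,dv = \mathcal I_{\alpha,\gamma}$, with the prefactor $c^{\xi}$. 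Both parts are of the same order, as the statement requires.

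\textbf{Variance for $\alpha>2$.} Now the sum $\sum_i \lambda_i^2/(\cdot)$ converges at fixed $i$, so small indices dominate and only the end of the cooldown matters. For fixed $\lambda$ with $\lambda\gamma T\to\infty$, the cooldown integral is governed by the Laplace region $v\to 0$:
\[
\gamma^2 Tc\int_0^\infty v^2 e^{-\lambda\gamma T c v^2}\,dv = \gamma^2 T c\,\frac{\sqrt\pi}{4}(\lambda\gamma T c)^{-3/2}.
\]
The stable part is exponentially small. Multiplying by $\lambda^2\sigma^2/2$ and summing gives $\sum_i \lambda_i^{1/2} = L^{1/2}\zeta(\alpha/2)$, hence $\frac{\sqrt\pi}{8}\zeta(\alpha/2)\sigma^2(L\gamma/c)^{1/2}T^{-1/2}$.

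\textbf{Main obstacle.} The delicate step is rigor in the two variance regimes: we must interchange the limit $T\to\infty$ with the $i$-sum and the $k$-sum. Two things are needed. First, the discretization errors from replacing $\eta_k$, the products by exponentials, and sums by integrals must be negligible uniformly; this matters most near $v=0$ and near the boundary $\lambda\gamma T\sim 1$. Second, dominating functions are required. I would try to obtain them from the bound $a_j\le \exp(-\lambda\gamma\eta_j(2-\gamma\tr(\Lambda)))$ together with the $z$-transform or integral-comparison estimates of \textcite{bach2025ztransform}. The integrability conditions near $0$ and $\infty$ ($-\xi\in(0,1)$ and $\alpha\gtrless2$) are exactly what separates the two regimes.
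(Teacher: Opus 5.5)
Your proposal is correct, but it takes a genuinely different route from the paper.

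\textbf{The paper's route.} The paper never evaluates $B_T$ or $V_T$ at fixed $T$. After writing $\Pi_{T,i}=S_{i,T}e^{TI_i}$ by Euler--Maclaurin, it takes the $z$-transform in $T$ and extracts the singularity as $z\to1^-$ through a Riemann sum in $x_i=is^{1/\alpha}$, differentiating $\mu$ times to force divergence in the bias. It then returns to $T$ via Korevaar's Tauberian theorem (\cref{th:tauberian}). That step needs a separate one-sided bound on $B_T-B_{T-1}$ and $V_T-V_{T-1}$, which is not automatic because the WSD schedule itself depends on $T$. The cooldown variance is reduced to the linear-schedule variance at horizon $cT$.

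\textbf{Your route.} You evaluate directly at fixed $T$ by Laplace and Riemann-sum asymptotics. In this form $\kappa_1=F(0)$ and $\kappa_2$ (the value of $F$ at the start of the cooldown) appear transparently. The stable-phase sum is exactly geometric, and the Gamma factors come from integrating in $\lambda$ before $v$. This is more elementary: it needs no Tauberian side condition and no differentiation trick, and it treats all $\beta>0$ at once. It is also essentially the form needed later for vanishing $\gamma_T$; the paper itself redoes the bias as a plain Laplace asymptotic in \cref{lemma:uniform_epsilon}. The price is that all uniformity must be proved by hand, as you note.

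\textbf{Details to tighten.}
\begin{itemize}
\item For $\alpha>2$, your opening justification of the linearization (only small eigenvalues matter) does not apply, since fixed modes dominate. The linearization is still valid because $\eta_j$ is small in the Laplace window. There the per-step error is $O(\lambda^2\gamma^2\eta_j^2)$, and summed over the last $O(\sqrt{T})$ steps it is $O(T^{-1/2})$.
\item The stable part is exponentially small only mode by mode. Summed over $i$, the modes with $\lambda_i\gamma T\lesssim 1$ contribute $\Theta(T^{1/\alpha-1})$, which is $o(T^{-1/2})$ exactly because $\alpha>2$. Interchanging limit and sum is justified by the majorant $\sup_T T^{1/2}\lambda_i e^{-\kappa_2\lambda_i\gamma T}\lesssim\lambda_i^{1/2}\propto i^{-\alpha/2}$, which is summable for $\alpha>2$.
\item The $V^A$ integral is not of Frullani type: each exponential is integrable on its own, since $-\xi-1>-1$.
\end{itemize}
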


Note that the order $q$ of $\Theta(T^{-q})$ is independent of the relative cooldown length $c > 0$, in both the bias and the variance: $c$ only affects the constant
in front of~$T$. The contrast is with the constant schedule, whose variance does not vanish: any fixed relative cooldown length $c > 0$ suffices to change the rate.

\paragraph{Experimental observation.} We plot in \cref{fig:wsd_equivalent} in \cref{app:figures}, like in the previous section, the ratios $\widetilde B_T/B_T$ and $\widetilde V_T/V_T$, for a cooldown length $c=0.2$.

The exact variance balances two contributions: one from the leading dimensions (the first few dimensions with large eigenvalues) which decays as $O(T^{-1/2})$, and one from the tail dimensions (the infinite accumulation of dimensions with vanishing eigenvalues) which decays as $O(T^{(1-\alpha)/\alpha})$. Consequently, the relative error between the exact variance and its asymptotic equivalent scales as $O(T^{-\frac{|2-\alpha|}{2\alpha}})$. As $\alpha \to 2$, this convergence exponent approaches zero. The remainder thus decays almost at the exact same rate as the main term, requiring larger values of $T$ for the ratio to reach $1$.

\subsection{Summary of All Equivalents}
\label{sec:all_equivalent}
For clarity, the correction factors $1-\frac{\gamma\tr(\Lambda)}{2}$ and
$1-\frac{\gamma\tr(\Lambda)}{3}$ appearing in \cref{th:constant,th:wsd} are frozen at
their $\gamma\to0$ limit, so that \cref{tab:equiv_combined} displays leading order in
$\gamma\tr(\Lambda)$ rather than the exact constants. This is not an additional assumption:
\cref{lemma:eta_star_to_0} shows, independently of any such simplification, that every
minimizer of the asymptotic risk vanishes as $T\to\infty$, so $\gamma\tr(\Lambda)\to0$ along
the optimal sequence and the frozen constants are the relevant ones as soon as the
learning-rate is tuned. We define:
\begin{equation}\label{eq:mathcalW}
\mathcal W(\alpha,c) := \frac{\Gamma(1-1/\alpha)}{4\alpha}\left[\frac{\alpha}{2-\alpha}\,c^{1/\alpha-1} - (2-c)^{1/\alpha-1}\right],
\end{equation}
the limit of $\mathcal W_\gamma(\alpha,c)$ in \cref{th:wsd} as $\gamma \to 0$.

Two features stand out. First, unsurprisingly, the
bias decreases with $\gamma$ while the variance increases, and both dependencies are pure
powers of $\gamma$; this is what will make the trade-off solvable in closed form in
\cref{eq:frrisk}. Second, at fixed $\gamma$ the variance of the constant schedule does
not vanish as $T \to \infty$, whereas it does for every decaying schedule.

This last point already suggests an advantage for the decaying schedules, but it also shows
why \cref{tab:equiv_combined} cannot settle the comparison: the constant schedule is only
meaningful along a vanishing sequence $\gamma_T \to 0$, so the schedules can be compared
only once each is run at its own optimal base learning-rate (see \cref{sec:eta_star}).

\begin{table} 
    \centering
    \setlength{\tabcolsep}{3pt}
    \renewcommand{\arraystretch}{1.3}
    \begin{tabular}{@{}lcc@{}}
        \toprule
        \textbf{Sched.} & \textbf{Bias} $B_T$ & \textbf{Variance} $V_T$ \\
        \midrule
        Cst.
            & $L\Delta\,(2L\gamma T)^{-\omega}$
            & $\sigma^2 L\gamma$ \\
        WSD$_{\alpha<2}$
            & $L\Delta\,\big((2-c)L\gamma T\big)^{-\omega}$
            & $\mathcal{W}(\alpha,c)\sigma^2 (L\gamma)^{1/\alpha}T^{1/\alpha-1}$ \\
        WSD$_{\alpha>2}$
            & $L\Delta\,\big((2-c)L\gamma T\big)^{-\omega}$
            & $\sigma^2 (L\gamma/c)^{1/2}\,T^{-1/2}$ \\
        \bottomrule
    \end{tabular}

    \caption{Equivalents of the bias and the variance for the studied schedules (constant: both regimes $1<\alpha<2$ and $\alpha>2$),
    with $\omega = (\alpha+\beta-1)/\alpha$. The cooldown length is $c \in (0,1]$. Constants depending only on $(\alpha,\beta)$ are omitted. $\mathcal W(\alpha, c)$ is defined in \cref{eq:mathcalW}.}  
    \label{tab:equiv_combined}
\end{table}

\section{Optimal Learning-Rate}
\label{sec:eta_star}

\newcommand{\widetildeR}{\widetilde{\mathcal R}}

Let $\risk_T(\gamma)$ be the excess risk after $T$ steps of SGD run with base learning-rate $\gamma$ and a given schedule, and let $\gamma^*_T \in \argmin_\gamma \mathcal R_T(\gamma)$. No closed form is available for $\gamma^*_T$, and it is not even known to be unique. $\widetilde B_T$ and $\widetilde V_T$ denote the asymptotic equivalents derived in \cref{th:constant,th:wsd}, and $\widetildeR_T := \widetilde B_T + \widetilde V_T$. We therefore try to minimize instead the asymptotic risk $\widetildeR_T(\gamma)$. 
By \cref{lemma:eta_star_to_0}, any minimizer of $\widetildeR_T(\gamma)$ vanishes as $T \to \infty$, so that the correction factors $1 - \gamma\tr(\Lambda)/2$ and $1 - \gamma\tr(\Lambda)/3$ can be frozen at their limit. Let us denote $\frrisk_T(\gamma)$ the asymptotic risk obtained by freezing them and $\gammastar_T$ its minimizer. Then we can write, with $v$ and~$r$ the exponents of the variance term in \cref{tab:equiv_combined}:

\begin{equation}\label{eq:frrisk}
\frrisk_T(\gamma) =  \underbrace{C_B \gamma^{-
\omega}T^{-\omega}}_{\text{bias}} + \underbrace{C_V \gamma^{v}T^{-r}}_{\text{variance}}.
\end{equation}
The minimizer $\gammastar_T$ of $\frrisk_T(\gamma)$ is optimal for the exact risk $\risk_T$:
\begin{theorem} \label{th:gammastar}
With $\widetilde\gamma^*_T$ defined as the minimizer of $\frrisk_T(\gamma)$, we obtain that the exact risk for the base learning-rate $\widetilde\gamma^*_T$ is asymptotically optimal (see more details in \cref{app:gammastar}):
\begin{equation}\label{eq:gammastar}
\inf_{\gamma \in (0, \gammax)} \risk_T(\gamma) \sim \risk_T(\gammastar_T),
  \qquad T \to \infty .
\end{equation}
\end{theorem}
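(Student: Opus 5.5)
Since $\risk_T(\gamma)\ge \inf_{\gamma}\risk_T$ trivially, it suffices to show $\risk_T(\gammastar_T)\le (1+o(1))\inf_{\gamma\in(0,\gammax)}\risk_T(\gamma)$. The strategy is to upgrade the pointwise equivalents of \cref{th:constant,th:wsd} to equivalents that hold \emph{uniformly} in $\gamma$ over a suitable range. First compute $\gammastar_T$ explicitly from \cref{eq:frrisk}. Setting the derivative to zero gives $\gammastar_T = \big(\tfrac{\omega C_B}{v C_V}\big)^{1/(\omega+v)} T^{(r-\omega)/(\omega+v)}$. The optimal frozen risk is then $\frrisk_T(\gammastar_T) = K\, T^{-q}$ with $q=(\omega r + v\omega)/(\omega+v)\cdot$(appropriate combination) and an explicit constant $K$. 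Moreover, $\gamma\mapsto \frrisk_T(\gamma)/\frrisk_T(\gammastar_T)$ depends only on $u=\gamma/\gammastar_T$ through $g(u)=\frac{v u^{-\omega}+\omega u^{v}}{v+\omega}$, which is $\ge 1$, has its minimum $1$ at $u=1$, and tends to $\infty$ as $u\to 0$ or $u\to\infty$.

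\textbf{Step 1 (uniform equivalents near the optimum).} Fix $0<a<1<A$. I would revisit the $z$-transform/Riemann-sum proofs of \cref{th:constant,th:wsd} and show
\[
\sup_{u\in[a,A]}\Big|\tfrac{B_T(u\gammastar_T)}{\widetilde B_T(u\gammastar_T)}-1\Big|\to 0
\quad\text{and}\quad
\sup_{u\in[a,A]}\Big|\tfrac{V_T(u\gammastar_T)}{\widetilde V_T(u\gammastar_T)}-1\Big|\to 0 .
\]
The equivalents in those theorems are really statements in the joint variable $\gamma T$ and $\gamma\to0$: the sums over $i$ are Riemann sums in $i(\gamma T)^{-1/\alpha}$. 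Because $\gammastar_T\to0$ (\cref{lemma:eta_star_to_0}) while $\gammastar_T T\to\infty$, both scaling parameters diverge uniformly on $[a,A]$, so dominated convergence yields uniformity. The frozen factors also converge, since $\gamma\tr(\Lambda)\to0$ uniformly there.

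\textbf{Step 2 (the exact risk is large away from the optimum).} I would show that for every $M$ there exist $a,A$ with $\risk_T(\gamma)\ge M\,\frrisk_T(\gammastar_T)$ for all $\gamma\notin[a\gammastar_T,A\gammastar_T]$ and all large $T$. This requires monotone one-sided bounds rather than equivalents. For $\gamma\le a\gammastar_T$, the bias $B_T(\gamma)$ is nonincreasing in $\gamma$ for small $\gamma$, since each factor $1-2\gamma\eta L i^{-\alpha}+\gamma^2\eta^2\tr(\Lambda)Li^{-\alpha}$ decreases in $\gamma$ on $[0,1/\tr(\Lambda)]$. Hence $B_T(\gamma)\ge B_T(a\gammastar_T)\sim C_B a^{-\omega}(\gammastar_T T)^{-\omega}$, which is large compared with $K T^{-q}$ when $a$ is small. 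For $\gamma\ge A\gammastar_T$, I would lower bound the variance by keeping only the decay-phase terms (or the first few coordinates). Those terms are increasing in $\gamma$ up to a fixed threshold, which gives $V_T(\gamma)\ge c\,A^{v}\,\frrisk$-scale. For $\gamma$ bounded away from $0$ up to $\gammax$, crude bounds suffice. For the constant schedule $V_T\to V_\infty>0$, and for WSD the variance of the top coordinate is at least of order $(\gamma/T)^{1/2}$, in both cases much larger than $T^{-q}$.

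\textbf{Step 3 (conclusion).} Choose $M>g(1)=1$ larger than, say, $2$. Then the infimum is attained, up to a factor $1+o(1)$, within $u\in[a,A]$. There, by Step 1, $\risk_T(u\gammastar_T)=(1+o(1))\,\frrisk_T(u\gammastar_T)=(1+o(1))\,g(u)\,KT^{-q}\ge(1+o(1))KT^{-q}$, while $\risk_T(\gammastar_T)=(1+o(1))KT^{-q}$. This proves \cref{eq:gammastar}.

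\textbf{Main obstacle.} I expect the main obstacle to be Step 2 in the regime $\gamma\ge A\gammastar_T$ with $\gamma$ still small. There one needs a non-asymptotic lower bound on $V_T$ of the correct order $\gamma^{v}T^{-r}$ that is uniform in $\gamma$, including the $\alpha<2$ case where the constant- and decay-phase variances interact. My first attempt would be to bound below by the decay-phase contribution restricted to indices $i\asymp(\gamma T)^{1/\alpha}$, where each summand is explicitly comparable to its integral approximation.
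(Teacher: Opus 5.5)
Your overall strategy is sound, but it organizes the argument differently from the paper. The paper argues along sequences. \cref{lemma:uniform_epsilon} proves $\risk_T(\gamma_T)\sim\eqrisk_T(\gamma_T)$ for \emph{every} sequence with $\gamma_T\to0$ and $\gamma_T T\to\infty$. \cref{lemma:bounds} then uses crude lower bounds on $B_T$ and $V_T$ to show that any $\gamma_T$ with $\risk_T(\gamma_T)=O(\risk_T(\gammastar_T))$ lies in that regime. Finally, \cref{lemma:eta_star_to_0}(3), i.e.\ $\eqrisk_T(\gammastar_T)\sim\inf\eqrisk_T$, closes the loop.

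You instead localize to a fixed multiplicative window $[a\gammastar_T,A\gammastar_T]$ and need uniform equivalence only there. Your Step 1 is the paper's lemma restricted to this window; like the paper, you must replace the pointwise Tauberian step by direct Laplace/Riemann-sum estimates. You then minimize $\frrisk_T$ explicitly through $g(u)$. This route buys three things:
\begin{itemize}
\item independence from any statement about minimizers of $\eqrisk_T$ over all of $(0,\gammax)$;
\item a clean treatment of the whole range $\gamma\le a\gammastar_T$, including $\gamma T$ bounded, via monotonicity of the bias factors on $[0,1/\tr(\Lambda)]$;
\item the by-product that near-optimal learning-rates satisfy $\gamma_T/\gammastar_T\to1$.
\end{itemize}
The price is that your localization must be quantitative, with lower bounds growing like $a^{-\omega}$ and $A^{v}$, rather than merely qualitative.

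One step fails as written. For $\gamma$ bounded away from $0$, you claim the top-coordinate WSD variance, of order $(\gamma/T)^{1/2}$, dominates $T^{-q}$. Since $v+r=1$, the optimal order is $q=\omega/(v+\omega)$ (your formula for $q$ is garbled). For $1<\alpha<2$ this equals $\frac{\alpha+\beta-1}{\alpha+\beta}$, which is $<\frac12$ whenever $\alpha+\beta<2$, and then $T^{-1/2}=o(T^{-q})$.

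The repair also removes the need for any monotonicity of $V_T$ in $\gamma$, and it dissolves the obstacle you anticipate. All terms are nonnegative, so you may discard the constant phase, and there is no interaction to control. What you need are bounds of the exact order $\gamma^{v}T^{-r}$, with constants independent of $\gamma\in(0,\gammax)$ once $\gamma T$ exceeds a constant:
\begin{itemize}
\item For WSD with $\alpha<2$, restricting to $i\ge(2\gamma LcT)^{1/\alpha}$ and $\ell\in[cT/2,cT]$ gives $V_T(\gamma)\ge c'\gamma^{1/\alpha}T^{1/\alpha-1}$.
\item For WSD with $\alpha>2$, one fixed mode with $\ell\asymp(T/\gamma)^{1/2}$ gives $V_T(\gamma)\ge c'\gamma^{1/2}T^{-1/2}$.
\item For the constant schedule, the paper's $\gamma^2$ bound is too weak for your purpose. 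Instead, the $i=1$ term of the closed form $V_T=\frac{\gamma^2\sigma^2}{4\bar\gamma}\sum_i\lambda_i[1-(1-2\bar\gamma\lambda_i)^T]$ gives $V_T(\gamma)\ge c'\gamma$.
\end{itemize}
The two WSD bounds are exactly those of \cref{lemma:bounds}.

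Because $C_V(\gammastar_T)^{v}T^{-r}=\frac{\omega}{v+\omega}\frrisk_T(\gammastar_T)$, each of these yields $\risk_T(\gamma)\ge c''A^{v}\frrisk_T(\gammastar_T)$ simultaneously for all $\gamma\in[A\gammastar_T,\gammax)$. No separate case near $\gammax$ is then needed.
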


\subsection{Optimal Risk}
\label{sec:expression_gammastar}

The optimal learning-rate
$\widetilde\gamma_T^*$ can be written as in \cref{tab:gamma_star_combined}, where the
constants $C_{\mathrm{cst}}, C_{\mathrm{wsd<2}}, C_{\mathrm{wsd>2}}$ depend only
on $(\alpha, \beta, c)$ and are given in \cref{app:constants}.

\begin{table}
    \centering
    \setlength{\tabcolsep}{4pt}
    \renewcommand{\arraystretch}{1.5}
    \begin{tabular}{@{}lll@{}}
        \toprule
        \textbf{Schedule} & \textbf{Regime} & $\boldsymbol{\widetilde\gamma^*_T}$ \\
        \midrule
        Constant & both
            & $\big(\frac{C_{\mathrm{cst}}\,\Delta}{\sigma^2 L^{\omega}}\big)^{\frac{1}{1+\omega}} T^{-\frac{\omega}{1+\omega}}$ \\
        WSD & $1<\alpha<2$
            & $\big(\frac{C_{\mathrm{wsd}<2}\,\Delta}{\sigma^2 L^{\beta/\alpha}}\big)^{\frac{\alpha}{1+\alpha\omega}} T^{-\frac{\beta}{1+\alpha\omega}}$ \\
        WSD & $\alpha>2$
            & $\big(\frac{C_{\mathrm{wsd}>2}\,\Delta}{\sigma^2 L^{\omega-1/2}}\big)^{\frac{2}{1+2\omega}} T^{-\frac{2\omega-1}{1+2\omega}}$ \\
        \bottomrule
    \end{tabular}

    \caption{Optimal learning-rate $\widetilde\gamma^*_T$ for each schedule and regime of $\alpha$. All exponents of $T$ are negative; in particular, for $\alpha > 2$, $\omega > 1 - 1/\alpha > 1/2$.}
    \label{tab:gamma_star_combined}
\end{table}

By \cref{lemma:eta_star_to_0},
the exponent of $\gammastar_T$ is $\vartheta = \frac{\omega-r}{v+\omega}$. Since
$v + r = 1$ in all cases, $\vartheta = 1 - \frac{1}{v+\omega}$, so that
$\risk_T(\gammastar_T) = \Theta(T^{-\frac{\omega}{v+\omega}})$. As $\omega$ depends on the problem alone, the schedules are compared through $v$ only: a smaller $v$ yields both a slower-vanishing $\widetilde{\gamma}^*_T$ (smaller $\vartheta$) and a faster rate (larger $\frac{\omega}{v+\omega}$), with $v = 1$ for the constant schedule against $v = 1/\alpha$ ($\alpha < 2$) or $v = 1/2$ ($\alpha > 2$) for WSD.

\paragraph{Experimental observation.} We empirically verify in \cref{fig:eta_star} the optimality of the base learning-rate $\gammastar_T$, which is derived for the asymptotic regime $T \to \infty$. By plotting the exact risk $\risk_T(\gamma)$ over a range of $\gamma$ at a fixed horizon $T$, we evaluate the discrepancy between the theoretical $\gammastar_T$ and the actual finite-time minimizer $\argmin_\gamma \risk_T(\gamma)$.

\begin{figure}
    \centering
    \captionsetup[subfigure]{skip=2pt, belowskip=-4pt}

    \begin{subfigure}[b]{0.48\textwidth}
        \centering
        \includegraphics[width=\textwidth]{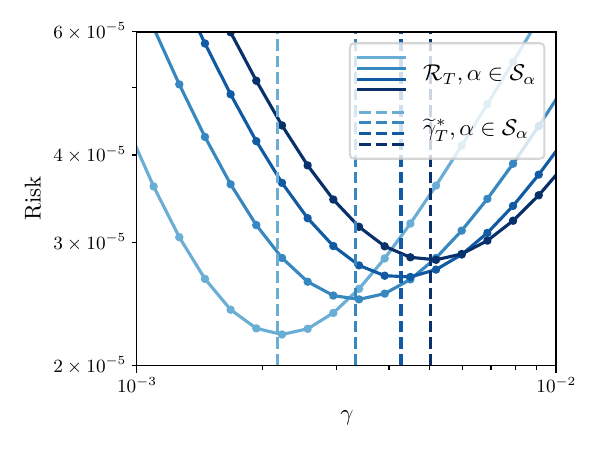}
        \caption{Constant schedule}
        \label{fig:sgd_constant}
    \end{subfigure}\hfill
    \begin{subfigure}[b]{0.48\textwidth}
        \centering
        \includegraphics[width=\textwidth]{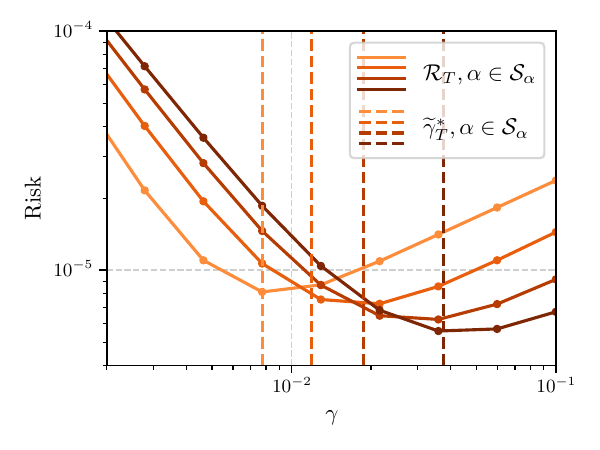}
        \caption{Linear decay ($c=1$)}
        \label{fig:sgd_linear}
    \end{subfigure}

    \caption{Excess risk as a function of the learning-rate $\gamma$ for different values of $\alpha \in \mathcal S_\alpha$, with $T=10^5$, $\beta=2$, $d=100$, $\sigma=0.1$, and $L=\Delta=1$. The vertical dashed lines correspond to the theoretical asymptotic optima $\gamma = \widetilde\gamma^*_T$. While these approximations appear to coincide with the minima of the risk curves under a constant schedule, a slight discrepancy is visible for the linear decay. This error arises because the asymptotic derivation requires $T \to \infty$, an effect that becomes pronounced near the regime change as~$\alpha \to 2$.}
    \label{fig:eta_star}
    
\end{figure}

\paragraph{Optimal equivalents, using $\widetilde\gamma^*$.}
We can substitute $\widetilde\gamma^*$ into \cref{tab:equiv_combined}, which leads to \cref{tab:optimal_scaling}.

\begin{table}[t]
    \centering
    \setlength{\tabcolsep}{4pt}
    \renewcommand{\arraystretch}{1.3}
    \begin{tabular}{@{}ll@{}}
        \toprule
        \textbf{Sched.} & \textbf{Optimal risk} \\
        \midrule
        Cst.
            & $(L\Delta)^{\frac{1}{1+\omega}}\, (\sigma^2/T)^{\frac{\omega}{1+\omega}}$ \\
        WSD$_{\alpha<2}$
            & $(L\Delta)^{\frac{1}{1+\alpha\omega}} \big(\mathcal{W}(\alpha,c)\,(2-c)^{-1/\alpha}\, \sigma^2/T\big)^{\frac{\alpha\omega}{1+\alpha\omega}}$ \\
        WSD$_{\alpha>2}$
            & $(L\Delta)^{\frac{1}{1+2\omega}} \big(\big(c(2-c)\big)^{-1/2}\, \sigma^2/T\big)^{\frac{2\omega}{1+2\omega}}$ \\
        \bottomrule
    \end{tabular}

    \caption{Equivalents obtained using the rate $\widetilde\gamma^*$ optimized for each schedule. The exponent of $\sigma^2/T$ is greater for WSD schedules than for a constant schedule, in both regimes of $\alpha$.
    Constants depending only on $(\alpha,\beta)$ are omitted except where they depend on $c$.}
    \label{tab:optimal_scaling}
\end{table}
Note that the exponent $T^{-\frac{\alpha+\beta-1}{2\alpha+\beta-1}}$ of the
constant schedule corresponds to that of \textcite{bordelon2026theory}
(consequence of their Result 1). As noted for $\gammastar_T$, the rate
$\Theta(T^{-\frac{\omega}{v+\omega}})$ depends on the schedule through $v$ alone, so the benefit of cooldown comes from $v_{\mathrm{wsd}} < v_{\mathrm{cst}} = 1$
and grows polynomially with the horizon. The relative cooldown length $c$, by
contrast, leaves the exponent unchanged and only enters the constant: within the
WSD family the choice of $c$ is a second-order effect. Moreover, their optimal risk $T^{-(\alpha+\beta-1)/(\alpha+\beta)}$ matches our WSD rate for $\alpha\in(1,2)$ but is better for $\alpha>2$.

We can compare \cref{tab:optimal_scaling} with an average $\bar \theta_T = \frac1T \sum_{t=1}^T \theta_t$. We compute in \cref{sec:average}, for a constant schedule, the corresponding bias $\bar B_T$ and variance $\bar V_T$. With $\omega':=\min\{\omega,2\}$ and $\omega\neq 2$, and assuming \cref{th:gammastar} still applies, this gives an optimal risk of $
\inf_\gamma\bar{\mathcal R}_T(\gamma)
 = \Theta \big( (L\Delta)^{\frac{1}{1+\alpha\omega'}}\big(\sigma^2/T\big)^{\frac{\alpha\omega'}{1+\alpha\omega'}} \big).$
This matches WSD for $\alpha<2$ as long as $\omega < 2$, since averaging saturates at $\omega=2$ \citep{dieuleveut2016nonparametric}. On the other hand, horizon-free step sizes $t^{-\mathfrak \chi}$ without averaging are known to be slower than their averaged counterparts \citep[Table~1]{bach2011nonasymptotic}.

\cref{fig:comparison_cst_wsd} in \cref{app:figures} shows the effect of reusing for WSD the base learning-rate tuned for the constant schedule.

\subsection{Cooldown Length and Learning-Rate Transfer}

For a relative cooldown $c$ and a horizon $T$, denote $\gammastar_T(c)$ the base learning-rate presented in \cref{tab:gamma_star_combined}. \textcite{schaipp2025} minimize a non-smooth convex optimization bound over the base learning
rate, and denote by $\gamma^*(c)$ the resulting optimum for a schedule with relative cooldown
length $c$. They observe that the ratio $\gamma^*(1)/\gamma^*(c)$ is independent of the
horizon $T$ (\cref{fig:schaipp_wsd_gammastar_c}), and report (their Figure~11) that this invariance is consistent with the learning
rates tuned for Llama-style architectures: for a $20\%$ cooldown they measure
$\gamma^*(1) \approx e^{0.7}\,\gamma^*(0.2)$.

The same invariance holds for our own optimal base learning-rate (see \cref{fig:wsd_gammastar_c})
$\widetilde{\gamma}^{\,*}_T(c)$, and for a reason that is transparent here: by Table~2, the
horizon enters $\widetilde{\gamma}^{\,*}_T(c)$ only through a factor $T^{-\vartheta}$ whose
exponent $\vartheta$ does not depend on $c$, the relative cooldown length affecting the
multiplicative constant alone. We can compute a closed form of $\log \left( {\gammastar_T(1)}/{\gammastar_T(c)} \right)$ from \cref{tab:gamma_star_combined},
\begin{equation}\label{eq:log_gammastar}
\log\Big(\frac{\gammastar_T(1)}{\gammastar_T(c)}\Big) =
\begin{cases}
    \frac{\alpha}{1+\alpha\omega}\,\log\left(\frac{(2-c)^{\omega}\,\mathcal{W}_c}{\mathcal{W}_1}\right) & \text{if } 1<\alpha<2,\\[6pt]
    \frac{2}{1+2\omega}\,\log\left(\frac{(2-c)^{\omega}}{\sqrt{c}}\right) & \text{if } \alpha>2,
\end{cases}
\end{equation}
where $\mathcal W_c := \mathcal W(\alpha,c).$

\begin{figure}
    \centering
    \captionsetup[subfigure]{skip=2pt, belowskip=-4pt}
    \begin{subfigure}{0.48\linewidth}
        \centering
        \includegraphics[width=\linewidth]{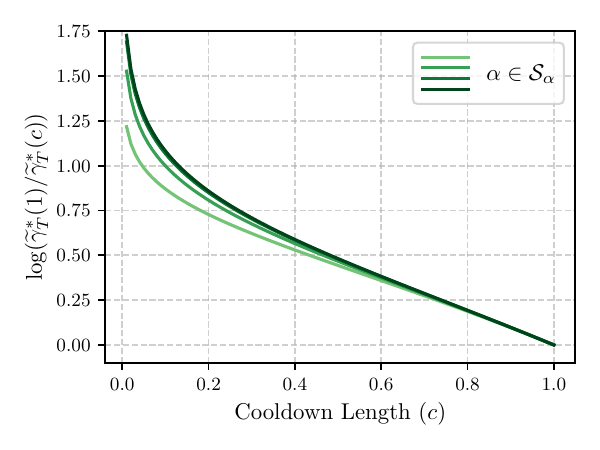}
        \caption{Our model.}
        \label{fig:wsd_gammastar_c}
    \end{subfigure}\hfill
    \begin{subfigure}{0.48\linewidth}
        \centering
        \includegraphics[width=\linewidth]{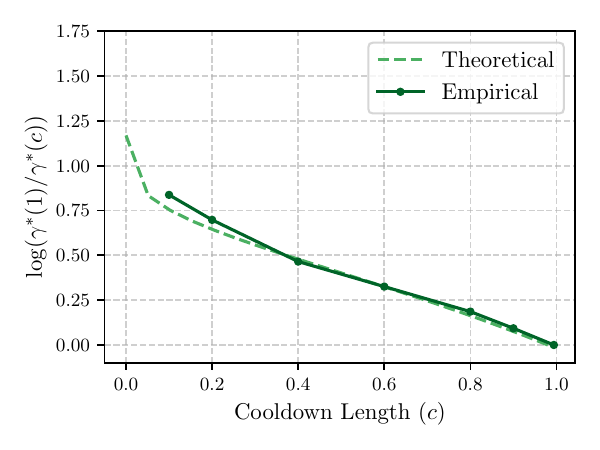}
        \caption{\textcite{schaipp2025}.}
        \label{fig:schaipp_wsd_gammastar_c}
    \end{subfigure}
    \caption{Log-ratio $\log(\gamma^*(1)) - \log(\gamma^*(c))$ of the optimal base
    learning-rates as a function of the cooldown length $c$.
    (a)~Asymptotically optimal $\gammastar_T$ from \cref{tab:gamma_star_combined}; the
    value of $T$ is inconsequential since the ratio is independent of $T$ by
    \cref{eq:log_gammastar}.
    (b)~Empirical $\gamma^*$ and $\gamma^*$ obtained from the convex bound of
    \textcite{schaipp2025} (their $\sqrt{\mathcal T_1/\mathcal T_2}$, with constants $G=D=1$).}
    \label{fig:gammastar_c}
\end{figure}

\paragraph{Optimizing the prefactor.} Using the asymptotically optimal learning-rate, we can deduce a cooldown length $c\in(0,1]$ which would optimize the WSD schedule. For $\alpha<2$, we must minimize $\frac{\mathcal{W}(\alpha, c)}{(2-c)^{1/\alpha}}$ while for $\alpha>2$, we need to maximize $c(2-c)$. The second case is directly optimized with $c^*=1$.

For the first case ($1 < \alpha < 2$), \cref{lemma:c_star} implies that the objective function $c\mapsto \frac{\mathcal{W}(\alpha, c)}{(2-c)^{1/\alpha}}$ is strictly decreasing on the interval $(0, 1].$ Therefore, the minimum is also reached at the upper boundary $c^* = 1$. This ultimately demonstrates that the pure linear decay schedule remains theoretically optimal within the WSD family across both asymptotic regimes for $\gammastar_T.$

\begin{corollary}[Pure Linear Decay Optimality]\label{th:c_star}
Under the optimal base learning-rate $\widetilde{\gamma}^*_T$, the asymptotic excess risk $\widetildeR_T$ of a WSD schedule is minimized for a relative cooldown length $c^*=1$, which recovers the pure linear decay schedule.
\end{corollary}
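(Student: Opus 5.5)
We plan to reduce the statement to minimizing the $c$-dependent prefactor of the optimal frozen risk, and then study that prefactor in each regime of $\alpha$.

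\textbf{Closed form of the optimal frozen risk.} Substitute $\gammastar_T$ into \cref{eq:frrisk}. For $C_B,C_V>0$, minimizing $C_B\gamma^{-\omega}T^{-\omega}+C_V\gamma^{v}T^{-r}$ over $\gamma>0$ gives a value equal to a constant depending only on $(\omega,v)$ times $C_B^{\frac{v}{v+\omega}}C_V^{\frac{\omega}{v+\omega}}T^{-\frac{\omega}{v+\omega}}$. This is elementary calculus: the first-order condition has a unique root and the objective is coercive at both ends. In the WSD case, $C_B\propto(2-c)^{-\omega}$ and $C_V\propto\mathcal W(\alpha,c)$ (resp.\ $c^{-1/2}$), while $v$ and $r$ do not depend on $c$. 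The optimal value is therefore an increasing function of
\[
\Phi(c):=(2-c)^{-\omega v}\,C_V(c),
\]
which reproduces \cref{tab:optimal_scaling}. Since $\gammastar_T\to0$ by \cref{lemma:eta_star_to_0}, the frozen and unfrozen asymptotic risks agree to leading order. Combining this with \cref{th:gammastar}, minimizing the risk over $c$ at leading order amounts to minimizing $\Phi$ on $(0,1]$.

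\textbf{Case $\alpha>2$.} Here $v=1/2$, and $\Phi(c)\propto\big(c(2-c)\big)^{-1/2}$ up to the positive power $\omega$ absorbed in the table. We need to check that the exponents really combine as $(2-c)^{-\omega/2}c^{-1/2}$ raised to $\frac{2\omega}{1+2\omega}$. Both factors $(2-c)^{-\omega/2}$ and $c^{-1/2}$ are decreasing on $(0,1]$, so $\Phi$ is strictly decreasing and the minimum is at $c^*=1$. This gives the claim directly, without relying on the symmetric form $c(2-c)$.

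\textbf{Case $1<\alpha<2$ (main obstacle).} Here we must show that $g(c):=\mathcal W(\alpha,c)(2-c)^{-1/\alpha}$ is strictly decreasing, which is the content of \cref{lemma:c_star}; I would prove it as follows. Write $p:=1-1/\alpha\in(0,1/2)$. Then
\[
g(c)=K\Big[\tfrac{\alpha}{2-\alpha}\,c^{-p}(2-c)^{-1/\alpha}-(2-c)^{-1}\Big],\qquad K>0.
\]
Since $\Gamma(1-1/\alpha)>0$, $K$ is positive. Also $\mathcal W>0$: we have $\frac{\alpha}{2-\alpha}>1$ and $c^{-p}\ge(2-c)^{-p}$.

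Differentiating,
\[
g'(c)/K=\tfrac{\alpha}{2-\alpha}\,c^{-p}(2-c)^{-1/\alpha}\Big[-\tfrac pc+\tfrac{1}{\alpha(2-c)}\Big]-(2-c)^{-2}.
\]
The bracket equals $\frac{-p(2-c)+c/\alpha}{c(2-c)}$. Its numerator is $c-2p=c-2+2/\alpha$, which can be positive when $c>2-2/\alpha$, so a sign argument alone is not enough.

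I would therefore bound the positive part against $(2-c)^{-2}$. Multiply $g'$ by $(2-c)^2>0$, which reduces the claim to
\[
\tfrac{\alpha}{2-\alpha}\,c^{-p-1}(2-c)^{1-1/\alpha}\,(c-2p)<1 \quad\text{on }(0,1].
\]
When $c\le 2p$ the left side is nonpositive, so the inequality holds trivially. When $c\in(2p,1]$, I would maximize $c^{-p-1}(c-2p)$, which is increasing there, together with $(2-c)^{p}$, which is decreasing. It then suffices to check the crude bound $\frac{\alpha}{2-\alpha}(1-2p)\cdot1=\frac{\alpha}{2-\alpha}\cdot\frac{2-\alpha}{\alpha}=1$ at $c=1$. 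This shows the bound is tight at the endpoint $c=1$.

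So the hard part is a careful one-variable estimate: show that $h(c)=c^{-p-1}(c-2p)(2-c)^{p}$ satisfies $h(c)\le h(1)=1-2p$ with equality only at $c=1$. I would do this by checking that $\log h$ is increasing on $(2p,1)$, that is, $\frac{1}{c-2p}-\frac{p+1}{c}-\frac{p}{2-c}>0$. Clearing denominators gives a quadratic inequality in $c$ that can be checked at the endpoints.

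If the derivative turns out to vanish at $c=1$, monotonicity is still strict on $(0,1)$, so the minimizer is still uniquely $c^*=1$. Combining both regimes with the first step yields the corollary.
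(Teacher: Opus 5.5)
Your overall reduction is the paper's: minimize the $c$-dependent prefactor of the optimal risk in \cref{tab:optimal_scaling}, by treating each regime of $\alpha$ separately. However, the prefactor is mis-specified, and this breaks your $\alpha>2$ case.

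The optimal value is proportional to $C_B^{v/(v+\omega)}C_V^{\omega/(v+\omega)}=\big(C_B^{v/\omega}C_V\big)^{\omega/(v+\omega)}$, and $C_B\propto(2-c)^{-\omega}$. So the quantity to minimize is $\Phi(c)=(2-c)^{-v}C_V(c)$, not $(2-c)^{-\omega v}C_V(c)$. For $\alpha>2$ this is exactly $(c(2-c))^{-1/2}$, raised to $\frac{2\omega}{1+2\omega}$; the exponents do \emph{not} combine as $(2-c)^{-\omega/2}c^{-1/2}$.

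More seriously, your monotonicity claim is false. For $a>0$, $(2-c)^{-a}$ is \emph{increasing} in $c$ on $(0,1]$, so the two factors pull in opposite directions, and no factor-by-factor argument can work. With your exponent the argument would even give the wrong answer. Indeed, $\frac{d}{dc}\log\big[(2-c)^{-\omega/2}c^{-1/2}\big]=\frac{\omega}{2(2-c)}-\frac{1}{2c}$ is positive at $c=1$ whenever $\omega>1$ (i.e.\ $\beta>1$, e.g.\ the paper's $\beta=2$), and that function is minimized at $c=2/(1+\omega)<1$.

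The symmetric form you wanted to avoid is exactly what is needed. With the correct exponent, $c(2-c)=1-(1-c)^2$ is strictly increasing on $(0,1]$, which gives $c^*=1$, as in the paper.

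For $1<\alpha<2$ you switch to the correct function $g(c)=\mathcal W(\alpha,c)(2-c)^{-1/\alpha}$, which is inconsistent with your own $\Phi$. From there your argument is the paper's proof of \cref{lemma:c_star}: a trivial sign argument for $c\le 2p=\frac{2\alpha-2}{\alpha}$, then monotonicity via the log-derivative on $(2p,1]$, with equality $w(1)=1$ at the endpoint.

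Drop the ``crude bound'' sentence. Bounding the decreasing factor $(2-c)^p$ by its value at $c=1$ goes the wrong way: its supremum on $(2p,1]$ is $(2-2p)^p=(2/\alpha)^p>1$, so that bound fails.

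Your final check is simpler than you expect. After multiplying $\frac{1}{c-2p}-\frac{p+1}{c}-\frac{p}{2-c}$ by $c(c-2p)(2-c)$, the $c^2$ terms cancel and the numerator is $4p(1+p-c)$, which is positive for $c\le 1$. Since the inequality is linear rather than quadratic, checking the endpoints is legitimate.
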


\section{Analysis with a Minibatch}

This section, detailed in \cref{app:batch_section}, explains the changes that occur when we change the batch size. At each step $t$, instead of drawing a single data sample $(\featvec _t, \varepsilon_t)$, we draw i.i.d. samples $(\featvec _{t,i}, \varepsilon_{t,i}) \sim \mathcal D$ for $i \in \{1,\dots,\bs\}$ and define $g_t$ as the average of the $\bs$ gradient estimators: $g_{t} := \frac{1}{\bs}\sum_{i=1}^\bs(\featvec _{t,i}\featvec _{t,i}^\top (\vect_t -\vect_*)- \varepsilon_{t,i} \featvec _{t,i})$.

Deriving the new asymptotic equivalents follows the same methodology as in the $\bs=1$ case, as the $O(i^{-2\alpha})$ terms consistently vanish for the WSD schedule across both spectral regimes ($\alpha<2$ and $\alpha>2$). Ultimately, the introduction of a mini-batch size $\bs$ manifests through two main changes: a rescaled effective noise variance of $\sigma^2/\bs$, and a modified trace term $\frac{\tr(\Lambda)}{\bs}$ replacing $\tr(\Lambda)$.

The stability assumption becomes \cref{as:gamma_batch} with $0 < \gamma < \frac{2 \bs}{\tr(\Lambda) + L(\bs-1)}$, which defines a maximal batch size. Indeed, we can compute a new optimal base learning-rate $\gammastarb = \bs^{p}\, \gammastar_T$ with $p+\vartheta=1$, but it must stay under the bound $\frac{2 \bs}{\tr(\Lambda) + L(\bs-1)}$, which tends to $2/L$ as $b$ grows. Hence $b$ must be bounded by some $\bmax(T)$. 

Thus, we can define a critical batch size regarding the budget $\bs T$ similarly to \textcite{zhang2025how}. Below this threshold, the optimal risk depends on $\bs$ and $T$ only through the sample budget $\bs T$ (\cref{th:eta_star_batch}): at a fixed budget, increasing the batch size reduces the number of steps at no statistical cost, provided the base learning-rate is re-tuned as $\gammastarb$. \cref{th:eta_star_batch} is stated for a fixed $b$; the critical batch size is a heuristic extrapolation, since near $\widetilde b_{\mathrm{crit}}$ the optimal base learning-rate is no longer small and the equivalents need not apply.

\section{Limitations}

The first limitation of this work is the setting. Least squares is a smooth convex problem, whereas deep learning objectives are highly non-convex, and we study plain SGD rather than the adaptive methods used in practice. Two assumptions restrict the setting further. Slock's model (\cref{as:slock}) is chosen to decouple the coordinates rather than to describe real data. The source and capacity conditions (\cref{as:alphabeta}) impose exact power laws on the spectrum and initial distance; problems departing from this form fall outside our results.

A further limitation lies in the practical applicability of our asymptotic results, which require very large horizons $T$. Indeed, convergence can be significantly delayed for small $T$ due to competing finite-time terms. As illustrated in \cref{fig:wsd_equivalent}, horizons exceeding $10^5$ steps may be required depending on the power-law decay exponent of the spectrum.

This paper only studies linear cooldowns. A polynomial cooldown with $\delta > \alpha - 1$ would restore the optimal exponent for $\alpha > 2$ \parencite{bordelon2026theory}, and \textcite{li2026optimal} identify $\delta = 2\alpha - 1$ as optimal; we have not analysed this family.

\section{Conclusion}

Under Slock's model, we obtained the exact finite-time risk for the constant, linear and WSD schedules, and derived asymptotic equivalents of it when the horizon tends to $\infty$. We found that WSD is better than constant by a polynomial factor of the horizon $T$. For $\alpha >2$, linear cooldown is not optimal across all schedules; an exact analysis of polynomial cooldowns $\eta_t = (1-t/T)^\delta$ is a natural next step.

From the risk, we computed the best base learning-rate $\gammastar$ and the best linear cooldown length $c^*$, which is 1 if $\gamma$ is tuned, as \textcite{schaipp2025} found numerically. The ratio $\gammastar_T(1)/\gammastar_T(c)$ is independent of the horizon $T$, which makes learning-rate transfer across cooldown lengths explicit. 

Finally, we adapted the theorems for a batch size greater than 1 to see that batching is free up to some threshold: a critical batch size can be derived, which is the batch size beyond which the stability assumption no longer holds.

\subsection*{Acknowledgements}
This work has received support from the French government, managed by the National Research Agency,
under the France 2030 program with the reference “PR[AI]RIE-PSAI” (ANR-23-IACL-0008).

\subsection*{AI Use Statement}
 In this work, we used generative AI tools to assist in the writing of proofs,
 namely to review the mathematical proofs and help detect errors in them, and to
 provide feedback on the experimental methodology. We have not used generative AI
 tools to develop the theoretical model, formulate mathematical claims, provide
 critical ingredients for the proofs, propose hypotheses, implement methods,
 generate the synthetic data, or interpret results; cleaning or reformatting
 datasets and qualitative data analysis are not applicable to this work.
 Additionally, we used generative AI tools to improve the English phrasing
 throughout the paper, to proofread the manuscript for typographical and
 notational inconsistencies, and to draft short passages of text. We also used
 generative AI tools to reorganize the plotting code of the supplementary
 material and to numerically cross-check the constants of the asymptotic
 equivalents. We have reviewed all AI-assisted work: all proofs were conceived
 and written by the authors, every correction suggested by these tools was
 checked by the authors, and every suggested revision was assessed by the
 authors before being adopted. We take responsibility for the final content of
 this work, including text, claims or artifacts produced with the aid of
 generative AI.

\printbibliography

\appendix
\crefalias{section}{appendix}
\crefalias{subsection}{subappendix}
\crefalias{subsubsection}{subappendix}

\tableofcontents


\section*{Appendix}

The code and the results needed to reproduce every figure and
table of this paper are available at \url{https://github.com/noecolomban/Asymptotic_risk}.
Each figure is produced by a dedicated script \texttt{experiments/figX\_*.py};
all experiments were run on a single CPU.

\section{Notations and Constants}
\label{app:notations}

\subsection{Notations of the appendix}
Throughout the appendix, we will use the following notations:
\begin{itemize}
    \item $\bar\gamma = \gamma - \frac12 \gamma^2 \tr(\Lambda)$,
    \item $\bar\nu_k = \gamma \eta_k - \frac12 \gamma^2 \eta_k^2 \tr(\Lambda)$,
    \item $\Pi_{T,i} = \prod_{k=0}^{T-1} \left[ 1 - 2 \bar \nu_k Li^{-\alpha}\right]$,
    \item $\a = 2\gamma L$ and $\b = \gamma^2 L\tr(\Lambda)$,
    \item $\a_i = i^{-\alpha}\a$ and $\b_i = i^{-\alpha}\b$.
\end{itemize}

The constants $\a$ and $\b$ will be redefined for a batch size greater than 1.

\subsection{Constants of the optimal base learning-rate}
\label{app:constants}
The constants appearing in \cref{tab:gamma_star_combined} are
\[
\begin{gathered}
C_{\mathrm{cst}} := \frac{\omega\,\Gamma(\omega)}{\alpha\,\zeta(\alpha)\,2^{\omega-1}},
\quad
C_{\mathrm{wsd<2}} := \frac{\omega\,\Gamma(\omega)}{2\,\mathcal{W}(\alpha,c)\,(2-c)^{\omega}},
\quad
C_{\mathrm{wsd>2}} := \frac{8\,\omega\,\Gamma(\omega)\,c^{1/2}}{\alpha\sqrt{\pi}\,\zeta(\frac{\alpha}{2})\,(2-c)^{\omega}}.
\end{gathered}
\]

\section{Analysis with a Minibatch}
\label{app:batch_section}

We generalize our study to SGD with a batch size $\bs \ge 1$. Averaging $\bs$ gradients reduces the variance of each step, allowing a larger base learning-rate and hence fewer steps. We show that this costs nothing statistically up to a critical batch size: at fixed sample budget $\bs T$, the optimal risk depends on $\bs$ and $T$ only through their product.

\paragraph{New Setup.}
At each step $t$, instead of drawing a single data sample $(\featvec _t, \varepsilon_t)$, we draw i.i.d. samples $(\featvec _{t,i}, \varepsilon_{t,i}) \sim \mathcal D$ for $i \in \{1,\dots,\bs\}$ and define $g_t$ as the average of the $\bs$ gradient estimators:\\ $g_{t} := \frac{1}{\bs}\sum_{i=1}^\bs(\featvec _{t,i}\featvec _{t,i}^\top (\vect_t -\vect_*)- \varepsilon_{t,i} \featvec _{t,i})$. We still obtain an unbiased gradient estimator: $\mathbb{E}[g_t\mid \mathcal F_t] = \frac{1}{\bs}\sum_{i=1}^\bs \nabla f(\vect_t) = \nabla f(\vect_t)$.

\begin{theorem}
    Under \cref{as:alphabeta}, for a batch size $\bs$,
    \begin{equation}
    \left\{
    \begin{aligned}
        B_{t\mid T} &= \frac{1}{2} \sum_{i=1}^\infty \Delta Li^{-\alpha-\beta}  \prod_{j=0}^{t-1} \left[ 1-2\gamma\eta_j Li^{-\alpha}+\gamma^2\eta_j^2 \frac{\tr(\Lambda)}{\bs}L i^{-\alpha} + \gamma^2\eta_j^2\frac{\bs-1}{\bs}L^2i^{-2\alpha}\right]
        \\
        V_{t\mid T} &= \frac{\sigma^2}{2 \bs} \sum_{i=1}^\infty  \sum_{k=0}^{t-1} L^2i^{-2\alpha}\gamma^2\eta_k^2  \prod_{j=k+1}^{t-1} \left[ 1-2\gamma\eta_j Li^{-\alpha}+\gamma^2\eta_j^2 \frac{\tr(\Lambda)}{\bs}L i^{-\alpha} +\gamma^2 \eta_j^2\frac{\bs-1}{\bs}L^2i^{-2\alpha}\right]
    \end{aligned}
    \right.
    \end{equation}
\end{theorem}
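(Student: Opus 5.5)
The plan is to redo the second-moment computation behind \cref{th:risk} with the minibatch gradient. Under \cref{as:slock}, the diagonal of $M_t$ should again follow a decoupled scalar recursion. The bias and variance formulas then follow by unrolling it and substituting \cref{as:alphabeta}.

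Write $e_t=\vect_t-\vect_*$ and $\hat H_t:=\frac1\bs\sum_{i=1}^\bs \featvec_{t,i}\featvec_{t,i}^\top$. The update reads
\[
e_{t+1}=(I-\gamma\eta_t\hat H_t)e_t+\frac{\gamma\eta_t}{\bs}\sum_{i=1}^\bs\varepsilon_{t,i}\featvec_{t,i}.
\]
We take $e_{t+1}e_{t+1}^\top$, rotate by $Q$, and take conditional expectations given $\mathcal F_t$, using independence of the fresh samples from $\mathcal F_t$.

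The cross terms between the noise and $e_t$ vanish. For this we use $\E[\varepsilon_{t,i}\featvec_{t,i}\mid\featvec_{t,i}]=0$, which holds because $\varepsilon$, $s$ and $\mu$ are independent and centered. The noise term contributes
\[
\frac{\gamma^2\eta_t^2}{\bs^2}\sum_{i,i'}\E[\varepsilon_{t,i}\varepsilon_{t,i'}\featvec_{t,i}\featvec_{t,i'}^\top]=\frac{\gamma^2\eta_t^2\sigma^2}{\bs}\Lambda
\]
in the eigenbasis, since the off-diagonal pairs $i\neq i'$ vanish by independence and zero mean.

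The main step is the fourth-order term $\E[\hat H_t e_te_t^\top \hat H_t]$. We expand it as $\frac1{\bs^2}\sum_{i,i'}$.
\begin{itemize}
\item The $\bs$ diagonal pairs $i=i'$ each give the single-sample moment already used for \cref{eq:recurrence_moment}, namely $\tr(\Lambda)\Lambda\diag(m_t)$ in the eigenbasis.
\item The $\bs(\bs-1)$ pairs $i\neq i'$ are independent and each gives $HM_tH$, that is, $\Lambda M_t\Lambda$ in the eigenbasis.
\end{itemize}
The second item is the only genuinely new point. Its diagonal is $\lambda_i^2 m_{t,i}$, but it has off-diagonal entries $\lambda_i\lambda_j M_{t,ij}$. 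We must therefore check that the diagonal recursion still closes on $m_t$ alone. It does, because the other terms $-\gamma\eta_t(\Lambda M_t+M_t\Lambda)$ also only involve $M_{t,ii}$ on the diagonal.

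Collecting terms, the diagonal of $M_t$ obeys
\[
m_{t+1,i}=\Big[1-2\gamma\eta_t\lambda_i+\gamma^2\eta_t^2\frac{\tr(\Lambda)}{\bs}\lambda_i+\gamma^2\eta_t^2\frac{\bs-1}{\bs}\lambda_i^2\Big]m_{t,i}+\frac{\gamma^2\eta_t^2\sigma^2}{\bs}\lambda_i .
\]
Unrolling this affine scalar recursion from $0$ to $t$, exactly as in \cref{th:risk}, gives
\[
m_{t,i}=\Pi_i(0,t)m_{0,i}+\sum_{k<t}\frac{\gamma^2\eta_k^2\sigma^2}{\bs}\lambda_i\Pi_i(k+1,t),
\]
where $\Pi_i(k,t)$ denotes the product of the bracket over $j=k,\dots,t-1$.

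The risk is then $\mathcal R_{t\mid T}=\frac12\sum_i\lambda_i m_{t,i}$, by \cref{eq:risk_trace}. Substituting $\lambda_i=Li^{-\alpha}$ and $m_{0,i}=\Delta i^{-\beta}$ gives $\lambda_im_{0,i}=\Delta Li^{-\alpha-\beta}$ and $\lambda_i^2=L^2i^{-2\alpha}$, which are the stated formulas with the prefactor $\sigma^2/(2\bs)$.

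Two technical points remain for the infinite-dimensional setting. First, exchanging the infinite sum and expectation is justified by Tonelli, since all summands are nonnegative. Second, the finite-$d$ computation passes to the limit coordinatewise, because the recursion is decoupled and $\tr(\Lambda)=L\zeta(\alpha)<\infty$.
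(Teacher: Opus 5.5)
Your proposal is correct and matches the paper's proof essentially line by line. You expand the minibatch second moment over pairs of batch indices: same-sample pairs give $\frac{1}{\bs}\tr(\Lambda)\Lambda\diag(m_t)$, distinct pairs give $\frac{\bs-1}{\bs}\Lambda M_t\Lambda$ (whose diagonal is $\Lambda^2 m_t$), and the noise gives $\frac{\gamma^2\eta_t^2\sigma^2}{\bs}\Lambda$; unrolling the resulting diagonal recursion then yields the stated formulas. Your explicit check that the off-diagonal entries of $M_t$ never feed the diagonal recursion, and your remarks on the infinite-dimensional limit, spell out what the paper leaves implicit when it simply ``takes the diagonal vector only.''
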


If $\bs=1$, we get the same formulas as in \cref{th:risk}.
To ensure the positiveness and monotonicity of the different terms in the batch setting, we must replace \cref{as:eta}. Specifically, we require $ 0\leq 1-2\gamma\eta_j Li^{-\alpha}+\gamma^2\eta_j^2 \frac{\tr(\Lambda)}{\bs}L i^{-\alpha} +\gamma^2 \eta_j^2\frac{\bs-1}{\bs}L^2i^{-2\alpha} <1$. By \cref{lemma:stability} with $S_i := \frac{\tr(\Lambda)}{\bs} + \frac{\bs-1}{\bs}Li^{-\alpha}$ ($S_i>Li^{-\alpha}$ since $\tr(\Lambda)>L$), this is equivalent to $\gamma < 2/S_i$ for every $i \geq 1$. As $(S_i)_{i\geq1}$ is non-increasing,
the binding constraint is $i=1$, and the assumption simplifies to:
\begin{assumption}[Stability and Monotonicity Condition] \label{as:gamma_batch}
We assume that the base learning-rate $\gamma$ satisfies:
\begin{equation}\label{eq:gamma_batch}
    0 < \gamma < \frac{2 \bs}{\tr(\Lambda) + L(\bs-1)}.
\end{equation}
\end{assumption}

Deriving the new asymptotic equivalents follows the same methodology as in the $\bs=1$ case, as the $O(i^{-2\alpha})$ terms consistently vanish for the WSD schedule across both spectral regimes ($\alpha<2$ and $\alpha>2$). Ultimately, the introduction of a mini-batch size $\bs$ manifests through two main changes: a rescaled effective noise variance of $\sigma^2/\bs$, and a modified trace term $\frac{\tr(\Lambda)}{\bs}$ replacing $\tr(\Lambda)$.

\begin{theorem}\label{th:batch}
Under \cref{as:alphabeta} and \cref{as:gamma_batch}, as $T \to \infty$, with a batch size $\bs$:

\subparagraph{Constant schedule:}
\begin{equation}
\begin{aligned}
    B_T &   \sim \widetilde B_T := \frac{L\Delta}{2\alpha} \frac{\Gamma\left(\omega\right)}{(2\gamma(1-\frac{\gamma \tr(\Lambda)}{2 \bs})L T)^{\omega}},
    \\
    V_T &\sim \widetilde V_T:= \frac{\gamma L\sigma^2}{4b} \sum_{i=1}^\infty \frac{1}{\left(1-\frac{\gamma \tr(\Lambda)}{2 \bs} \right)i^{\alpha}-\frac{L\gamma}{2}\frac{\bs-1}{\bs}}.
\end{aligned}
\end{equation}

\subparagraph{WSD schedule:}

\begin{equation}
\begin{aligned}
    B_T &\sim \widetilde B_T := \frac{L\Delta}{2\alpha}
    \frac{\Gamma\!\left(\omega\right)}
    {\left[\gamma L T\!\left(2 - c - \frac{\gamma\tr(\Lambda)}{\bs}\!\left(1 - \frac{2c}{3}\right)\right)
    \right]^{\omega}},
    \\[6pt]
    V_T &\sim \widetilde V_T:=
    \begin{cases}
    \displaystyle
    \begin{aligned}
    &\Bigg(
      \frac{(L\gamma)^{\frac{1}{\alpha}}\sigma^2}{2\alpha \bs \left( \frac{\gamma \tr(\Lambda)}{\bs} - 2 \right)}
      \Gamma\!\left(\frac{\alpha-1}{\alpha}\right)
      \!\left[\kappa_1^{\frac{1-\alpha}{\alpha}} - \kappa_2^{\frac{1-\alpha}{\alpha}}\right]
      \\
      &+ \frac{L^2\gamma^2\sigma^2}{2\alpha \bs}
      \Gamma\!\left(2-\frac{1}{\alpha}\right)
      (\gamma L)^{\frac{1-2\alpha}{\alpha}}
      \mathcal{I}^{(\bs)}_{\alpha,\gamma}\,c^{\frac{1-\alpha}{\alpha}}
    \Bigg)T^{\frac{1-\alpha}{\alpha}}
    \end{aligned}
    & \text{if }\alpha<2,\\[14pt]
    \displaystyle
    \frac{\sqrt{\pi}}{8}\,L^{1/2}\gamma^{1/2}\frac{\sigma^2}{\bs}\,
    \zeta\!\left(\frac{\alpha}{2}\right)(cT)^{-1/2},
    & \text{if }\alpha>2.
    \end{cases}
\end{aligned}
\end{equation}

where we define
\[
\begin{array}{lll}
    \mathcal{I}^{(\bs)}_{\alpha,\gamma} := \int_0^1 x^{\frac{2-2\alpha}{\alpha}} \left(1-\frac{\gamma\tr(\Lambda)}{3b}x\right)^{\frac{1-2\alpha}{\alpha}}dx, &
    \kappa_1 :=   2 - c - \frac{\gamma \tr(\Lambda)}{\bs} \left( 1 - \frac{2c}{3} \right), &
    \kappa_2 := c \left(1-\frac{\gamma\tr(\Lambda)}{3b}\right).
\end{array}
\]
\end{theorem}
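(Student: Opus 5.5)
The plan is to reduce \cref{th:batch} to the $\bs=1$ analysis of \cref{th:constant,th:wsd} by rewriting the per-coordinate factor. Write $a_j := \gamma\eta_j L i^{-\alpha}$. The factor in the batch formula is
\[
1-2a_j+\gamma^2\eta_j^2\tfrac{\tr(\Lambda)}{\bs}Li^{-\alpha}+\tfrac{\bs-1}{\bs}a_j^2 .
\]
It has exactly the $\bs=1$ form with $\tr(\Lambda)$ replaced by $\tr(\Lambda)/\bs$, up to an extra term $\tfrac{\bs-1}{\bs}L^2\gamma^2\eta_j^2 i^{-2\alpha}$. Likewise, the variance prefactor is $\sigma^2/\bs$.

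Consider first the constant schedule. Every product is then a pure power $\rho_i^{t}$ with $\rho_i = 1-\bar a_i$ and $\bar a_i := 2\gamma(1-\tfrac{\gamma\tr(\Lambda)}{2\bs})Li^{-\alpha}-\gamma^2\tfrac{\bs-1}{\bs}L^2i^{-2\alpha}$. \cref{as:gamma_batch} together with \cref{lemma:stability} gives $0\le \rho_i<1$ uniformly in $i$.

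For the variance, summing the geometric series in $k$ gives
\[
V_T=\frac{\sigma^2}{2\bs}\sum_i L^2\gamma^2 i^{-2\alpha}\frac{1-\rho_i^T}{1-\rho_i}.
\]
Monotone convergence as $T\to\infty$ then gives exactly the stated series. Dividing numerator and denominator by $2\gamma L i^{-\alpha}$ shows the $i^{-2\alpha}$ term survives here, which explains its appearance in $\widetilde V_T$.

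For the bias, write $\rho_i^T=\exp(T\log(1-\bar a_i))$. Only indices $i\asymp T^{1/\alpha}$ contribute, since the other indices are exponentially damped or negligible in the sum. On these indices $i^{-2\alpha}T = O(T^{-1})$, so the $i^{-2\alpha}$ correction and the $\log$ expansion error are $o(1)$ in the exponent. A Riemann-sum/Laplace argument (as in \cref{th:constant}, substituting $u=i^{-\alpha}T$) then yields $\frac{L\Delta}{2\alpha}\Gamma(\omega)(2\gamma(1-\tfrac{\gamma\tr(\Lambda)}{2\bs})LT)^{-\omega}$.

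For WSD, I would repeat the proof of \cref{th:wsd} with $\tr(\Lambda)\to\tr(\Lambda)/\bs$ and show that the $i^{-2\alpha}$ term is asymptotically negligible. Two cases arise.
\begin{itemize}
\item \emph{Bias.} $\log\prod_j(\cdot) = -\sum_j \bigl(2a_j-\gamma^2\eta_j^2\tfrac{\tr(\Lambda)}{\bs}Li^{-\alpha}\bigr) + O(\sum_j a_j^2)$. Riemann sums of $\eta_j$ and $\eta_j^2$ over the WSD profile give $T(1-c/2)$ and $T(1-2c/3)$, which produces $\kappa_1$. The remainder $O(Ti^{-2\alpha})$ vanishes on the relevant range $i\asymp T^{1/\alpha}$.
\item \emph{Variance.} I would split into the constant-phase part $V^A_T$ and the cooldown part $V^B_T$, treating the two regimes separately.
\begin{itemize}
\item For $1<\alpha<2$, the sum is dominated by $i\asymp T^{1/\alpha}$. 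Replacing it by an integral in $u=i^{-\alpha}T$ yields the $\Gamma$ and $\kappa$ terms from $V^A_T$. For $V^B_T$, the change of variables $x=(T-k)/(cT)$ yields the integral $\mathcal I^{(\bs)}_{\alpha,\gamma}$. Again, the $i^{-2\alpha}$ contributions inside the products are $O(T^{-1})$ relative corrections there.
\item For $\alpha>2$, the low indices dominate. Fixing $i$, the cooldown tail $\sum_k \eta_k^2\prod_{j>k}(\cdot)$ behaves like a Gaussian-type integral $\int x^2 e^{-a_i x^2 T/c}$-scale sum. Here $\prod_{j>k}(1-2\gamma L i^{-\alpha}(T-j)/(cT))\approx \exp(-\gamma Li^{-\alpha}(T-k)^2/(cT))$, which gives $\frac{\sqrt\pi}{8}(L\gamma)^{1/2}i^{-\alpha/2}(cT)^{-1/2}$ per coordinate. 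Summing over $i$ gives $\zeta(\alpha/2)$, with dominated convergence justified by a uniform bound $\lesssim i^{-\alpha/2}$. At fixed $i$ the step sizes near the end are $O(T^{-1/2})$, so both the trace and $i^{-2\alpha}$ quadratic terms are lower order.
\end{itemize}
\end{itemize}

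The main obstacle is uniform control needed to interchange the $T\to\infty$ limit with the infinite sum over $i$. This requires dominating bounds such as $1-\bar a_i\le \exp(-c'\gamma Li^{-\alpha})$, which hold uniformly by \cref{as:gamma_batch} since $S_i$ is decreasing. It also requires showing that the quadratic perturbation does not change the boundary contributions at moderate $i$, especially in the $1<\alpha<2$ variance near the crossover. I would first try bounding the ratio of the perturbed and unperturbed products by $\exp(C\gamma^2 L^2 i^{-2\alpha}T)$ on $i\ge T^{1/(2\alpha)}$, and treat small $i$ with the crude geometric bound.
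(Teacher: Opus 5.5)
Your proposal is correct and follows the paper's route. You replace $\tr(\Lambda)$ by $\tr(\Lambda)/b$ and $\sigma^2$ by $\sigma^2/b$ in the $b=1$ analysis, and you show that the extra $\frac{b-1}{b}\gamma^2\eta_j^2L^2 i^{-2\alpha}$ term is negligible on the dominant modes (indices $i\asymp T^{1/\alpha}$ for the biases and the $\alpha<2$ variance, the last $O(\sqrt{T})$ steps at fixed $i$ for the $\alpha>2$ variance). You keep it only in the constant-schedule variance, which is computed exactly by the geometric sum. The one difference is that you derive the equivalents by direct Laplace/Riemann-sum asymptotics in $T$ rather than via the $z$-transform and \cref{th:tauberian}; this is the same reading the paper itself uses in \cref{lemma:uniform_epsilon}. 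You are also more explicit than the paper about the uniform domination needed to exchange the limit with the sum over $i$.
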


The proofs are deferred to \cref{sec:batch_asymptotics}.

\subsection{Optimal Base learning-rate $
\gammastarb$}

The asymptotically optimal base learning-rate with a batch size of $\bs$, denoted $\gammastarb$, is derived similarly to the case with a batch size of $1$, but with the variance term $\sigma^2$ replaced by $\sigma^2/\bs$. Letting $\widetilde\gamma^*_T$ denote the optimal learning-rate obtained in \cref{sec:eta_star} ($\gammastarb[1]=\gammastar_T$), we have:
\begin{equation}\label{eq:gamma_b_star}
    \gammastarb = \bs^{p}\cdot \gammastar_T,
\end{equation}
where the exponent $p$ is given by:
\[
p=\begin{cases} 
    \frac{\alpha}{2\alpha+\beta-1} & \text{for the constant schedule,} \\
    \frac{\alpha}{\alpha+\beta} & \text{for WSD with } c>0 \text{ and } \alpha < 2, \\
    \frac{2\alpha}{3\alpha+2\beta-2} & \text{for WSD with } c>0 \text{ and } \alpha > 2.
\end{cases}
\]
Observe that $p+\vartheta = 1$ where $\gammastar_T = \Theta(T^{-\vartheta})$. Therefore $\gammastarb = \Theta(b \cdot (bT)^{-\vartheta})$: with a fixed budget, the optimal learning-rate is linear in $\bs$.

\begin{theorem}\label{th:eta_star_batch} 
Let $\gammastarb$ be the batch-scaled learning-rate as defined in \cref{eq:gamma_b_star}.
\begin{equation} 
 \inf_\gamma \risk_T(\gamma) \sim \risk_T(\gammastarb) ,
  \qquad T \to \infty.
\end{equation}
\end{theorem}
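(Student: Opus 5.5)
We follow the same route as \cref{th:gammastar}, now with the batch equivalents of \cref{th:batch}. Freezing the correction factors $1-\gamma\tr(\Lambda)/(2\bs)$ and $1-\gamma\tr(\Lambda)/(3\bs)$ at $1$, and dropping the $\frac{\bs-1}{\bs}L\gamma/2$ shift in the constant-schedule variance, the frozen asymptotic risk takes the form
\[
\frrisk_T(\gamma) = C_B\gamma^{-\omega}T^{-\omega} + \frac{C_V}{\bs}\gamma^{v}T^{-r}.
\]
Here $C_B,C_V,v,r$ are the same as in \cref{eq:frrisk}. Indeed, $\sigma^2$ is simply replaced by $\sigma^2/\bs$, and in the $\alpha<2$ WSD case the second term $\frac{L^2\gamma^2}{\bs}(\gamma L)^{(1-2\alpha)/\alpha}$ is again of order $\gamma^{1/\alpha}/\bs$.

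Minimizing this explicit function gives $\gamma\propto(\bs\,C_B/C_V)^{1/(v+\omega)}T^{-(\omega-r)/(v+\omega)}$. This is exactly $\bs^{p}\gammastar_T$ with $p=1/(v+\omega)$. One checks the three cases: $p=\alpha/(2\alpha+\beta-1)$ for $v=1$, $p=\alpha/(\alpha+\beta)$ for $v=1/\alpha$, and $p=2\alpha/(3\alpha+2\beta-2)$ for $v=1/2$. The identity $p+\vartheta=1$ follows from $v+r=1$. Since $\bs$ is fixed, $\gammastarb=\Theta(T^{-\vartheta})\to0$. In particular, it eventually satisfies \cref{as:gamma_batch}, and the frozen corrections are asymptotically exact along it.

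The core step is to show that $\inf_\gamma\risk_T(\gamma)\sim\risk_T(\gammastarb)$. The inequality $\inf\le\risk_T(\gammastarb)$ is trivial. For the reverse, we transfer the batch analogue of \cref{lemma:eta_star_to_0}: any near-minimizer $\gamma_T$ of the exact risk must vanish. If $\gamma_T\ge\gamma_0>0$ along a subsequence, then the variance stays bounded below by order $\gamma_0$ for the constant schedule, or by order $\gamma_0^{v}T^{-r}$ with $T^{-r}\gg$ the optimal rate $T^{-\omega/(v+\omega)}$ for WSD. Near the stability boundary the risk only grows, by monotonicity of the factors from \cref{lemma:stability}. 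Likewise, $\gamma_T$ cannot vanish faster than $T^{-\vartheta}$ by a diverging factor, since otherwise the bias term $\gamma^{-\omega}T^{-\omega}$ would dominate the optimum. Hence it suffices to compare the exact and frozen risks on windows $\gamma=uT^{-\vartheta}$ with $u\in[u_0,u_1]$.

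On such windows we need the equivalents of \cref{th:batch} to hold \emph{uniformly} in $u$, i.e.\ $\risk_T(uT^{-\vartheta})=\frrisk_T(uT^{-\vartheta})(1+o(1))$ uniformly on compacts. Granting this, both sides reduce to $T^{-\omega(1-\vartheta)}\phi(u)$, where $\phi(u)=C_Bu^{-\omega}+(C_V/\bs)u^{v}$ is continuous and strictly convex in $\log u$. The infimum over the window is then attained asymptotically at the unique minimizer of $\phi$, namely $\gammastarb T^{\vartheta}$, which gives the claim.

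The main obstacle is this uniformity in the scaling regime $\gamma=\gamma_T\to0$. \Cref{th:batch} is stated for fixed $\gamma$. I would redo the $z$-transform and Riemann-sum estimates with $\gamma L T$ as the large parameter, noting that the sums over $i$ depend on $\gamma$ and $T$ mainly through $\gamma T$ plus corrections of order $\gamma\tr(\Lambda)/\bs$. The extra $\frac{\bs-1}{\bs}\gamma^2L^2i^{-2\alpha}$ factor contributes a relative error $O(\gamma)$ that vanishes as $\gamma\to0$. I expect the $\alpha<2$ WSD variance, with its two competing contributions, to require the most care. The approach is to rely on the same dominated-convergence argument used for \cref{th:gammastar} in \cref{app:gammastar}, checking that every bound there depends on $\bs$ only through $\tr(\Lambda)/\bs$ and $\sigma^2/\bs$.
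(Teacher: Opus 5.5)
Your frozen batch risk, the values $p=1/(v+\omega)$ in the three cases, and $p+\vartheta=1$ from $v+r=1$ are all correct. The overall plan, redoing \cref{th:gammastar} with $\sigma^2\to\sigma^2/\bs$ and $\tr(\Lambda)\to\tr(\Lambda)/\bs$, is what the paper intends.

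The gap is the localization step that reduces everything to windows $\gamma=uT^{-\vartheta}$, $u\in[u_0,u_1]$. \Cref{lemma:eta_star_to_0} concerns minimizers of the asymptotic risk $\eqrisk_T$. It says nothing about near-minimizers of the exact $\risk_T$, which is what you need. You then justify the localization with the asymptotic forms themselves: $V_T$ ``of order $\gamma_0^{v}T^{-r}$'' for $\gamma\ge\gamma_0$, and $B_T$ ``of order $\gamma^{-\omega}T^{-\omega}$'' for small $\gamma$. But \cref{th:batch} is pointwise in $\gamma$, and the uniformity you plan to prove holds only on compacts in $u$. So these forms are unavailable exactly where you use them:
\begin{itemize}
\item not uniformly on the non-compact range up to the stability bound $2\bs/(\tr(\Lambda)+L(\bs-1))$, and \cref{lemma:stability} gives no monotonicity of the risk in $\gamma$;
\item not when $\gamma_T T$ stays bounded, since then $B_T$ is of order $1$;
\item not at all in the regime $T^{-\vartheta}\ll\gamma_T\to0$, which your argument never excludes.
\end{itemize}
Ruling out $\gamma_T\ge\gamma_0$ only gives $\gamma_T\to0$, not $\gamma_T=O(T^{-\vartheta})$. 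As written, the reduction to windows is circular.

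The missing ingredient is the batch analogue of \cref{lemma:bounds}: explicit, non-asymptotic lower bounds on the exact bias and variance, valid for every admissible $\gamma$. These transfer because the new term $\gamma^2\eta_j^2\frac{\bs-1}{\bs}L^2i^{-2\alpha}\ge0$ only enlarges each factor. Hence $\prod_j[\cdots]\ge\exp(-4Li^{-\alpha}\sum_j\bar\nu_j)$ with $\bar\nu_j\le\gamma\eta_j$ still holds for $i\ge i_{\min}$. This yields:
\begin{itemize}
\item $B_T\ge\frac12\sum_{i\ge i_{\min}}L\Delta i^{-\alpha-\beta}e^{-4Li^{-\alpha}\gamma T}$, hence $\gamma_T\gtrsim T^{-\vartheta}$;
\item the cooldown-block bounds $V_T\gtrsim\frac{\sigma^2}{\bs}\gamma^{1/\alpha}T^{1/\alpha-1}$ for $\alpha<2$ and $V_T\gtrsim\frac{\sigma^2}{\bs}\gamma^{1/2}T^{-1/2}$ for $\alpha>2$, hence $\gamma_T\lesssim T^{-\vartheta}$ for WSD.
\end{itemize}
For the constant schedule, the last-step bound used in the paper only gives $\gamma_T\to0$. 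Your window route needs the stronger bound $V_T\gtrsim\gamma$ once $\gamma T\gtrsim1$. It follows from $\sum_{k<T}\rho_i^k\ge(1-e^{-1})\min\{T,(1-\rho_i)^{-1}\}$ together with $1-\rho_i\le2\gamma Li^{-\alpha}$.

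Alternatively, use the paper's packaging, which needs only $\gamma_T\to0$ and $\gamma_TT\to\infty$. Prove $\risk_T(\gamma_T)\sim\eqrisk_T(\gamma_T)$ along every such sequence, which is the batch version of \cref{lemma:uniform_epsilon}. There the new term enters $X_{T,i}$ of \cref{lemma:error_r} only as $O(\gamma^2Ti^{-2\alpha})=O(\gamma\tau i^{-\alpha})$. Then conclude via \cref{lemma:eta_star_to_0}, item 3, instead of minimizing $\phi$ on windows.
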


Therefore, the optimal risk depends on $\bs$ and $T$ only through the total sample budget $\bs T$: at fixed $\bs T$, batching incurs no statistical penalty, so the batch size can be increased to shorten training at no cost in risk.
This requires re-tuning the base learning-rate. The equivalence holds as long as the resulting $\gamma$ remains admissible under \cref{as:gamma_batch}, which caps it by a constant (which increases with $\bs$ but only up
to $2/L$) and hence defines a critical batch size beyond which the trade-off degrades. This threshold is heuristic: \cref{th:eta_star_batch} holds for a fixed $b$, and near the threshold $\widetilde\gamma^*_{T,b}$ is no longer small.

\subsubsection{Critical Batch Size}

We define the maximal batch size $\bmax(T)$ as the largest batch size $\bs \ge 1$ for which the optimal base learning-rate $\gammastarb$ satisfies the stability condition of \cref{as:gamma_batch}: 
$\bmax(T) = \sup\big\{\bs\geq 1;\quad \bs^p \gammastar \leq \frac{2 \bs}{\tr(\Lambda)+L(\bs-1)} \big\}.$ 
Since $\lim_{\bs\to\infty}\frac{2 \bs}{\tr(\Lambda)+L(\bs-1)} = \frac{2}{L}$, the stability threshold approaches a constant for large batch sizes. Therefore, when $\bmax(T) \gg 1$, it can be computed numerically or approximated using $\bmax(T)\sim \left( \frac{2}{L \gammastar_T} \right)^{1/p}.$

\paragraph{Empirical observations.} At fixed sample
budget, the risk depends on $\bs$ and $T$ only through $\bs T$, so one may trade steps for parallelism at no
statistical cost. This cannot hold for arbitrarily large $\bs$, since $\gammastarb$ grows with $\bs$ while
\cref{as:gamma_batch} caps the admissible learning-rate. We therefore locate the batch size at which
this trade-off breaks down, from two complementary angles: at a fixed sample budget $\bs T$, by tracking
the excess risk; and at a fixed target risk $r$, by tracking the number of steps needed to reach it.
In both settings, scaling is near-perfect up to a threshold close to $\bmax$. It is not exact:
$\gammastarb$ minimizes the asymptotic risk $\widetilde{\mathcal R}_T$, and \cref{th:eta_star_batch}
only gives equivalence as $T\to\infty$, so at finite $T$ (all the more as $\bs$ grows and $T=N/\bs$
shrinks) a small residual gap remains, visible as the slow drift above $1$ before the threshold.

\begin{figure}[H]
    \centering
    \begin{subfigure}[b]{0.48\textwidth}
        \centering
        \includegraphics[width=\linewidth]{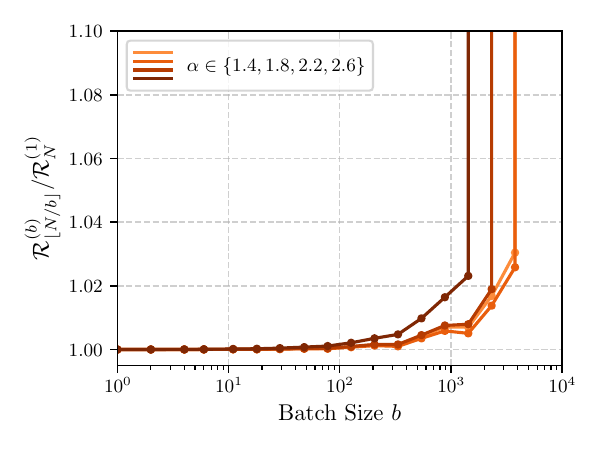}
        \caption{Fixed sample budget $N=\bs T$.}
        \label{fig:batch_risk}
    \end{subfigure}
    \hfill
    \begin{subfigure}[b]{0.48\textwidth}
        \centering
        \includegraphics[width=\linewidth]{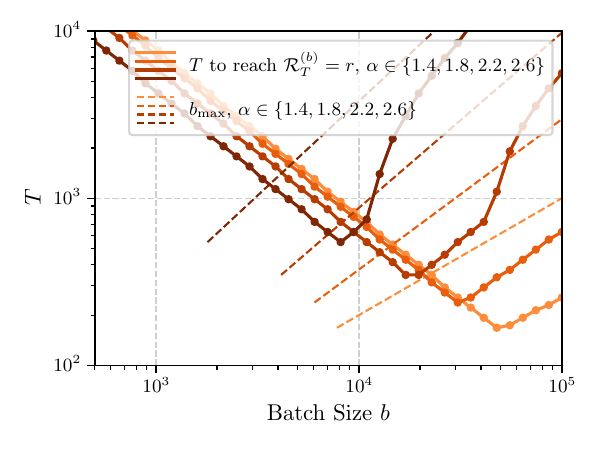}
        \caption{Fixed target risk $r$.}
        \label{fig:steps_fixed_risk}
    \end{subfigure}
    \caption{Effect of the batch size $\bs$ under a linear decay schedule with the asymptotically optimal base learning-rate $\gammastarb$, parameters $L=\Delta=\sigma=1$, $\beta=2$,
    $d=100$.\\
    \labelcref{fig:batch_risk}: Optimal excess risk as a function of $\bs$ at fixed total sample budget
    $N:=\bs T = 200000$, normalized by its value at $\bs=1$.\\
    \labelcref{fig:steps_fixed_risk}: Number of steps $T$ needed to reach a fixed risk
    $\mathcal{R}^{(\bs)}_T = r := 10^{-5}$, as a function of $\bs$. The dashed lines show the horizon
    $T$ at which $\bs=\bmax(T)$. At a fixed target risk $r$ (\cref{fig:steps_fixed_risk}), the required number of steps initially decreases inversely proportionally to $\bs$. This scaling stops for a batch size $\bs$ a few times the theoretical boundary $\bs=\bmax(T)$ (dashed lines), beyond which the trend reverses and $T$ grows with $\bs$: the learning-rate prescribed by the $\bs^p$ rule is then no longer admissible (\cref{eq:gamma_b_star}).}
    \label{fig:batch_crit}
\end{figure}

At a fixed sample budget $N:=\bs T$ (\cref{fig:batch_risk}), the excess risk remains flat for small batch sizes, and the risk only begins to degrade when the batch size approaches a theoretical stability limit. The relevant threshold is no longer $\bmax(T)$, which is a function of a fixed horizon, but a critical batch size $\bcrit(N)$, function of a budget $\bs T$, defined as the maximal $\bs$ such that $\bs \leq \bmax(N/\bs)$. We can compute it numerically \cref{tab:bcrit}. 
\begin{table}[H] 
    \centering
    \begin{tabular}{cc}
        \hline
        $\alpha$ & $\bcrit(N)$ \\
        \hline
        $1.4$ & $2653$ \\
        $1.8$ & $2245$ \\
        $2.2$& $1633$ \\
        $2.6$ & $1021$ \\
        \hline
    \end{tabular}
    \vspace{0.2cm}
    \caption{Critical batch size $\bcrit$ for a sample budget of $N = 2\times 10^5$ and various exponents $\alpha$, computed by testing all values until the limit $\bmax(N/b)$ is exceeded.}
    \label{tab:bcrit}
\end{table} 
We then observe that beyond $\bcrit$, the ratio does not stay close to $1$ but explodes, as \cref{as:gamma_batch} is no longer satisfied. We can provide an approximation for $\bcrit(N)$. Indeed, we can write $\gammastar_T = \gammastar_1 T^{-\vartheta}$, so $\bmax \sim \left( {2}/(L \gammastar_1) \right)^{1/p} T^{\vartheta/p}$ as $T\to \infty$. Furthermore, knowing $p+\vartheta =1$,
\[
\bs \leq \left( \frac{2}{L \gammastar_1} \right)^{1/p} \left(\frac{N}{\bs}\right)^{\vartheta/p} \iff \bs^{1+\vartheta/p} \leq \left( \frac{2}{L \gammastar_1} \right)^{1/p} {N}^{\vartheta/p} \iff \bs \leq\frac{2N^\vartheta}{L \gammastar_1}.
\]
Thus, provided that $N/\bs \gg 1$, $\bcrit(N)$ is approximately $\frac{2N^\vartheta}{L \gammastar_1}$. Note that, for the regime $1<\alpha<2$ and $\beta<\alpha+1$, we recover the same exponent as \textcite{zhang2025how}: they write $B^*\eqsim D^{1-a/\min\{b,2a+1\}}$ with $a=\alpha$ and $b=\beta+\alpha$, which corresponds in our notation to $N^{1-\alpha/\min\{\alpha+\beta,2\alpha+1\}} = N^{1-\frac{\alpha}{\alpha+\beta}}=N^{\frac{\beta}{\alpha+\beta}}$. This is equivalent to $N^\vartheta$.

\section{Optimal Learning-Rate}
\label{app:gammastar}
We keep the definition $\frrisk_T(\gamma) =  C_B \gamma^{-
\omega}T^{-\omega} + {C_V \gamma^{v}T^{-r}}$ of \cref{eq:frrisk}.

\begin{lemma} \label{lemma:eta_star_to_0}
For a constant or WSD schedule:
\begin{enumerate}
    \item Every sequence of minimizers of $\gamma\mapsto \eqrisk_T(\gamma)$ over the interval $(0, \gammax)$ tends to $0$ as $T\to \infty$,
    \item $\displaystyle \gammastar_T := \argmin_{\gamma \in (0, \gammax)} \frrisk_T(\gamma) = \Theta(T^{-\vartheta})$ with $\vartheta:= \frac{\omega - r}{v+\omega} \in (0,1)$,
    \item $\displaystyle \eqrisk_T(\gammastar_T) \;\sim\; \inf_{\gamma \in (0, \gammax)} \eqrisk_T(\gamma),
      \qquad T \to \infty$.
\end{enumerate}
\end{lemma}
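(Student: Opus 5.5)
The plan is to work directly with the explicit forms of $\eqrisk_T(\gamma)=\widetilde B_T(\gamma)+\widetilde V_T(\gamma)$ from \cref{th:constant,th:wsd}. Each is the product of a power of $T$ and a function of $\gamma$ that is continuous and positive on $(0,\gammax)$. In every case the bias is $g_B(\gamma)T^{-\omega}$ with $g_B(\gamma)\to\infty$ as $\gamma\to0$. The variance is $g_V(\gamma)T^{-r}$, with $r=0$ for the constant schedule and $r\in\{1-1/\alpha,1/2\}$ for WSD, and $g_V(\gamma)\to0$ like $C_V\gamma^v$ as $\gamma\to0$. Near $0$ the frozen functions satisfy $g_B\sim C_B\gamma^{-\omega}$ and $g_V\sim C_V\gamma^v$, because the correction factors $\kappa_1,\kappa_2$, $2-\gamma\tr(\Lambda)$ and $\mathcal I_{\alpha,\gamma}$ are continuous at $\gamma=0$ and their limits give the frozen constants. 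Near $\gammax$, the only delicate point is the constant schedule, where $V_\infty$ blows up; for WSD, $\kappa_1,\kappa_2$ stay positive, so both functions remain bounded below by positive constants.

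\textbf{Step 2 (explicit minimizer).} I would prove point 2 first, since it is elementary. The function $\frrisk_T(\gamma)=C_B\gamma^{-\omega}T^{-\omega}+C_V\gamma^vT^{-r}$ is strictly convex in $\log\gamma$. Its unique critical point solves $\omega C_B\gamma^{-\omega-1}T^{-\omega}=vC_V\gamma^{v-1}T^{-r}$, which gives $\gamma=(\omega C_B/(vC_V))^{1/(v+\omega)}T^{-(\omega-r)/(v+\omega)}$. Using $v+r=1$, this yields $\vartheta=1-1/(v+\omega)$. Then $\vartheta\in(0,1)$ amounts to $v+\omega>1$, that is $\omega>r$. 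This holds because $\omega>1\ge r$ when $\beta>1$. In the WSD case with $\beta\le1$, I would use instead that $\omega>1-1/\alpha$, since $\beta>0$; this settles the case $\alpha<2$. For $\alpha>2$, the same bound gives $\omega>1-1/\alpha>1/2=r$, as noted under \cref{tab:gamma_star_combined}. For the constant schedule, $r=0<\omega$. Since the critical point tends to $0$, it lies in $(0,\gammax)$ for $T$ large.

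\textbf{Step 1 (minimizers vanish).} Suppose a sequence of minimizers $\gamma_T$ of $\eqrisk_T$ has a subsequence with $\gamma_T\ge\varepsilon>0$. On $[\varepsilon,\gammax)$ the functions $g_B$ and $g_V$ are bounded below by positive constants, so $\eqrisk_T(\gamma_T)\ge c_\varepsilon T^{-\min(\omega,r)}=c_\varepsilon T^{-r}$; for the constant schedule this reads $\ge c_\varepsilon>0$. On the other hand, evaluating at $\gammastar_T$ and using $g_B\sim C_B\gamma^{-\omega}$, $g_V\sim C_V\gamma^v$ gives $\eqrisk_T(\gammastar_T)=\Theta(T^{-\omega/(v+\omega)})$. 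Because $\omega>r$, the exponent satisfies $\omega/(v+\omega)>r$: indeed $\omega>r(v+\omega)$ is equivalent to $\omega(1-r)>rv$, i.e.\ $\omega v>rv$, i.e.\ $\omega>r$. This contradicts minimality, so every sequence of minimizers tends to $0$.

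\textbf{Step 3 (equivalence of infima).} Fix $\delta>0$ and choose $\gamma_0$ such that the ratios $g_B/(C_B\gamma^{-\omega})$ and $g_V/(C_V\gamma^v)$ lie in $[1-\delta,1+\delta]$ on $(0,\gamma_0)$. This gives $(1-\delta)\frrisk_T\le\eqrisk_T\le(1+\delta)\frrisk_T$ there. By Step 1, for large $T$ the infimum of $\eqrisk_T$ is approached only inside $(0,\gamma_0)$; Step 1 excludes minimizers outside, and the same lower-bound argument excludes near-infimal points. Hence $\inf\eqrisk_T\ge(1-\delta)\inf\frrisk_T=(1-\delta)\frrisk_T(\gammastar_T)$ and $\eqrisk_T(\gammastar_T)\le(1+\delta)\frrisk_T(\gammastar_T)$. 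Letting $\delta\to0$ gives point 3.

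The main obstacle I expect is the uniform control near $\gammax$ and the existence of minimizers. For the constant schedule, $\eqrisk_T\to\infty$ at $\gammax$, so this is fine. For WSD I need to check that $\kappa_1$, $\kappa_2$ and $\mathcal W_\gamma$ stay positive and bounded away from $0$ up to $\gammax$, so that the lower bound $c_\varepsilon T^{-r}$ is uniform. If minimizers fail to exist, I would phrase Step 1 for near-minimizing sequences, which the argument above already handles.
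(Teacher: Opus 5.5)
Your proposal is correct and follows essentially the paper's proof. Both arguments use a uniform lower bound $z_\epsilon T^{-r}$ on the variance over $[\epsilon,\gammax)$, a vanishing competitor whose risk is $o(T^{-r})$ (you use $\gammastar_T$, the paper uses $T^{-w}$ with $w\in(0,1-r/\omega)$), explicit minimization of $\frrisk_T$, and a sandwich by correction factors tending to $1$; your Step 3 works with infima, which is slightly more careful than the paper's appeal to a minimizer of $\eqrisk_T$. The WSD obstacle you flag is harmless: both terms of $\mathcal W_\gamma(\alpha,c)$ are nonnegative (since $\kappa_1-\kappa_2=(1-c)(2-\gamma\tr(\Lambda))\ge0$), and $\mathcal I_{\alpha,\gamma}\ge\int_0^1x^{2\xi}\,dx=\frac{\alpha}{2-\alpha}$, so $\mathcal W_\gamma$ has a positive lower bound uniform in $\gamma\in(0,\gammax)$.
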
 
The proof is deferred to \cref{proof:eta_star_to_0}.

While finding this tractable minimizer is a necessary first step, transferring it to the exact risk is mathematically non-trivial: because the sequence $\widetilde\gamma^*_T$ shrinks to zero as $T \to \infty$, pointwise asymptotic equivalence is insufficient, and convergence must be established for a class of sequences $(\gamma_T)$. This class isolates the valid asymptotic regime by enforcing two competing conditions: the base learning-rate must vanish ($\gamma_T \to 0$) to eliminate expansion remainders, while the effective time horizon must simultaneously diverge ($\gamma T_T \to \infty$) to guarantee that discrete Riemann sums properly converge to their continuous integral limits.
Thus, if we write $\mathcal R_T(\gamma) = (1+\varepsilon_T(\gamma))\widetildeR_T(\gamma)$, using the fact that $\widetilde\gamma^*_T = \Theta(T^{-\vartheta})$ with $\vartheta \in (0,1)$, we show that $\lim_{T \to \infty}\varepsilon_T(\widetilde\gamma^*_T)=  0$.

\begin{theorem} \label{th:gammastar_appendix}
With $\widetilde\gamma^*_T$ defined as above, we obtain that the exact risk for the base learning-rate $\widetilde\gamma^*_T$ is asymptotically optimal:
\begin{equation}\label{eq:gammastar}
\inf_{\gamma \in (0, \gammax)} \risk_T(\gamma) \sim \risk_T(\gammastar_T),
  \qquad T \to \infty .
\end{equation}
\end{theorem}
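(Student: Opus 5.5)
Since $\risk_T(\gamma)\le\risk_T(\gammastar_T)$ is trivially $\ge\inf$, it suffices to show $\risk_T(\gammastar_T)\le(1+o(1))\inf_{\gamma\in(0,\gammax)}\risk_T(\gamma)$. The plan is to compare the exact risk with the frozen asymptotic risk $\frrisk_T$ in two regimes of $\gamma$: a \emph{central window} $\mathcal G_T:=\{\gamma:\ \gamma\to0,\ \gamma T\to\infty\}$, made concrete as $\gamma\in[T^{-\vartheta}/M_T,\,M_T T^{-\vartheta}]$ with $M_T\to\infty$ slowly, where $\risk_T=(1+\varepsilon_T(\gamma))\frrisk_T$ with $\sup_{\gamma\in\text{window}}|\varepsilon_T(\gamma)|\to0$; and the \emph{complement}, where we show $\risk_T(\gamma)/\frrisk_T(\gammastar_T)\to\infty$ uniformly, so that no near-minimizer of $\risk_T$ lives there.

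\textbf{Step 1 (uniform equivalents on the window).} I would revisit the proofs of \cref{th:constant,th:wsd} and track the remainders as functions of $(\gamma,T)$ rather than at fixed $\gamma$. The bias and variance are explicit sums over $i$ of products $\prod_j(1-2\bar\nu_j Li^{-\alpha})$; writing $\log(1-x)=-x+O(x^2)$ and using that the relevant indices satisfy $i^{\alpha}\asymp \gamma T$, the errors are of order $\gamma\tr(\Lambda)$ (the frozen factors), $O(\gamma)$ from second-order expansion, and $O((\gamma T)^{-\epsilon})$ from replacing the sum over $i$ by an integral (Riemann/Euler--Maclaurin, and in the $\alpha>2$ variance the finitely many head modes against the tail). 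All of these vanish uniformly on the window because $\gamma\to0$ and $\gamma T\ge T^{1-\vartheta}/M_T\to\infty$ with $\vartheta\in(0,1)$ by \cref{lemma:eta_star_to_0}. Hence $\inf_{\text{window}}\risk_T\sim\inf_{\text{window}}\frrisk_T=\frrisk_T(\gammastar_T)(1+o(1))$, using that $\gammastar_T$ lies in the window and part 3 of \cref{lemma:eta_star_to_0}.

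\textbf{Step 2 (outside the window).} For $\gamma\le T^{-\vartheta}/M_T$, use monotone lower bounds: each factor satisfies $1-2\bar\nu_jLi^{-\alpha}\ge 1-2\gamma Li^{-\alpha}$, so $B_T(\gamma)\ge\frac12\sum_i \Delta L i^{-\alpha-\beta}(1-2\gamma L i^{-\alpha})^{T}$, which is decreasing in $\gamma$ and, by the same Riemann-sum argument (valid whether or not $\gamma T\to\infty$; if $\gamma T$ stays bounded the bias is $\Theta(1)$), is $\gtrsim(\gamma T)^{-\omega}\ge M_T^{\omega}\,T^{-\omega(1-\vartheta)}\gg\frrisk_T(\gammastar_T)$. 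For $\gamma\ge M_T T^{-\vartheta}$, bound the variance below: discard all but the modes $i^\alpha\asymp\gamma T$ (or, for $\alpha>2$, the first mode during the cooldown, and for the constant schedule the stationary level $\gtrsim\gamma$), giving $V_T(\gamma)\gtrsim\gamma^{v}T^{-r}$ up to a constant uniform in $\gamma\in(0,\gammax)$, hence $\ge M_T^{v}\,\frrisk_T(\gammastar_T)\cdot\Theta(1)\gg\frrisk_T(\gammastar_T)$. These crude bounds only need constants, not exact equivalents.

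\textbf{Main obstacle.} Step 1's uniformity is the delicate part: the equivalents of \cref{th:wsd} were proven pointwise, and in the $1<\alpha<2$ variance both the stable-phase and cooldown contributions involve Riemann sums whose error depends on $\gamma T$ and on the schedule's discretization near $j\approx(1-c)T$. I would first try to derive an explicit bound of the form $|\varepsilon_T(\gamma)|\le C\big(\gamma+(\gamma T)^{-\epsilon}\big)$ for a fixed $\epsilon>0$, via dominated convergence after the substitution $u=i(\gamma T)^{-1/\alpha}$, which reduces uniformity to convergence of integrands bounded by an integrable envelope independent of $\gamma$.
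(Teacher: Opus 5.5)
Your proposal is correct and follows essentially the paper's route. The paper proves $\risk_T(\gamma_T)\sim\eqrisk_T(\gamma_T)$ along every sequence with $\gamma_T\to0$ and $\gamma_T T\to\infty$ (\cref{lemma:uniform_epsilon}, with errors $O(\gamma_T)$ plus terms controlled by $\gamma_T T$, as you anticipate), uses bias and variance lower bounds (\cref{lemma:bounds}) to show that every near-minimizing sequence lies in that regime, and then concludes via part~3 of \cref{lemma:eta_star_to_0}. Your explicit window $[T^{-\vartheta}/M_T,\,M_T T^{-\vartheta}]$ is a harmless repackaging; it only means your complement bounds must be rate-sharp, e.g.\ $V_T\gtrsim\gamma$ for the constant schedule instead of the paper's cruder $V_T\gtrsim\gamma^2$, and these are available.
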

The complete proof is deferred to \cref{sec:gammastar}. We use an infimum rather than a minimum since the existence of an exact minimizer of $\gamma \mapsto \risk_T(\gamma)$ is not established. The proof relies on two ingredients. First, \cref{lemma:uniform_epsilon} shows that the equivalence $\risk_T(\gamma_T) \sim \eqrisk_T(\gamma_T)$ holds uniformly if $\gamma_T T \to \infty$ and $\gamma_T \to 0$. Second, \cref{lemma:bounds} shows that any sequence $(\gamma_T)$ that is at least as good as $(\gammastar_T)$, i.e. $\risk_T(\gamma_T) \leq \risk_T(\gammastar_T)(1+o(1))$, automatically satisfies $\gamma_T \to 0$ and $\gamma_T T \to \infty$. No assumption on the location of the exact minimizers is thus needed.

\section{Comparison with Gaussian Features}
\label{app:gauss}
In the case of Gaussian features $\featvec \sim \mathcal N(0, H)$, we get from Isserlis' theorem: $\E[\featvec_t \featvec_t^\top \vect_t \vect_t^\top \featvec_t \featvec_t^\top] = 2 \Lambda M_t \Lambda + \tr(\Lambda M_t)\Lambda$. Since $\E[\featvec \featvec^\top] = H$ is still satisfied, we have:

\begin{lemma}[Gaussian Features Recurrence]\label{lemma:recurrence_gaussian} For $M_t = Q^\top\E[(\vect_t-\vect_*)(\vect_t-\vect_*)^\top]Q$, and $m_t = \left(M_{i,i}\right)_{i\geq 1} \in \R^d$ the diagonal of $M_t$,
\begin{equation}\label{eq:recurrence_gaussian}
m_{t+1} = \left[(I - \gamma\eta_t\Lambda)^2 + \gamma^2\eta_t^2 \Lambda^2 + \gamma^2\eta_t^2\lambda\lambda^\top \right] m_t + \gamma^2\eta_t^2 \sigma^2 \lambda, \qquad t \in\{0,...,T-1\}.
\end{equation}
\end{lemma}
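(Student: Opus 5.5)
We obtain the recurrence exactly as in the derivation of \cref{eq:recurrence_moment}: write the second-moment recursion for the error, substitute the Gaussian fourth moment, and read off the diagonal. Set $e_t := \vect_t - \vect_*$. From \cref{eq:sgd_batch1}, $e_{t+1} = (I - \gamma\eta_t x_t x_t^\top) e_t + \gamma\eta_t \varepsilon_t x_t$.

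We expand $e_{t+1}e_{t+1}^\top$ and take the conditional expectation given $\mathcal F_t$, using that $x_t$ is independent of $\mathcal F_t$.

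1. The cross terms involve $\E[\varepsilon_t x_t \mid \mathcal F_t]$ and $\E[\varepsilon_t x_t x_t^\top (\cdot)]$. We need them to vanish. If we assume, as in the Gaussian framework of \cite{lin2024scaling}, that $\varepsilon_t$ is independent of $x_t$ with zero mean, they vanish directly. Otherwise we only have $\E[\varepsilon x]=0$, which kills the first but not the third-order one, so the independence assumption is what I would invoke here.
2. The noise term gives $\gamma^2\eta_t^2 \E[\varepsilon_t^2 x_t x_t^\top] = \gamma^2\eta_t^2\sigma^2 H$.
3. The linear terms give $-\gamma\eta_t(H\,\E[e_te_t^\top] + \E[e_te_t^\top]H)$.
4. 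The quartic term is $\gamma^2\eta_t^2\E[x_tx_t^\top e_te_t^\top x_tx_t^\top]$. Conditioning on $e_t$ and applying Isserlis' theorem gives $2HE_tH + \tr(HE_t)H$, where $E_t := e_te_t^\top$. Taking the outer expectation replaces $E_t$ by $\E[e_te_t^\top]$ by linearity.

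Next we conjugate by $Q$. With $M_t = Q^\top \E[e_te_t^\top]Q$ and $Q^\top H Q = \Lambda$, this yields
\[
M_{t+1} = M_t - \gamma\eta_t(\Lambda M_t + M_t\Lambda) + \gamma^2\eta_t^2\big(2\Lambda M_t\Lambda + \tr(\Lambda M_t)\Lambda\big) + \gamma^2\eta_t^2\sigma^2\Lambda .
\]
We then extract the $(i,i)$ entry, using that $\Lambda$ is diagonal:
\begin{itemize}
\item $(\Lambda M_t + M_t\Lambda)_{ii} = 2\lambda_i m_{t,i}$;
\item $(\Lambda M_t\Lambda)_{ii} = \lambda_i^2 m_{t,i}$;
\item $\tr(\Lambda M_t) = \lambda^\top m_t$, so that $(\tr(\Lambda M_t)\Lambda)_{ii} = \lambda_i(\lambda^\top m_t) = (\lambda\lambda^\top m_t)_i$.
\end{itemize}
Collecting these, $1 - 2\gamma\eta_t\lambda_i + 2\gamma^2\eta_t^2\lambda_i^2 = (1-\gamma\eta_t\lambda_i)^2 + \gamma^2\eta_t^2\lambda_i^2$, which in vector form is \cref{eq:recurrence_gaussian}.

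The only delicate point is the fourth-moment identity, since the paper quotes it without derivation. To verify it, write $x = H^{1/2}z$ with $z\sim\mathcal N(0,I)$. It then suffices to show $\E[zz^\top A zz^\top] = A + A^\top + \tr(A)I$ for symmetric $A$. Entrywise this is $\E[z_iz_jz_kz_l]A_{jk} = (\delta_{ij}\delta_{kl}+\delta_{ik}\delta_{jl}+\delta_{il}\delta_{jk})A_{jk}$, summed over $j,k$.

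In infinite dimension, we also need that $\tr(\Lambda M_t)$ is finite so that all terms make sense. This follows inductively from $\lambda^\top m_0 = L\Delta\sum_i i^{-\alpha-\beta} < \infty$ and the recurrence itself.
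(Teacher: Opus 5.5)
Your proposal is correct and follows the paper's route. The paper reuses the second-moment expansion from the proof of \cref{lemma:recurrence}, which already assumes $\varepsilon_t$ independent of $x_t$ with zero mean (the same assumption you flag). It substitutes the Isserlis identity $\E[x_tx_t^\top \vect_t\vect_t^\top x_tx_t^\top]=2\Lambda M_t\Lambda+\tr(\Lambda M_t)\Lambda$ and reads off the diagonal. Your entrywise verification of $\E[zz^\top A zz^\top]=A+A^\top+\tr(A)I$ and your explicit diagonal extraction fill in details the paper leaves implicit.
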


Therefore we can still compute the exact risk $\frac{1}{2}\lambda^\top m_t$ (although an asymptotic equivalent would be difficult due to the non diagonal term $\lambda\lambda^\top$), and the bias-variance parts $\gaussT{B}$ and $\gaussT{V}$.

To compare this risk with the one found in \cref{th:risk}, we plot $B_T/\gaussT{B}$ and $V_T/\gaussT{V}$ in \cref{fig:slock_vs_gauss}. \cref{fig:bias_ratio} shows that the agreement of the two models degrades as $T\to \infty$. However, we will see \cref{lemma:eta_star_to_0} that we need $\gamma\tr(\Lambda)\to 0$ as $T\to \infty$ to have an optimal base learning-rate. We can therefore plot again such ratios with a vanishing $\gamma_T$. Let us choose $\gammastar_T = \Theta(T^{-\vartheta})$ where $\vartheta \in (0,1)$, defined \cref{tab:gamma_star_combined}.

\begin{figure}[H]
    \centering
    \begin{subfigure}[b]{0.48\textwidth}
        \centering
        \includegraphics[width=\textwidth]{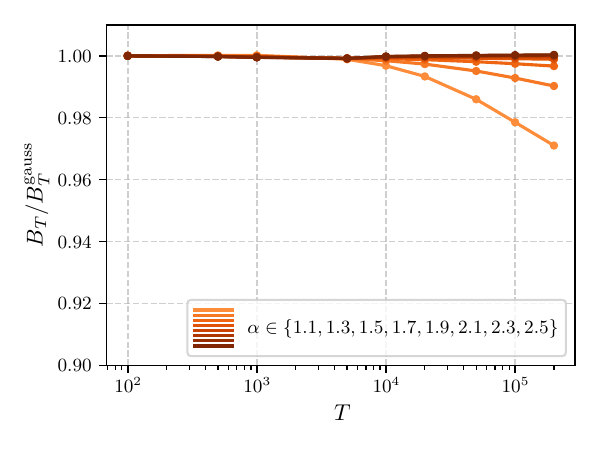}
        \caption{Bias ratio, $\gamma = 10^{-2}$}
        \label{fig:bias_ratio}
    \end{subfigure}\hfill
    \begin{subfigure}[b]{0.48\textwidth}
        \centering
        \includegraphics[width=\textwidth]{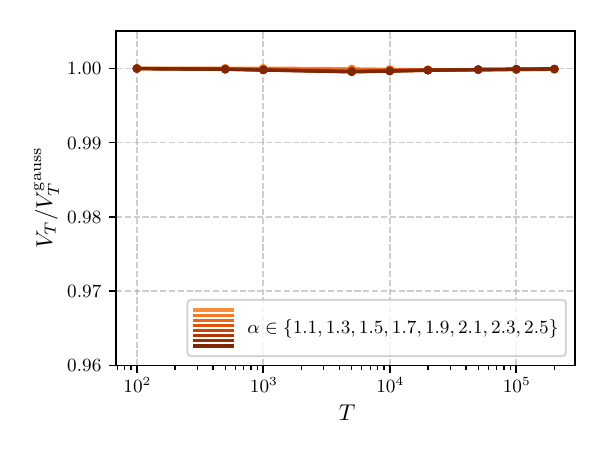}
        \caption{Variance ratio, $\gamma = 10^{-2}$}
        \label{fig:variance_ratio}
    \end{subfigure}
   \caption{Ratio of the bias (\cref{fig:bias_ratio}) and variance (\cref{fig:variance_ratio}) evaluated under Slock's model relative to the standard Normal distribution, with a linear decay schedule. 
   Computed with parameters $d=100, \gamma=10^{-2}, \beta=2, L=\Delta=1, \sigma=1$. As the accumulation of coupled terms increases with $T$, the bias ratio drifts from 1.
   }
    \label{fig:slock_vs_gauss}
\end{figure}

\begin{figure}[H]
    \captionsetup[subfigure]{skip=2pt, belowskip=-4pt}

    \begin{subfigure}[b]{0.48\textwidth}
        \centering
        \includegraphics[width=\textwidth]{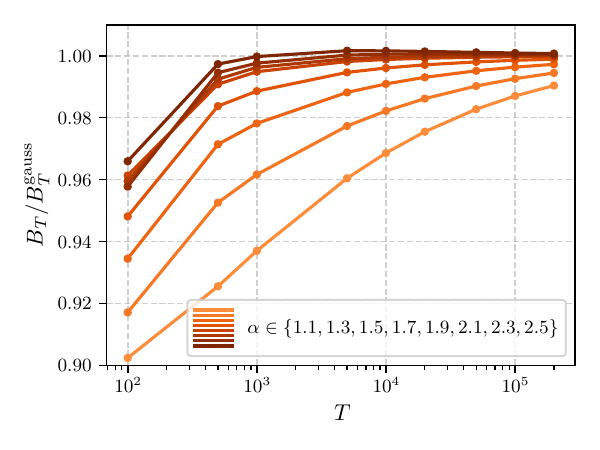}
        \caption{Bias ratio, $\gamma = \gammastar_T$} 
        \label{fig:bias_ratio_gammastar} 
    \end{subfigure}\hfill
    \begin{subfigure}[b]{0.48\textwidth}
        \centering
        \includegraphics[width=\textwidth]{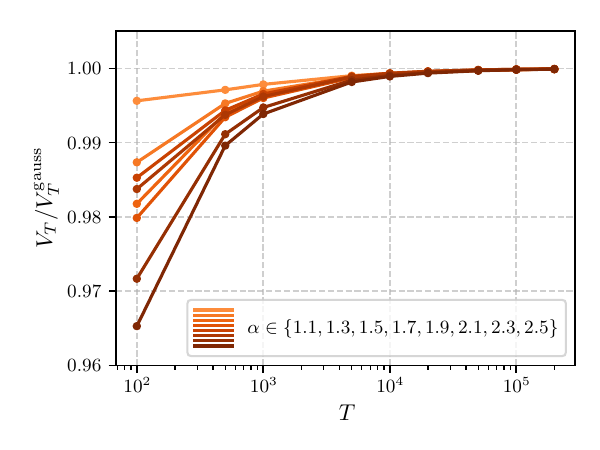}
        \caption{Variance ratio, $\gamma = \gammastar_T$}
        \label{fig:variance_ratio_gammastar}
    \end{subfigure}
   \caption{
   Ratio of the bias (\cref{fig:bias_ratio_gammastar}) and variance (\cref{fig:variance_ratio_gammastar}) evaluated under Slock's model relative to the standard Normal distribution, with a linear decay schedule. 
   Computed with parameters $d=100, \gamma=\gammastar_T, \beta=2, L=\Delta=1, \sigma=1$.}
    \label{fig:slock_vs_gauss_tuned}
\end{figure}

In \cref{fig:slock_vs_gauss_tuned}, the ratios tend to 1, revealing that by using a vanishing base learning-rate, Slock's model exhibits asymptotic risk behavior similar to that of the standard Gaussian model. Indeed, the coupled terms $(\gammastar_T)^2\eta_t^2\lambda\lambda^\top$ become negligible before $I-2\gammastar_T\eta_t\Lambda$ as $\gammastar_T \to 0$, similarly to Slock's model. This confirms the pertinence of our choice of model. Note that as $\alpha$ approaches 1, the convergence slows down significantly. This is consistent with the slow convergence of the series $\sum_i i^{-\alpha}$ near this limit, which implies that the eigenvalue spectrum exhibits a heavier tail, requiring more steps $T$ to reach the asymptotic regime.

\section{Supplementary Figures}
\label{app:figures}

This appendix gathers the numerical experiments referred to in \cref{sec:equivalents,sec:eta_star}. \cref{fig:asymptotic_constant,fig:wsd_equivalent} compare the asymptotic equivalents of the bias and the variance with their exact values, for the constant schedule and the WSD schedule ($c = 0.2$) respectively. \cref{fig:comparison_cst_wsd} compares the risk trajectories of both schedules when the base learning-rate tuned for the constant schedule is reused for WSD. All experiments were run on a single CPU.

\begin{figure}[tbp]
    \centering
    \begin{subfigure}[b]{0.48\textwidth}
        \centering
        \includegraphics[width=\textwidth, clip]{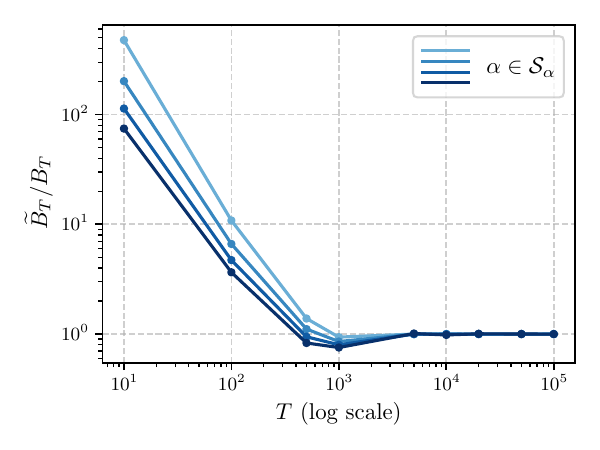}
        \caption{Bias ratio}
        \label{fig:sgd_constant}
    \end{subfigure}\hfill
    \begin{subfigure}[b]{0.48\textwidth}
        \centering
        \includegraphics[width=\textwidth, clip]{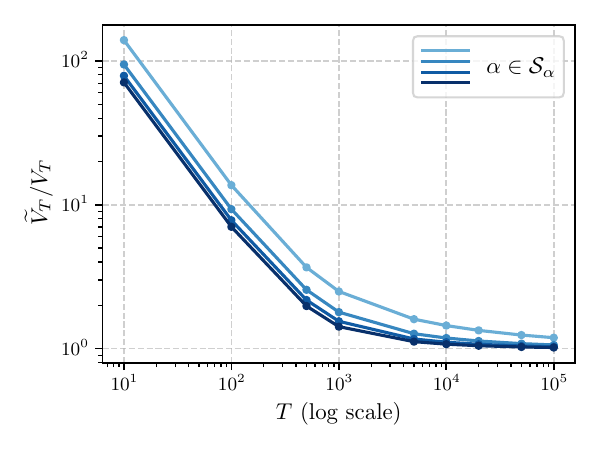}
        \caption{Variance ratio}
        \label{fig:sgd_linear}
    \end{subfigure}
    \caption{For the constant schedule, ratio of the asymptotic equivalents to the exact values for the bias (a) and variance (b) components of the risk, for $\gamma = 10^{-3},L=\Delta=1, \beta = 2, \sigma=0.1$, $d=1000$ and $\mathcal S_\alpha = \{1.4, 1.8, 2.2, 2.6\}$. As the total number of steps $T$ increases, the ratios systematically converge to 1 across different spectral exponents $\alpha$, empirically validating the theoretical approximations.}
    \label{fig:asymptotic_constant}
\end{figure}

\begin{figure}[tbp]
    \centering
    \begin{subfigure}[b]{0.48\textwidth}
        \centering
        \includegraphics[width=\textwidth, clip]{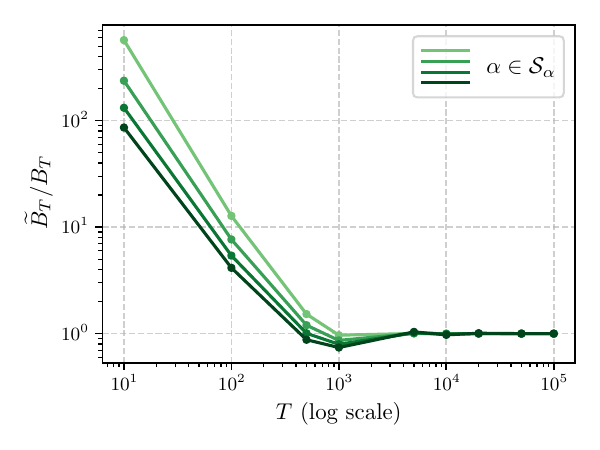}
        \caption{Bias ratio.}
        \label{fig:sgd_constant}
    \end{subfigure}\hfill
    \begin{subfigure}[b]{0.48\textwidth}
        \centering
        \includegraphics[width=\textwidth,clip]{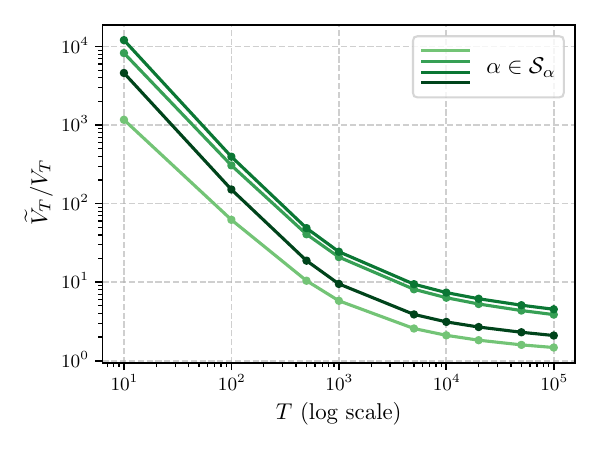}
        \caption{Variance ratio.}
        \label{fig:sgd_wsd}
    \end{subfigure}

    \caption{For the WSD schedule ($c=0.2$), ratio of the asymptotic equivalents to the exact values for the bias (a) and variance (b) components of the risk, for $\gamma = 10^{-3},L=\Delta=1, \beta = 2, \sigma=0.1$, $d=1000$ and $\mathcal S_\alpha = \{1.4, 1.8, 2.2, 2.6\}$.}
    \label{fig:wsd_equivalent}
\end{figure}

While \cref{tab:optimal_scaling} demonstrates the asymptotic superiority of WSD over a constant schedule when both are independently optimized, \cref{fig:comparison_cst_wsd} explores a learning-rate transfer setting. Specifically, we evaluate the performance of WSD when reusing the base learning-rate $\gamma = \gammastar_T$ derived solely for the constant schedule. To this end, we fix the horizon at $T=10^5$ and track the intermediate risk trajectory $\risk_{t\mid T}$ throughout the training process, rather than focusing solely on the final iterate risk $\risk_T = \risk_{T\mid T}$.

\begin{figure}[h]
    \centering
    \begin{subfigure}[b]{0.48\textwidth}
        \centering
        \includegraphics[width=\textwidth,  clip]{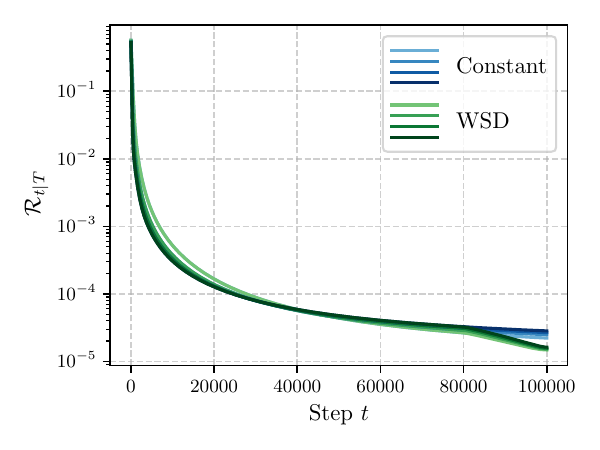}
        \caption{Constant and WSD}
        \label{fig:comparison_cst_wsd_a}
    \end{subfigure}\hfill
    \begin{subfigure}[b]{0.48\textwidth}
        \centering
        \includegraphics[width=\textwidth,  clip]{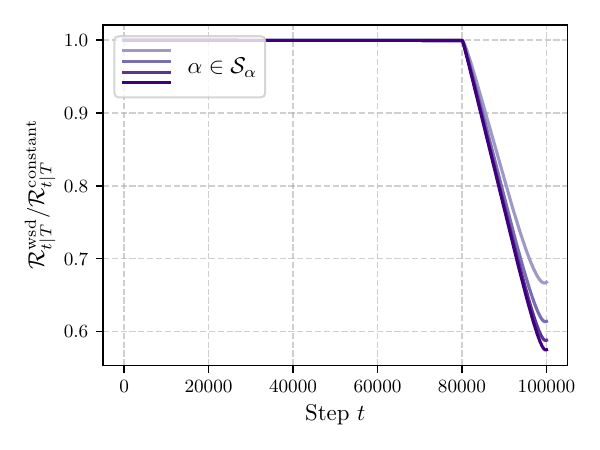}
        \caption{Ratio of the two risks}
        \label{fig:comparison_cst_wsd_b}
    \end{subfigure}
    \caption{Comparison of risk trajectories for constant and WSD schedules ($c=0.2$) over a total horizon $T = 10^5$. Hyperparameters are fixed to $\alpha \in \mathcal S_\alpha =\{1.4, 1.8, 2.2, 2.6\}$, $\beta=2$, $\sigma=0.1, d=100$ and $L=\Delta=1$. The parameter $\gamma = \gammastar_T$ is optimized for the constant schedule, and this exact same value is applied to both methods. As expected, the risks remain identical up to $t=0.8T = 80,000$, after which the WSD cooldown phase effectively reduces the final risk.}
    \label{fig:comparison_cst_wsd}
\end{figure}

\section{Proofs}
\label{app:proofs}

\subsection{Preliminary Proofs}\label{sec:first_proofs}

\label{app:first_proofs}

\begin{lemma}Consider the function $f$ defined \cref{sec:setup}. Then $f$ is differentiable.
\end{lemma}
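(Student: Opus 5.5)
To prove differentiability of $f(\vect) = \frac12\E[(\featvec^\top\vect-\labelscl)^2]$, I would expand the square and show that $f$ is a quadratic polynomial in $\vect$ with finite coefficients. Writing $(\featvec^\top\vect-\labelscl)^2 = \vect^\top \featvec\featvec^\top\vect - 2\labelscl\featvec^\top\vect + \labelscl^2$, the only work is to check that each term is integrable. Then linearity of expectation gives
\[
f(\vect) = \tfrac12 \vect^\top H \vect - \E[\labelscl \featvec]^\top \vect + \tfrac12\E[\labelscl^2].
\]

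\textbf{Integrability.} The assumptions are that $H=\E[\featvec\featvec^\top]$ exists, so $\E\|\featvec\|^2 = \tr(H)<\infty$, and that $\E[\labelscl^2]<\infty$.
\begin{itemize}
\item The quadratic term satisfies $|\vect^\top\featvec\featvec^\top\vect| \le \|\vect\|^2\|\featvec\|^2$, which is integrable.
\item For the cross term, Cauchy--Schwarz gives $\E|\labelscl\, x_i| \le (\E[\labelscl^2])^{1/2}(\E[x_i^2])^{1/2}<\infty$ for each coordinate. Hence $\E[\labelscl\featvec]$ is a well-defined vector in $\R^d$.
\end{itemize}
In the infinite-dimensional setting, the same bound holds in norm: $\E\|\labelscl\featvec\| \le (\E[\labelscl^2]\,\E\|\featvec\|^2)^{1/2}$. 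So $\E[\labelscl\featvec]$ lies in the Hilbert space, and $H$ is a bounded (indeed trace-class) operator.

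\textbf{Differentiability.} Once $f$ is written as a finite quadratic form plus a linear term plus a constant, it is a polynomial in $\vect$, hence differentiable. Its gradient is $\nabla f(\vect) = H\vect - \E[\labelscl\featvec]$.

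\textbf{Recovering the stated form.} Substituting $\labelscl = \featvec^\top\vect_* + \varepsilon$ gives $\E[\labelscl\featvec] = H\vect_* + \E[\varepsilon\featvec]$, which yields $\nabla f(\vect) = H(\vect-\vect_*) - \E[\varepsilon\featvec]$. Here $\E[\varepsilon\featvec]$ is also finite, by the same Cauchy--Schwarz argument, since $\varepsilon$ is square-integrable as a difference of square-integrable variables.

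\textbf{Main obstacle.} The only delicate point is justifying integrability of the cross term. Cauchy--Schwarz resolves it, so I do not expect any real difficulty.
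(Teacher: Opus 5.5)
Your proof is correct, but it takes a different route from the paper's. The paper does not expand the square. It writes $f(\vect)=\E[f_{\featvec,\labelscl}(\vect)]$ with $f_{u,v}(\vect)=\frac12(u^\top\vect-v)^2$ and differentiates under the expectation by dominated convergence. It uses the local bound $\|\nabla f_{\featvec,\labelscl}(\vect)\|\le R\|\featvec\|^2+\|\featvec\|\,|\labelscl|$ on the ball $\|\vect\|\le R$, and the integrability of that bound rests on the same Cauchy--Schwarz estimate you use for the cross term. You instead exploit the quadratic structure: once the three terms are shown integrable, linearity of expectation gives $f$ as the explicit polynomial $\frac12\vect^\top H\vect-\E[\labelscl\featvec]^\top\vect+\frac12\E[\labelscl^2]$. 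Differentiability (indeed smoothness) and the gradient then follow immediately, with no interchange theorem needed. Your version is more elementary and also gives more: expanding around $\vect_*$ instead yields $f(\vect)-f_*=\frac12(\vect-\vect_*)^\top H(\vect-\vect_*)$ once $\E[\varepsilon\featvec]=0$. That is exactly the identity the paper uses without further justification at the start of the proof of \cref{th:risk}. The paper's argument, in exchange, does not depend on the loss being quadratic. It applies to any per-sample loss whose gradient is locally dominated by an integrable function.
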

\begin{proof}
Define the deterministic function $f_{u,v}$, for every $(u,v) \in \mathbb{R}^d \times \mathbb{R}$:
\[\begin{aligned}
    f_{u, v} \colon \mathbb{R}^d &\to \mathbb{R}, \\
    \vect &\mapsto \frac{1}{2}(u^\top \vect - v)^2.
\end{aligned}\]
Therefore $f(\vect) = \mathbb{E}[f_{\featvec,\labelscl}(\vect)]$. To apply the dominated convergence theorem for differentiation, we verify the following conditions:

\begin{itemize}
    \item For all $\vect \in \mathbb{R}^d$, the expectation $\mathbb{E}[f_{\featvec,\labelscl}(\vect)]$ is well-defined since $\featvec$ and $\labelscl$ admit finite second-order moments.
    
    \item For all $(u,v) \in \mathbb{R}^d \times \mathbb{R}$, $f_{u, v}$ is continuously differentiable and its gradient is $\nabla f_{u,v}(\vect) = u(u^\top \vect-v)$.
    
    \item Let $R>0$. For all $\vect \in \mathbb{R}^d$ such that $\|\vect\| \le R$, the gradient norm is bounded by the Cauchy-Schwarz inequality:
    \[\|\nabla f_{\featvec,\labelscl}(\vect)\| = \|\featvec(\featvec^\top \vect - \labelscl)\| \le \|\featvec\|(\|\featvec\| \|\vect\| + |\labelscl|) \le R\|\featvec\|^2 + \|\featvec\||\labelscl|.\]
    The random variable $R\|\featvec\|^2 + \|\featvec\||\labelscl|$ is integrable because $\mathbb{E}[\|\featvec\|^2]$ is finite (since $H$ exists) and $\mathbb{E}[\|\featvec\||\labelscl|]$ is finite (by the Cauchy-Schwarz inequality, as both $\featvec$ and $\labelscl$ have finite second moments).
\end{itemize}

Thus, the local domination condition is met, which allows us to interchange the gradient and the expectation:
\[\nabla f(\vect) = \mathbb{E}[\nabla f_{\featvec,\labelscl}(\vect)] = \mathbb{E}[\featvec(\featvec^\top \vect - \labelscl)].\]

Substituting $\labelscl = \featvec^\top \vect_* + \varepsilon$, we get:
\[\nabla f(\vect) = \mathbb{E}[\featvec(\featvec^\top \vect - (\featvec^\top \vect_* + \varepsilon))] = \mathbb{E}[\featvec \featvec^\top(\vect - \vect_*)] - \mathbb{E}[\featvec\varepsilon].\]

\end{proof}

\begin{lemma} \label{lemma:stability}
Let $L,\gamma >0$ and $\alpha>1$. Assume $S_i>Li^{-\alpha}$ for $i\geq1$. For $\eta \in \R$ and $i\geq 1$, let $P(\eta,i) := 1 - 2\gamma \eta L i^{-\alpha}+ \gamma^2 \eta^2L S_i i ^{-\alpha}$. The following are equivalent:
\begin{enumerate}
    \item[$(i)$] $\forall i \geq 1, \forall \eta \in (0,1], \quad 0\leq P(\eta,i) <1$,
    \item[$(ii)$] $ \forall i\geq1, \quad \gamma < \frac{2}{S_i}.$
\end{enumerate}

\begin{proof}
We fix $i\geq 1$.
Remark that the expression of $P$ is a second order polynomial in $\eta$.
It has a minimum reached at $\eta^0:=\frac{1}{\gamma S_i}$ and whose value is $P(\eta^0,i) = 1 - \frac{2 Li^{-\alpha}}{S_i} + \frac{Li^{-\alpha}}{S_i} = 1- \frac{Li^{-\alpha}}{S_i}$. Therefore since $S_i>Li^{-\alpha}$ and $i^{-\alpha}\leq 1$, $P(\eta,i)>0$ is always satisfied for $\eta\in \R$ and $i\geq 1$. We can therefore write $(i)$ as $\forall i \geq 1, \forall \eta \in (0,1], ~ P(\eta,i) < 1$.

Furthermore, $1-P(\eta,i) = 2\gamma \eta L i^{-\alpha}  - \gamma^2 \eta^2L S_i i ^{-\alpha} = \eta(2\gamma L i^{-\alpha}  - \gamma^2 \eta L S_i i ^{-\alpha} )$ so $1-P(\eta,i) = 0  \iff \eta \in \{0, \frac{2}{\gamma S_i}\}$. Since $1 - P(\eta, i)$ has a maximum in $\eta$,
\begin{equation}\label{eq:interval_eta}
\forall \eta \in \R, \quad 1-P(\eta,i) > 0 \iff \eta \in \left(0, \frac{2}{\gamma S_i}\right).
\end{equation}

\begin{itemize}
\item Suppose $(i)$. Choosing $\eta=1$, $1-P(1,i)>0$ so by \cref{eq:interval_eta}, $\forall i\geq1, 1<\frac{2}{\gamma S_i}$, which is $(ii)$.
\item Suppose $(ii)$. Then for every $i\geq 1$, $\frac{2}{\gamma S_i}>1$. Hence, for all $\eta \in (0,1]$, \cref{eq:interval_eta} gives $1 - P(\eta,i)>0$. $(i)$ is satisfied.
\end{itemize}

\end{proof}

\end{lemma}

\subsection{Computation of the Exact Risk (\cref{th:risk})}
\label{app:base}

\begin{proof}
Let $M_t = Q^\top\E[(\vect_t-\vect_*)(\vect_t-\vect_*)^\top]Q$, and $m_t = \left(M_{i,i}\right)_{i\geq 1} \in \R^d$ the diagonal of $M_t$.
First, the risk can be written with $M_t$:
\begin{align*}
    \risk_{t \mid T} &= \frac{1}{2}\E[(\vect_t-\vect_*)^\top H(\vect_t-\vect_*)],
    \\&=
    \frac{1}{2}\E[\tr\left( (\vect_t-\vect_*)(\vect_t-\vect_*)^\top H \right)],
    \\&=
    \frac{1}{2}\tr(M_t\Lambda)
    = 
    \frac{1}{2}m_t ^\top \lambda.
\end{align*}

Thus $\mathcal R_{0 \mid T} = \frac{1}{2}\lambda^\top m_0$. By unwinding the recurrence for $m_t$ from \cref{lemma:recurrence}, we directly obtain:
    \[
    \risk_{t \mid T} = \frac{1}{2}\lambda^\top \left( \prod_{k=0}^{t-1}[I - 2\gamma\eta_k\Lambda + \gamma^2\eta_k^2\tr(\Lambda)\Lambda]m_0 + \sum_{k=0}^{t-1}\gamma^2\eta_k^2\sigma^2 \prod_{j=k+1}^{t-1}[I - 2\gamma\eta_j\Lambda + \gamma^2\eta_j^2\tr(\Lambda)\Lambda]\lambda  \right).
    \]

\end{proof}

\begin{lemma}\label{lemma:recurrence} For $M_t = Q^\top\E[(\vect_t-\vect_*)(\vect_t-\vect_*)^\top]Q$, and $m_t = \left(M_{i,i}\right)_{i\geq 1} \in \R^d$ the diagonal of $M_t$,
\begin{equation}\label{eq:recurrence}
m_{t+1} = \left[I + (-2\gamma\eta_t   + \gamma^2\eta_t^2 \tr (\Lambda)) \Lambda  \right]m_t + \gamma^2\eta_t^2\sigma^2\lambda, \qquad t \in\{0,...,T-1\}.
\end{equation}
\end{lemma}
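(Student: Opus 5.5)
We write $\delta_t := \vect_t-\vect_*$ and start from \cref{eq:sgd_batch1}, $\delta_{t+1} = (I-\gamma\eta_t x_tx_t^\top)\delta_t + \gamma\eta_t\varepsilon_t x_t$. We take the outer product and then the conditional expectation given $\mathcal F_t$. Two observations drive the computation.

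\emph{Independence of the fresh sample.} Since $(x_t,\varepsilon_t)$ is independent of $\mathcal F_t$ and $\delta_t$ is $\mathcal F_t$-measurable, we get $\E[x_tx_t^\top\mid\mathcal F_t]=H$.

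\emph{Vanishing cross terms.} The cross terms contain $\varepsilon_t$ linearly. They vanish because $\varepsilon_t$ is independent of $(s_t,\mu_t)$ and has zero mean under \cref{as:slock}. This is the stronger form of $\E[\varepsilon x]=0$.

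The noise term gives $\gamma^2\eta_t^2\E[\varepsilon_t^2]\E[x_tx_t^\top]=\gamma^2\eta_t^2\sigma^2H$. Taking full expectations and conjugating by $Q$ then yields
\[
M_{t+1} = M_t-\gamma\eta_t(\Lambda M_t+M_t\Lambda)+\gamma^2\eta_t^2\,Q^\top\E[x_tx_t^\top\delta_t\delta_t^\top x_tx_t^\top]Q+\gamma^2\eta_t^2\sigma^2\Lambda .
\]

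The only nontrivial step is the fourth-order moment. We condition on $\delta_t$ and use the structure of Slock's features: $x_tx_t^\top = s_t^2 r^2 q_iq_i^\top=\tr(\Lambda)\,q_iq_i^\top$ with $i\sim\pi$, independently of $\mathcal F_t$. Then
\[
x_tx_t^\top\delta_t\delta_t^\top x_tx_t^\top=\tr(\Lambda)^2 (q_i^\top\delta_t)^2\, q_iq_i^\top .
\]
Averaging over $i$ with weights $\pi_i=\lambda_i/\tr(\Lambda)$ gives
\[
\tr(\Lambda)\sum_i\lambda_i (q_i^\top\delta_t)^2q_iq_i^\top .
\]
Taking the expectation over $\delta_t$ turns $(q_i^\top\delta_t)^2$ into $(M_t)_{ii}=m_{t,i}$. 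In the eigenbasis the fourth-order moment is therefore the diagonal matrix $\tr(\Lambda)\Lambda\,\diag(m_t)$, as claimed in the main text.

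Finally we read off the diagonal entries. The diagonal of $\Lambda M_t+M_t\Lambda$ is $2\lambda_i m_{t,i}$, and the other two terms are already diagonal. This gives
\[
m_{t+1,i}=(1-2\gamma\eta_t\lambda_i+\gamma^2\eta_t^2\tr(\Lambda)\lambda_i)m_{t,i}+\gamma^2\eta_t^2\sigma^2\lambda_i,
\]
which is \cref{eq:recurrence}.

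The main point of care is justifying the expectations: every term must be integrable. We would argue by induction that $\E\|\delta_t\|^2<\infty$, using that $\|x_t\|^2=\tr(\Lambda)$ almost surely, so the one-step map is bounded. In the infinite-dimensional setting we would also need $\tr(\Lambda)<\infty$, which is exactly what makes $\|x_t\|$ bounded.
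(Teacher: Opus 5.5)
Your proof is correct and follows the paper's argument essentially step by step. You expand the outer product of the error recursion, kill the cross terms using independence and zero mean of $\varepsilon_t$, and compute the fourth-order moment conditionally on the iterate via the single-direction structure of Slock's features to get $\tr(\Lambda)\Lambda\,\diag(m_t)$. You then read off the diagonal. The differences are cosmetic: the paper sets $H=\Lambda$ and $\vect_*=0$ without loss of generality instead of conjugating by $Q$, and your integrability remark fills in a detail the paper leaves implicit.
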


\begin{proof}
    Without loss of generality, suppose $H$ is diagonal, i.e. $H=\Lambda$, and $\vect_* = 0$.
    Let us compute a recurrence relation with $M_t$.
    \begin{equation}\label{eq:Mt+1}
    \begin{aligned}
    M_{t+1} &= \E[((I-\gamma \eta_t \featvec_t \featvec_t^\top)\vect_t + \gamma \eta_t \varepsilon_t \featvec_t)((I-\gamma \eta_t \featvec_t \featvec_t^\top)\vect_t + \gamma \eta_t \varepsilon_t \featvec_t)^\top], \\
    &=  \E[(I-\gamma \eta_t \featvec_t \featvec_t^\top)\vect_t \vect_t^\top (I-\gamma \eta_t \featvec_t \featvec_t^\top)]+\gamma^2\eta_t^2\sigma^2 \Lambda,\\
    &=  \E[ (\vect_t-\gamma \eta_t \featvec_t \featvec_t^\top \vect_t)(\vect_t-\gamma \eta_t \featvec_t \featvec_t^\top \vect_t)^\top]  + \gamma^2\eta_t^2\sigma^2\Lambda, \qquad (\text{$\varepsilon_t$ and $\featvec_t$ are independent and $\E[\varepsilon_t] =0$}),\\
    &= M_t - \gamma \eta_t \Lambda M_t - \gamma \eta_t M_t \Lambda + \gamma^2\eta_t^2  \E[\featvec_t \featvec_t^\top \vect_t \vect_t^\top \featvec_t \featvec_t^\top] + \gamma^2\eta_t^2\sigma^2\Lambda.
    \end{aligned}
    \end{equation}
    
    Moreover, the term $\E[\featvec_t \featvec_t^\top \vect_t \vect_t^\top \featvec_t \featvec_t^\top]$ can be computed using the distribution of $\featvec_t$.
    \begin{align*}
    \E[\featvec_t \featvec_t^\top \vect_t \vect_t^\top \featvec_t \featvec_t^\top| \vect_t] 
    &=
    \sum_{i\geq1} \frac{\lambda_i}{\tr (\Lambda)} \times (\tr (\Lambda))^2  e_i \underbrace{e_i^\top  \vect_t \vect_t^\top  e_i}_{=\vect_{t,i}^2} e_i^\top,
    \\&=
    \tr (\Lambda)\sum_{i\geq1} \lambda_i \vect_{t,i}^2  e_i  e_i^\top
    =
    \tr (\Lambda)
    \begin{pmatrix}
    \lambda_1 \vect_{t,1}^2 & 0                  & \cdots \\
    0                  & \lambda_2 \vect_{t,2}^2 & \cdots \\
    \vdots             & \vdots             & \ddots
    \end{pmatrix},
    \\&=
    \tr (\Lambda) \Lambda \diag(\vect_t)^2.
    \end{align*}
    
    Knowing that $\E[\featvec_t \featvec_t^\top \vect_t \vect_t^\top \featvec_t \featvec_t^\top] = \E[\E[\featvec_t \featvec_t^\top \vect_t \vect_t^\top \featvec_t \featvec_t^\top | \vect_t]]$,
    \begin{align*}
    \E[\featvec_t \featvec_t^\top \vect_t \vect_t^\top \featvec_t \featvec_t^\top] &= 
    \tr (\Lambda) \Lambda \E[\diag(\vect_t)^2] ,
    \\&=
    \tr (\Lambda) \Lambda \E\begin{pmatrix}
    \ \vect_{t,1}^2 & 0                  & \cdots \\
    0                  & \vect_{t,2}^2 & \cdots \\
    \vdots             & \vdots             & \ddots
    \end{pmatrix}
    =
    \tr (\Lambda) \Lambda \begin{pmatrix}
    \ M_{t,11} & 0                  & \cdots \\
    0                  & M_{t,22} & \cdots \\
    \vdots             & \vdots             & \ddots
    \end{pmatrix},
    \\&= \tr(\Lambda)\Lambda \diag(m_t).
    \end{align*}
    Thus, substituting the result into \cref{eq:Mt+1},
    \begin{align*}
    M_{t+1} &=
    M_t - \gamma\eta_t \Lambda M_t - \gamma\eta_t M_t \Lambda + \gamma^2\eta_t^2 \tr (\Lambda) \Lambda \diag(m_t) + \gamma^2\eta_t^2\sigma^2\Lambda.
    \end{align*}

    Hence,
    $\begin{aligned}[t]
    m_{t+1} &= m_t - 2 \gamma \eta_t \Lambda m_t +  \gamma^2\eta_t^2  \tr (\Lambda) \Lambda m_t + \gamma^2\eta_t^2\sigma^2\lambda,
    \\&=
    \left[I + (-2\gamma \eta_t   + \gamma^2\eta_t^2 \tr (\Lambda)) \Lambda  \right]m_t + \gamma^2\eta_t^2\sigma^2\lambda.
    \end{aligned}$
\end{proof}

\begin{corollary}\label{th:risk_alpha_beta}
    Under \cref{as:alphabeta}, we have
    \begin{equation}\left\{
    \begin{aligned}
        B_{t \mid T} &= \frac{1}{2} \sum_{i=1}^d \Delta Li^{-\alpha-\beta}  \prod_{k=0}^{t-1} \left[ 1-2\gamma\eta_k Li^{-\alpha}+\gamma^2\eta_k^2 \tr(\Lambda)L i^{-\alpha} \right],
        \\
        V_{t \mid T} &= \frac{1}{2} \sum_{i=1}^d  \sum_{k=0}^{t-1} L^2i^{-2\alpha}\gamma^2\eta_k^2 \sigma^2 \prod_{j=k+1}^{t-1} \left[ 1-2\gamma\eta_j Li^{-\alpha}+\gamma^2\eta_j^2 \tr(\Lambda)L i^{-\alpha} \right].
    \end{aligned}\right.
    \end{equation}
\end{corollary}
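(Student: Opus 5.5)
The plan is to specialize the vector formulas of \cref{th:risk} coordinate by coordinate, using that every matrix involved is diagonal in the eigenbasis. By \cref{th:risk}, $B_{t\mid T} = \frac12 \lambda^\top \Pi(0,t,(\gamma\eta_j),\Lambda) m_0$ and $V_{t\mid T} = \frac12 \lambda^\top \sum_{k=0}^{t-1}\gamma^2\eta_k^2\sigma^2\,\Pi(k+1,t,(\gamma\eta_j),\Lambda)\lambda$. Each factor $I - 2\gamma\eta_j\Lambda + \gamma^2\eta_j^2\tr(\Lambda)\Lambda$ is diagonal with $i$-th entry $1 - 2\gamma\eta_j\lambda_i + \gamma^2\eta_j^2\tr(\Lambda)\lambda_i$. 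Hence the product $\Pi(k,t,\cdot,\Lambda)$ is the diagonal matrix whose $i$-th entry is the scalar product $\prod_{j=k}^{t-1}[1 - 2\gamma\eta_j\lambda_i + \gamma^2\eta_j^2\tr(\Lambda)\lambda_i]$.

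Next, I expand the quadratic forms. For a diagonal $D$ we have $\lambda^\top D m_0 = \sum_i \lambda_i D_{ii} m_{0,i}$ and $\lambda^\top D\lambda = \sum_i \lambda_i^2 D_{ii}$. Substituting $\lambda_i = L i^{-\alpha}$ and $m_{0,i} = \Delta i^{-\beta}$ from \cref{as:alphabeta} gives $\lambda_i m_{0,i} = \Delta L i^{-\alpha-\beta}$ and $\lambda_i^2 = L^2 i^{-2\alpha}$. Each bracket becomes $1 - 2\gamma\eta_j L i^{-\alpha} + \gamma^2\eta_j^2\tr(\Lambda) L i^{-\alpha}$. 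Finally, I exchange the finite sums over $k$ and over $i$ in the variance term and pull out the constant $\sigma^2$, which yields exactly the stated formulas.

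There is no genuine obstacle here; the argument is pure bookkeeping. The only point needing care is that $\tr(\Lambda)$ is the finite-$d$ trace $L\sum_{i=1}^d i^{-\alpha}$, which stays fixed inside every factor and does not depend on $i$. All sums are finite, so no convergence or interchange issues arise at this stage; those appear only later, in the $d\to\infty$ limit.
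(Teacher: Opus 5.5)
Your proposal is correct and follows essentially the same route as the paper: unwinding the diagonal recurrence gives the vector formulas of \cref{th:risk}, and you then read them coordinatewise with $\lambda_i m_{0,i} = \Delta L i^{-\alpha-\beta}$ and $\lambda_i^2 = L^2 i^{-2\alpha}$. Your remark that $\tr(\Lambda) = L\sum_{i=1}^d i^{-\alpha}$ is fixed and independent of $i$ is the only point of care, and you handle it correctly.
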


\subsection{Asymptotic Equivalents of the Risk}
\label{app:asymptotic}

We will use the Tauberian theorem:
\begin{theorem}[\textcite{korevaar2004tauberian}, Theorem 7.4, Part I. See also \textcite{bach2025ztransform}]\label{th:tauberian}
         Let $(u_k)$ be a real-valued sequence such that the series $U(z) = \sum_{k=0}^\infty u_kz^k$ is defined on the interval $[0,1)$. Let $\mu \in \mathbb N$ and $p>0$. Then, if there exists a constant $K\in \R$ such that $\forall k\geq 1, k^{2+\mu-p}(u_k - u_{k-1})\leq K$, and $\lim_{z\to 1^-} (1-z)^p U^{(\mu)}(z)$ exists, then $\lim_{k \to +\infty} k^{1+\mu-p} u_k$ exists and
         \begin{equation}\label{th:korevaar}
            \lim_{z \to 1^-} (1 - z)^p U^{(\mu)}(z) = \Gamma(p) \cdot \lim_{k \to +\infty} k^{1+\mu-p} u_k.
         \end{equation}
        \end{theorem}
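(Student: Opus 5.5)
The plan is to reduce to the case $\mu=0$, prove a Karamata-type statement for partial sums, and then use the one-sided difference bound as a ``slow decrease'' condition to pass from partial sums to individual terms. Since \cref{th:tauberian} is a classical result \citep{korevaar2004tauberian}, the sketch follows the standard route rather than anything specific to this paper.

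\textbf{Step 1 (reduction to $\mu=0$).} I would work with the coefficients of $U^{(\mu)}$,
\[
U^{(\mu)}(z)=\sum_{k\ge0} v_k z^k,\qquad v_k:=(k+1)\cdots(k+\mu)\,u_{k+\mu}.
\]
Since $(k+1)\cdots(k+\mu)=k^\mu(1+O(1/k))$, the target becomes $v_k\sim A\,k^{p-1}/\Gamma(p)$, where $A:=\lim_{z\to1^-}(1-z)^pU^{(\mu)}(z)$. That is equivalent to $k^{1+\mu-p}u_k\to A/\Gamma(p)$, which is exactly \cref{th:korevaar}.

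Next I would translate the hypothesis $k^{2+\mu-p}(u_k-u_{k-1})\le K$ into a bound $v_k-v_{k-1}\le K' k^{p-2}$. The difference splits as $k^\mu(u_{k+\mu}-u_{k+\mu-1})$ plus a term of size $\mu k^{\mu-1}u_{k+\mu-1}$. The first piece is controlled directly by the hypothesis. The second needs an a priori growth bound $u_k=O(k^{p-1-\mu})$. I would obtain that bound from the Abel limit combined with the one-sided condition, using the usual argument that a slowly decreasing sequence with a bounded Abel mean is bounded after normalisation.

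\textbf{Step 2 (partial sums via Karamata).} Set $S_n:=\sum_{k\le n}v_k$. I would show $S_n\sim A\,n^p/\Gamma(p+1)$. Adding $K''k^{p-1}$ (with the matching Abelian correction, computed from $\sum k^{p-1}z^k\sim\Gamma(p)(1-z)^{-p}$) makes the coefficients bounded below in the relevant sense. Then Karamata's polynomial-approximation argument applies: test $\sum v_k z^k g(z^k)$ for polynomials $g$, and approximate the indicator of $[e^{-1},1]$ from above and below. This yields the Hardy--Littlewood--Karamata conclusion for the measure $\sum_k v_k\delta_k$.

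\textbf{Step 3 (from sums to terms).} This is the step I expect to be the main obstacle. For $h=\lfloor\varepsilon n\rfloor$, Step 2 gives
\[
S_{n+h}-S_n=\frac{A}{\Gamma(p+1)}\big((1+\varepsilon)^p-1\big)n^p+o(n^p).
\]
The difference bound gives $v_k\le v_n+K'\sum_{j=n+1}^{k}j^{p-2}$ for $n<k\le n+h$. Summing over $k$ yields
\[
S_{n+h}-S_n\le h\,v_n+O(\varepsilon^2 n^{p}).
\]
Dividing by $h\,n^{p-1}$ and letting $n\to\infty$ and then $\varepsilon\to0$ gives $\liminf_n n^{1-p}v_n\ge A/\Gamma(p)$. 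The same argument on $[n-h,n]$ gives the matching $\limsup$. Undoing Step 1 then yields \cref{th:korevaar}.

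The delicate points are keeping track of the direction of the inequalities: the one-sided condition gives the lower bound on the right window and the upper bound on the left window. A second concern is the a priori bound needed in Step 1 when $\mu\ge1$. If that becomes cumbersome, I would instead apply Steps 2--3 iteratively, $\mu$ times, to the successive antiderivatives of $U$.
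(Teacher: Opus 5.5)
The paper does not prove this statement; it is quoted from Korevaar. Your Steps~2 and~3 are sound once their input is available. That input is the a priori bound $u_k=O(k^{p-1-\mu})$, which carries essentially all of the Tauberian content, and your justification for it does not work. This is a genuine gap.

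\textbf{Why the bound is needed, and why your argument for it fails.} You need both halves of the bound.
\begin{itemize}
\item The upper half is used in Step~1, to turn the hypothesis into $v_k-v_{k-1}\le K'k^{p-2}$.
\item The lower half is used in Step~2: Karamata's theorem needs $v_k\ge -K''k^{p-1}$. An upper bound on the increments $v_k-v_{k-1}$ gives, by itself, no lower bound on $v_k$, so ``bounded below in the relevant sense'' is not supplied by the hypothesis.
\end{itemize}
The ``usual argument'' cannot be run without the bound you are trying to prove. For $t_k:=k^{1+\mu-p}u_k$ the hypothesis only gives $t_k-t_{k-1}\le K/k+O(|t_{k-1}|/k)$. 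For $v_k$ the extra term $\mu(k+1)\cdots(k+\mu-1)u_{k+\mu-1}$ appears. So neither sequence is known to be slowly increasing before the bound is established; the only exception is $t_k=u_k$ when $p=\mu+1$. Moreover, for $\mu\ge1$ the Abelian data concern $U^{(\mu)}$, not Abel means of $(u_k)$.

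\textbf{The fallback.} Going through antiderivatives helps only when $p>\mu$. Then integrating $U^{(\mu)}$ $\mu$ times is an Abelian step giving $U(z)\sim A\frac{\Gamma(p-\mu)}{\Gamma(p)}(1-z)^{\mu-p}$, and you are back to $\mu=0$ with the original hypothesis. That case still needs the lower half of the bound. For $p\le\mu$, $U$ has no singularity of the form $(1-z)^{-p'}$ with $p'>0$, so the case $\mu=0$ says nothing. This is exactly the regime where the paper needs $\mu\ge1$: the bias, with $p=\mu-\frac{\beta-1}{\alpha}<\mu$ when $\beta>1$.

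\textbf{The missing idea: monotonicity instead of a size bound.} Choose $g_k$ with $g_k-g_{k-1}=Kk^{p-2-\mu}$:
\begin{itemize}
\item $g_k=-K\sum_{j>k}j^{p-2-\mu}$ if $p<\mu+1$;
\item $g_k=K\sum_{j\le k}j^{p-2-\mu}$ if $p>\mu+1$.
\end{itemize}
Then $r_k:=u_k-g_k$ is non-increasing, and an Abelian computation gives $G^{(\mu)}(z)=O((1-z)^{-p})$.

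If $p<\mu+1$, then $(1-z)^{\mu+1}R^{(\mu)}(z)\to0$. Since $(1-z)^{\mu+1}R^{(\mu)}(z)\to\mu!\,\lim_k r_k$ for a monotone sequence, this forces $r_k\downarrow0$. The coefficients $(k+1)\cdots(k+\mu)r_{k+\mu}$ of $R^{(\mu)}$ are therefore nonnegative, and monotonicity gives $r_N N^{\mu+1}\lesssim R^{(\mu)}(1-1/N)=O(N^p)$. Hence $u_N=O(N^{p-1-\mu})$, after which your Steps~1--3 go through. The case $p>\mu+1$ is the same argument applied to $r_0-r_k\ge0$.

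The borderline $p=\mu+1$ is different; for $\mu=0$ it is exactly the Hardy--Littlewood theorem. There $g_k\sim K\log k$, and the comparison no longer yields $u_k=O(1)$. The bound must then come from a genuine Hardy--Littlewood-type argument, such as Karamata's one-sided polynomial approximation, or be cited.
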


\subsubsection{Constant Schedule (\cref{th:constant})}
\label{app:constant}

\begin{proof}

        Let $\bar \gamma  = \gamma  - \frac{\gamma ^2 \tr(\Lambda)}{2}$. By \cref{as:eta}, $0<\bar\gamma  < \frac{1}{2L}$.

        \paragraph{Bias.}
        The exact bias for a horizon $T$ is:
        \[
        B_T = \frac{1}{2}\sum_i^{\infty} \frac{L\Delta}{i^{\alpha+\beta}} [1 - 2\bar\gamma \frac{L}{i^{\alpha}}]^T.
        \]
        
        The $z$-transform is
        \[
        \mathcal B(z) = \sum_{T=1}^\infty \frac{1}{2}\sum_i^{\infty} \frac{L\Delta}{i^{\alpha+\beta}} [1 - 2\bar\gamma \frac{L}{i^{\alpha}}]^T z^T
        =
        \frac{L\Delta}{2}\sum_{i=1}^\infty  \frac{1}{i^{\alpha+\beta}} \frac{1}{1-z[1 - 2\bar\gamma \frac{L}{i^{\alpha}}]}.
       \]
        
        As $z \to 1$, $\frac{1}{i^{\alpha+\beta}} \frac{1}{1-z[1 - 2\bar\gamma \frac{L}{i^{\alpha}}]} \sim \frac{1}{i^{\alpha+\beta}} \frac{1}{ 2\bar\gamma \frac{L}{i^{\alpha}}} \sim \frac{1}{2L\bar\gamma  i^\beta}$. If $\beta >1, \sum_i \frac{1}{2L\bar\gamma  i^\beta}$ converges. We force the $z$-transform to diverge in differentiating $\mathcal B$.
        \begin{align*}
        \mathcal B^{(\mu)}(z) &=
        \frac{L\Delta}{2} \sum_{i=1}^\infty i^{-(\alpha+\beta)} \frac{\mu! (1 - 2\bar{\gamma } L i^{-\alpha})^\mu}{\left[ 1 - z(1 - 2\bar{\gamma } L i^{-\alpha}) \right]^{\mu+1}}.
        \end{align*}
        
        So with the right $\mu \in \mathbb N_{\geq0}$, we can have $\mu\alpha - \beta >-1$ (e.g. $\mu =\lfloor \frac{\beta-1}{\alpha}\rfloor+1$) so that $\mathcal B^{(\mu)}(z)$ diverges when $z\to 1$, to be able to apply Tauberian theorems. With $s = 1-z >0$,
        \begin{align*}
        i^{-(\alpha+\beta)} \frac{\mu! (1 - 2\bar{\gamma } L i^{-\alpha})^\mu}{\left[ 1 - z(1 - 2\bar{\gamma } L i^{-\alpha}) \right]^{\mu+1}} &\sim 
        \frac{\mu!}{[i^{\alpha}s + (1-s)2\bar \gamma  L]^{\mu+1}} i^{\mu\alpha-\beta}.
        \end{align*}
        
        By substituting $i = x_i s^{-1/\alpha}$ and forcing the appearance of the step size $\Delta x = s^{1/\alpha}$, we can reformulate the sum to reveal a Riemann sum structure:
        \begin{align*} 
        \mathcal B^{(\mu)}(1-s) &\sim
        \frac{L\Delta}{2} \sum_{i=1}^\infty \frac{\mu!}{[ (x_i s^{-1/\alpha})^\alpha s + (1-s)2\bar \gamma  L]^{\mu+1}} (x_i s^{-1/\alpha})^{\mu\alpha-\beta},
        \\&=
        \frac{L\Delta}{2} \sum_{i=1}^\infty \frac{\mu!}{[x_i^\alpha + (1-s)2\bar \gamma  L]^{\mu+1}} x_i^{\mu\alpha-\beta} s^{-\mu + \frac{\beta}{\alpha}},
        \\&=
        \frac{L\Delta}{2} s^{\frac{\beta-1}{\alpha} - \mu} \sum_{i=1}^\infty \frac{\mu!}{[x_i^\alpha + (1-s)2\bar \gamma  L]^{\mu+1}} x_i^{\mu\alpha-\beta} \underbrace{s^{1/\alpha}}_{=\Delta x}.
        \end{align*}
        
        As $s \to 0$, we have $(1-s)2\bar\gamma  L \to 2\bar\gamma  L$. The discrete sum is exactly a Riemann sum with a vanishing step size $\Delta x \to 0$. It converges to the continuous integral over $[0, \infty)$:
        \begin{align*}
        \mathcal B^{(\mu)}(1-s) &\sim
        \frac{L\Delta}{2} s^{\frac{\beta-1}{\alpha} - \mu} \int_0^\infty \frac{\mu!}{[x^\alpha + 2\bar \gamma  L]^{\mu+1}} x^{\mu\alpha-\beta} dx.
        \end{align*}
        
        We directly apply the substitution $v = \frac{x^\alpha}{2\bar\gamma  L}$, which gives $dx = \frac{1}{\alpha} (2\bar\gamma  L)^{1/\alpha} v^{\frac{1-\alpha}{\alpha}} dv$:
        \begin{align*}
        \mathcal B^{(\mu)}(1-s) &\sim
        \frac{L\Delta}{2\alpha} \mu! \, s^{\frac{\beta-1}{\alpha}-\mu} \int_0^\infty \frac{(2\bar\gamma  L v)^{\mu - \frac{\beta}{\alpha}}}{[2\bar\gamma  L(v + 1)]^{\mu+1}} (2\bar\gamma  L)^{1/\alpha} v^{\frac{1-\alpha}{\alpha}} dv,
        \\&=
        \frac{L\Delta}{2\alpha} \mu! \, s^{\frac{\beta-1}{\alpha}-\mu} (2\bar\gamma  L)^{\mu - \frac{\beta}{\alpha} - \mu - 1 + \frac{1}{\alpha}} \int_0^\infty \frac{v^{-\frac{\beta-1}{\alpha}-1+\mu}}{[v + 1]^{\mu+1}} dv,
        \\&=
        \frac{L\Delta}{2\alpha} \mu! \, s^{\frac{\beta-1}{\alpha}-\mu} (2\bar{\gamma }L)^{-\frac{\beta-1}{\alpha}-1} \frac{\Gamma(\frac{\beta-1}{\alpha}+1)\Gamma(\mu-\frac{\beta-1}{\alpha})}{\Gamma(\mu+1)},
        \\\mathcal B^{(\mu)}(z)&\sim
        \frac{L\Delta}{2\alpha} (2\bar{\gamma }L)^{-\frac{\beta-1}{\alpha}-1} \Gamma\left(\frac{\beta-1}{\alpha}+1\right)\Gamma\left(\mu-\frac{\beta-1}{\alpha}\right) (1-z)^{-\left(\mu-\frac{\beta-1}{\alpha}\right)}.
        \end{align*}

        We have shown that 
        $\lim_{z \to 1^-} (1-z)^{\mu-\frac{\beta-1}{\alpha}} \mathcal B^{(\mu)}(z) = \frac{L\Delta}{2\alpha} (2\bar{\gamma }L)^{-\frac{\beta-1}{\alpha}-1} \Gamma\left(\frac{\beta-1}{\alpha}+1\right)\Gamma\left(\mu-\frac{\beta-1}{\alpha}\right)$. 
        Moreover, $T^{2+\frac{\beta-1}{\alpha}} (B_T - B_{T-1} ) \leq 0$ since $(B_T)$ is decreasing. Indeed, each term $[1-2\bar\gamma\frac{L}{i^\alpha}]^T$ is decreasing in $T$ since $\forall i\geq 1, [1-2\bar\gamma\frac{L}{i^\alpha}] \in (0,1)$ from \cref{as:eta}. Applying \cref{th:tauberian},   
        \[ 
        \lim_{z \to 1^-} (1-z)^{\mu-\frac{\beta-1}{\alpha}} \mathcal B^{(\mu)}(z) = \Gamma({\mu-\tfrac{\beta-1}{\alpha}} ) \cdot \lim_{T \to +\infty} T^{1+\frac{\beta-1}{\alpha}} B_T.
        \]
        As $T\to\infty$, this is equivalent to:
        \begin{align*}
        B_T &\sim \frac{1}{\Gamma\left(\mu-\frac{\beta-1}{\alpha}\right)} \left[ \frac{L\Delta}{2\alpha} (2\bar{\gamma }L)^{-\frac{\beta-1}{\alpha}-1} \Gamma\left(\frac{\beta-1}{\alpha}+1\right)\Gamma\left(\mu-\frac{\beta-1}{\alpha}\right) \right] T^{-\left(\frac{\beta-1}{\alpha}+1\right)},
        \\
         &\sim \frac{L\Delta}{2\alpha} (2\bar{\gamma }L)^{-\frac{\beta-1}{\alpha}-1} \Gamma\left(\frac{\beta-1}{\alpha}+1\right) T^{-\left(\frac{\beta-1}{\alpha}+1\right)},
        \\
         &\sim \frac{L\Delta}{2\alpha} \frac{\Gamma\left(\frac{\beta-1}{\alpha}+1\right)}{(2\bar{\gamma }L T)^{\frac{\beta-1}{\alpha}+1}}.
        \end{align*}

    \paragraph{Variance.}
    \[
        V_T = \frac{\gamma ^2 \sigma^2}{4\bar\gamma } \sum_{i=1}^{\infty} L i^{-\alpha} \left[ 1 - (1 - 2\bar\gamma  L i^{-\alpha})^T \right].
    \]
    Therefore, since $|1-2\bar\gamma  Li^{-\alpha}|<1$ and $\left[ 1 - (1 - 2\bar\gamma  L i^{-\alpha})^T \right]$ is dominated,
    \[
        \lim_{T \to \infty} V_T = \frac{\gamma ^2 \sigma^2 L}{4\bar\gamma } \zeta(\alpha) \qquad \text{with }\bar\gamma  = \gamma  -\frac{ \gamma ^2\tr(\Lambda)}{2}.
    \]

\end{proof}

\subsubsection{Asymptotic Risk with Averaging}
\label{sec:average}

For the constant schedule ($\eta_t=1$), let $\bar\theta_T:=\frac1T\sum_{t=1}^T\theta_t$ and
\[
\bar{\mathcal R}_T:=\mathbb E[f(\bar\theta_T)]-f_*
=\frac12\mathrm{Tr}\Big(\Lambda\,Q^\top\mathbb E\big[(\bar\theta_T-\theta_*)(\bar\theta_T-\theta_*)^\top\big]Q\Big).
\]
For $t\ge \ell$, $\mathbb E[\theta_t-\theta_*\mid\mathcal F_\ell]=(I-\gamma H)^{t-\ell}(\theta_\ell-\theta_*)$. Thus
$Q^\top\mathbb E[(\theta_t-\theta_*)(\theta_\ell-\theta_*)^\top]Q=(I-\gamma\Lambda)^{t-\ell}M_\ell$,
whose diagonal only involves $m_\ell$ (defined in \cref{eq:recurrence_moment}). The pairs with $t<\ell$ give the transposed matrices, with the same diagonal. Therefore
\[
\bar{\mathcal R}_T=\frac{1}{2T^2}\sum_{i=1}^\infty\lambda_i\Big[\sum_{\ell=1}^T m_{\ell,i}
+2\sum_{1\le \ell<t\le T}(1-\gamma\lambda_i)^{t-\ell}m_{\ell,i}\Big]=\bar B_T+\bar V_T,
\]
where, unrolling \cref{eq:recurrence_moment} with $\eta_t=1$ and using $1-2\gamma\lambda_i+\gamma^2\mathrm{Tr}(\Lambda)\lambda_i=1-2\bar\gamma\lambda_i$ (\cref{app:notations}),
\[
m_{\ell,i}=(1-2\bar\gamma\lambda_i)^{\ell}\,m_{0,i}+\gamma^2\sigma^2\lambda_i\sum_{j=0}^{\ell-1}(1-2\bar\gamma\lambda_i)^{j},
\]
the first term giving the bias and the second the variance. Factoring $m_{\ell,i}$ out of the two sums in the bracket and substituting $\lambda_i m_{0,i}=L\Delta\, i^{-\alpha-\beta}$ and $\lambda_i^2=L^2 i^{-2\alpha}$ (\cref{as:alphabeta}), we obtain
\begin{equation*}
\begin{cases}
\bar B_T=\dfrac{1}{2T^2}\displaystyle\sum_{i=1}^\infty\frac{L\Delta}{i^{\alpha+\beta}}
\sum_{\ell=1}^T\Big(1-\frac{2\bar\gamma L}{i^{\alpha}}\Big)^\ell
\Big[1+2\sum_{t=\ell+1}^{T}\Big(1-\frac{\gamma L}{i^{\alpha}}\Big)^{t-\ell}\Big],\\[5mm]
\bar V_T=\dfrac{1}{2T^2}\displaystyle\sum_{i=1}^\infty\frac{L^2\gamma^2\sigma^2}{i^{2\alpha}}
\sum_{\ell=1}^T\sum_{j=0}^{\ell-1}\Big(1-\frac{2\bar\gamma L}{i^{\alpha}}\Big)^j
\Big[1+2\sum_{t=\ell+1}^{T}\Big(1-\frac{\gamma L}{i^{\alpha}}\Big)^{t-\ell}\Big].
\end{cases}
\end{equation*}

\paragraph{Bias.}

The bracket gathers the pairs $(\ell,t)$ by $\min(\ell,t)$, so that
\[
\bar B_T=\frac{1}{2T^2}\sum_{i=1}^\infty\frac{L\Delta}{i^{\alpha+\beta}}
\sum_{\ell,t=1}^T\Big(1-\frac{2\bar\gamma L}{i^{\alpha}}\Big)^{\min(\ell,t)}
\Big(1-\frac{\gamma L}{i^{\alpha}}\Big)^{|t-\ell|}.
\]
Each pair $(\ell,t)$ contributes for every $T\ge\max(\ell,t)$. 
Writing $m = \min(\ell,t)$ and $d = |t-\ell|$, each value $(m,d)=(m,0)$ comes from the single pair $(m,m)$, while each value $(m,d)$ with $d \geq 1$ comes from the two pairs $(m, m+d)$ and $(m+d, m)$, which explains the factor $2$. Moreover, summing over $T \geq \max(\ell,t) = m+d$ removes the constraint $m+d \leq T$, so that the sum runs over all $m \geq 1$, $d \geq 0$ and factorizes, since $z^{\max(\ell,t)} = z^{m} z^{d}$.

Summing the geometric series in $m$ and $d$,
\[
\sum_{T=1}^\infty T^2\bar B_Tz^T=\frac{z}{2(1-z)}\sum_{i=1}^\infty\frac{L\Delta}{i^{\alpha+\beta}}
\frac{\big(1-\frac{2\bar\gamma L}{i^{\alpha}}\big)\big(1+z(1-\frac{\gamma L}{i^{\alpha}})\big)}
{\big(1-z(1-\frac{2\bar\gamma L}{i^{\alpha}})\big)\big(1-z(1-\frac{\gamma L}{i^{\alpha}})\big)}.
\]
With $z=1-s$, the two denominators read $s+(1-s)\frac{2\bar\gamma L}{i^{\alpha}}$ and
$s+(1-s)\frac{\gamma L}{i^{\alpha}}$, and the numerator is $2+O(s+i^{-\alpha})$. Setting
$x_i=is^{1/\alpha}$ (Riemann sum, dominated convergence) and $w=\gamma Lx^{-\alpha}$,
\[
\sum_{T=1}^\infty T^2\bar B_T(1-s)^T\sim\frac{L\Delta}{\alpha}(\gamma L)^{-\omega}s^{-(3-\omega)}
\int_0^\infty\frac{w^{\omega-1}dw}{(1+w)(1+\frac{2\bar\gamma}{\gamma}w)}
=\frac{L\Delta}{\alpha}L^{-\omega}\frac{\pi}{\sin(\pi\omega)}
\frac{(2\bar\gamma)^{1-\omega}-\gamma^{1-\omega}}{2\bar\gamma-\gamma}\,s^{-(3-\omega)},
\]
the integral being finite iff $0<\omega<2$. \cref{th:tauberian} with $\mu=0$ and
$p=3-\omega$ gives, assuming its one-sided condition holds,
\[
\bar B_T\sim\frac{L\Delta}{\alpha}\frac{\Gamma(\omega)}{(2-\omega)(1-\omega)}
\frac{(2\bar\gamma)^{1-\omega}-\gamma^{1-\omega}}{2\bar\gamma-\gamma}(LT)^{-\omega},
\qquad 0<\omega<2.
\]
For $\omega>2$, $T^2\bar B_T$ is a sum of nonnegative terms over the pairs $\max(\ell,t)\le T$, so by monotone convergence it tends to its value over all pairs, which is finite precisely because $\beta-\alpha>1$:
$\bar B_T\sim\frac{\Delta}{4\gamma\bar\gamma L}\sum_{i=1}^\infty i^{\alpha-\beta}
\big(1-\frac{2\bar\gamma L}{i^{\alpha}}\big)\big(2-\frac{\gamma L}{i^{\alpha}}\big)\,T^{-2}$.

\paragraph{Variance.}

Similarly, since $\sum_{j=0}^{m-1}\big(1-\frac{2\bar\gamma L}{i^{\alpha}}\big)^j
=\frac{i^{\alpha}}{2\bar\gamma L}\big[1-\big(1-\frac{2\bar\gamma L}{i^{\alpha}}\big)^m\big]$,
\[
\sum_{T=1}^\infty T^2\bar V_Tz^T=\frac{z}{2(1-z)^2}\sum_{i=1}^\infty\frac{L^2\gamma^2\sigma^2}{i^{2\alpha}}
\frac{1+z(1-\frac{\gamma L}{i^{\alpha}})}
{\big(1-z(1-\frac{2\bar\gamma L}{i^{\alpha}})\big)\big(1-z(1-\frac{\gamma L}{i^{\alpha}})\big)}.
\]
The same substitutions give, for every $\alpha>1$,
\[
\sum_{T=1}^\infty T^2\bar V_T(1-s)^T\sim\frac{\sigma^2}{\alpha}(\gamma L)^{\frac1\alpha}
s^{-(2+\frac1\alpha)}\int_0^\infty\frac{w^{1-\frac1\alpha}dw}{(1+w)(1+\frac{2\bar\gamma}{\gamma}w)},
\]
and \cref{th:tauberian} with $\mu=0$ and $p=2+\frac1\alpha$ yields, under the same assumption,
\[
\bar V_T\sim\frac{\alpha\,\Gamma(1-\frac1\alpha)}{\alpha+1}\,\sigma^2L^{\frac1\alpha}\gamma^2
\frac{\gamma^{\frac1\alpha-1}-(2\bar\gamma)^{\frac1\alpha-1}}{2\bar\gamma-\gamma}\,T^{\frac1\alpha-1}.
\]

If we assume $\gamma\to0$, then $\bar\gamma\sim\gamma$ and we recover
$\bar B_T\sim\frac{L\Delta}{\alpha}\frac{\Gamma(\omega)(2^{1-\omega}-1)}{(2-\omega)(1-\omega)}(L\gamma T)^{-\omega}$
if $\omega<2$,
$\bar B_T\sim\frac{\Delta\,\zeta(\beta-\alpha)}{2L\gamma^2}T^{-2}$ if $\omega>2$, and
$\bar V_T\sim\frac{\alpha\Gamma(1-\frac1\alpha)(1-2^{\frac1\alpha-1})}{\alpha+1}\sigma^2(L\gamma)^{\frac1\alpha}T^{\frac1\alpha-1}$.

\subsubsection{Linear Schedule (\cref{th:wsd}, case $c=1$)}
\label{app:linear}
\begin{proof}

We have:
\[\left\{\begin{aligned}
\eta_k &=  1 - \frac{k}{T}, \\
\bar\nu_k &= \frac{2\gamma\eta_k - \gamma^2\eta_k^2\tr(\Lambda)}{2} = \gamma\eta_k - \frac{1}{2}\tr(\Lambda)\gamma^2\eta_k^2, \\
\Pi_{T,i} &= \prod_{k=0}^{T-1} \left[ 1 - 2\bar\nu_k \frac{L}{i^\alpha} \right].
\end{aligned}\right.\]

By \cref{as:eta}, $\bar\gamma = \frac{2\gamma - \gamma^2\tr(\Lambda)}{2} <\frac{1}{2L}$. Then $\forall i\geq1, \forall  k\in\{1,...,T\}, 1 - 2\bar\nu_k \frac{L}{i^\alpha}  \geq0$.

\paragraph{Tauberian condition.} Let $p_0 := 1 - 1/\zeta(\alpha) > 0$ and $\delta := 2 - \gamma\,\mathrm{Tr}(\Lambda) > 0$. For $\eta \in [0,1]$, the factor $P_i(\eta) := 1 - 2\gamma\eta\lambda_i + \gamma^2\eta^2\mathrm{Tr}(\Lambda)\lambda_i$ satisfies $P_i(\eta) \geq p_0$, $|P_i'(\eta)| \leq 2\gamma\lambda_i$ and $1 - P_i(\eta) \geq \delta\gamma\lambda_i\eta$. Since $|\eta^{(T)}_k - \eta^{(T-1)}_k| \leq \frac{k}{cT(T-1)}$ for $k \leq T-2$ (also for WSD, with $\eta^{(T)}_k = \min(1, \frac{T-k}{cT})$), we get
\[
  \big|\log (\Pi_{T,i}) - \log (\Pi_{T-1,i})\big| \leq \frac{2\gamma\lambda_i}{p_0}\Big(\frac{1}{2c} + 1\Big) =: C_1\gamma\lambda_i ,
\]
which is bounded, hence $|\Pi_{T,i} - \Pi_{T-1,i}| \leq C_2\,\gamma\lambda_i\,\Pi_{T-1,i}$. Moreover, $\sum_{k} \eta^{(T-1)}_k \geq (T-1)/2$ gives $\Pi_{T-1,i} \leq e^{-\delta\gamma\lambda_i (T-1)/2}$. Therefore
\[
  |B_T - B_{T-1}| \leq C_3 \sum_{i \geq 1} i^{-2\alpha-\beta}\, e^{-\delta\gamma L T i^{-\alpha}/2} = O\big(T^{-1-\omega}\big),
\]
   by comparison with $\int_0^\infty x^{-2\alpha-\beta} e^{-aTx^{-\alpha}}\,dx = \frac{\Gamma(\omega+1)}{\alpha}(aT)^{-1-\omega}$, where $a := \delta\gamma L/2$. Hence the condition of \cref{th:tauberian} for the bias, $T^{1+\omega}(B_T - B_{T-1}) \leq K$, holds for all $\gamma < 2/\mathrm{Tr}(\Lambda)$. 
For the variance, write $V_T = \frac12\sum_i w_i \sum_{\ell=1}^{T} \eta_T(\ell)^2\, \Pi_T(\ell)$, where $w_i := L^2\gamma^2\sigma^2 i^{-2\alpha}$, $\eta_T(\ell) := \min(1, \frac{\ell}{cT})$ is the step size $\ell$ steps before the end, and $\Pi_T(\ell) := \prod_{\ell' < \ell} P_i(\eta_T(\ell'))$. Aligning the two horizons from the end, $0 \leq \eta_{T-1}(\ell) - \eta_T(\ell) \leq \eta_{T-1}(\ell)/T$, so that, with $E(\ell) := \sum_{\ell' < \ell} \eta_{T-1}(\ell')$, we have $|\log (\Pi_T(\ell)) - \log (\Pi_{T-1}(\ell))| \leq \frac{2\gamma\lambda_i}{p_0 T} E(\ell)$ and $\Pi_{T-1}(\ell) \leq e^{-\delta\gamma\lambda_i E(\ell)}$. Hence, for $T$ large enough,
\[
  |V_T - V_{T-1}| \leq \frac{C}{T} \sum_{i} w_i \sum_{\ell < T} \eta_{T-1}(\ell)^2\, e^{-\delta\gamma\lambda_i E(\ell)/2} + \frac12 \sum_i w_i\, e^{-\delta\gamma\lambda_i (T-1)/2},
\]
the last term coming from the additional step. Since $E(\ell) \geq \frac{\ell(\ell-1)}{2cT}$ for $\ell \leq cT$ and $E(\ell) \geq \ell - \frac{cT}{2} - 2$ otherwise, the first double sum is $O(T^{p-1})$, as for $\widetilde V_T$: modes $i \gtrsim T^{1/\alpha}$ contribute $O(T^{1/\alpha - 1})$, and modes $i \lesssim T^{1/\alpha}$ contribute $O\big(T^{-1/2}\sum_{i \lesssim T^{1/\alpha}} i^{-\alpha/2}\big)$. The second sum is $O(T^{1/\alpha - 2})$. Therefore $T^{2-p}|V_T - V_{T-1}| \leq K$, with $p = 1/\alpha$ if $\alpha < 2$ and $p = 1/2$ if $\alpha > 2$.

\paragraph{Bias.}

Let $j = T - k$. As $k$ goes from $0$ to $T-1$, $j$ goes from $T$ down to $1$, and $1 - \frac{k}{T} = \frac{j}{T}$.

\begin{align*}
1 - 2\bar\nu_k \frac{L}{i^\alpha} &= 1 - \frac{2L}{i^\alpha} \left[ \gamma   \left(\frac{j}{T}\right) - \frac{1}{2}\tr(\Lambda)\gamma  ^2 \left(\frac{j}{T}\right)^2 \right] = 1 - \underbrace{\frac{2\gamma   L}{i^\alpha}}_{\a_i=i^{-\alpha}\a} \left(\frac{j}{T}\right) + \underbrace{\frac{\tr(\Lambda)\gamma  ^2 L}{i^\alpha}}_{\b_i = i^{-\alpha}\b} \left(\frac{j}{T}\right)^2.
\end{align*}
Therefore,
\begin{align*}
\Pi_{T,i} &= \prod_{j=1}^{T} \left[ \frac{\b_i}{T^2} j^2 - \frac{\a_i}{T} j + 1 \right] = \left( \frac{\b_i}{T^2} \right)^T \prod_{j=1}^{T} \left( j^2 - \frac{\a_i T}{\b_i} j + \frac{T^2}{\b_i} \right),
\\&=
({\b_i})^T \prod_{j=1}^{T} \left( \left(\frac{j}{T}\right)^2 - \frac{\a_i }{\b_i} \left(\frac{j}{T}\right) + \frac{1}{\b_i} \right)=
 \exp\left[\sum_{j=1}^{T} \log\left( \b_i\left(\frac{j}{T}\right)^2 - \a_i \left(\frac{j}{T}\right) + 1\right)\right],
\\&\sim \sqrt{\b_i-\a_i+1}\exp\left[T\int_{0}^{1} \log\left( \b_iu^2 - \a_iu + 1\right)du\right].
\end{align*}

By Euler-Maclaurin, $\displaystyle \sum_{j=1}^{T} f\left(\frac{j}{T}\right) = T\int_{0}^{1} f(u)du + \frac{f(1)-f(0)}{2} + r_{i,T}$ 
and $r_{i,T} = O(\frac{i^{-\alpha}}{T}).$

Let $\s_{i,T} = \Pi_{T,i}\exp\left[-T\int_{0}^{1} \log\left( \b_iu^2 - \a_iu + 1\right)du\right] = e^{r_{i,T}}\sqrt{\b_i-\a_i+1} \to 1$ as $i\to \infty$.

$\s_i := \lim_{T\to \infty} \s_{i,T} = \sqrt{\b_i-\a_i+1}$. The convergence is uniform with respect to $i$.
As $z\to 1$,
\begin{align*}
\mathcal B(z) &= \frac{L\Delta}{2}\sum_{T=0}^\infty \sum_{i=1}^\infty \frac{\s_{i,T}}{i^{\alpha+\beta}}e^{T\int_0^1\log(\frac{\b}{i^\alpha}u^2 - \frac{\a}{i^\alpha}u+1)du}z^T,
\\&
\sim \frac{L\Delta}{2} \sum_{i=1}^\infty \frac{\s_i}{i^{\alpha+\beta}} \frac{1}{1-ze^{\int_0^1\log(\frac{\b}{i^\alpha}u^2 - \frac{\a}{i^\alpha}u+1)du}},
\end{align*}
the inversion being possible by Fubini's theorem (all terms are positive).

Let us take the $\mu$-th derivative of $\mathcal B$  so that $\mathcal B^{(\mu)}(z)$ diverges as $z \to 1$.

$\displaystyle \mathcal B(z) = \frac{L\Delta}{2} \sum_{i=1}^\infty \sum_{T=0}^\infty \frac{\s_{i,T}}{i^{\alpha+\beta}} e^{TI_i}z^T$ so $\displaystyle\mathcal B'(z) =  \frac{L\Delta}{2} \sum_{i=1}^\infty \sum_{T=1}^\infty \frac{T\s_{i,T}}{i^{\alpha+\beta}} e^{TI_i}z^{T-1}$. 

We can iterate this $\mu$ times and we get $(z \to 1)$:
\begin{align*}
\mathcal B^{(\mu)}(z) \sim \frac{L\Delta}{2} \mu! \sum_{i=1}^\infty \frac{\s_i}{i^{\alpha+\beta}} \frac{\left(e^{I_i}\right)^\mu}{\left(1 - z e^{I_i}\right)^{\mu+1}}.
\end{align*}

First, $\s_i = 1+ O(i^{-\alpha})$. Second, we know that $\int_0^1\ln(\frac{\b}{i^\alpha}u^2 - \frac{\a}{i^\alpha}u+1)du = \frac{\b}{3i^\alpha} - \frac{\a}{2i^\alpha} + O(i^{-2\alpha})$ so $e^{I_i} = 1 - Ci^{-\alpha}+O(i^{-2\alpha})$ with $C=\frac{\a}{2}-\frac{\b}{3}$.

By defining $x_i = i s^{1/\alpha}$ and forcing the step size $\Delta x = s^{1/\alpha}$, we can reformulate the sum for large $i$ (where $\s_i \to 1$):
\begin{align*}
\mathcal{B}^{(\mu)}(1-s) &\sim 
\frac{L\Delta}{2} \mu! \sum_{i=1}^\infty \frac{ (1+O((x_is^{-1/\alpha})^{-\alpha}))(x_i s^{-1/\alpha})^{-(\alpha+\beta)} }{ [s + (1-s)( C (x_i s^{-1/\alpha})^{-\alpha}+O((x_is^{-1/\alpha})^{-2\alpha}))]^{\mu+1} },
\\&=
\frac{L\Delta}{2} \mu! \sum_{i=1}^\infty \frac{(1+O(x_i^{-\alpha}s)) x_i^{-(\alpha+\beta)} s^{1 + \frac{\beta}{\alpha}} }{ [s + (1-s) C x_i^{-\alpha} s +O(x_i^{-2\alpha}s^2)]^{\mu+1} },
\\&=
\frac{L\Delta}{2} \mu! \sum_{i=1}^\infty \frac{(1+O(x_i^{-\alpha}s))  x_i^{-(\alpha+\beta)} s^{1 + \frac{\beta}{\alpha}} }{ s^{\mu+1} [1 + (1-s) C x_i^{-\alpha}+ O(x_i^{-2\alpha}s) ]^{\mu+1} }.
\end{align*}

To recognize the standard Riemann form, we multiply the numerator and denominator by $x_i^{\alpha(\mu+1)}$:
\begin{align*}
\mathcal{B}^{(\mu)}(1-s) &\sim
\frac{L\Delta}{2} \mu! \, s^{\frac{\beta}{\alpha} - \mu} \sum_{i=1}^\infty \frac{(1+O(x_i^{-\alpha}s))  x_i^{\alpha\mu - \beta} }{ [x_i^\alpha + (1-s) C + O(x_i^{-\alpha}s) ]^{\mu+1} },
\\&=
\frac{L\Delta}{2} \mu! \, s^{\frac{\beta-1}{\alpha} - \mu} \sum_{i=1}^\infty \underbrace{ \frac{ (1+O(x_i^{-\alpha}s)) x_i^{\alpha\mu - \beta} }{ [x_i^\alpha + (1-s) C + O(x_i^{-\alpha}s)]^{\mu+1} }  }_{=:U_i}\underbrace{s^{1/\alpha}}_{= \Delta x}.
\end{align*}

Set $
\begin{aligned}[t] g_s : 
[0,\infty) &\to \R,\\ x &\mapsto U_i 
\end{aligned}$ with $i = \lceil x s^{-1/\alpha}\rceil $. 
Then $g_s(x_i) = U_i$, and since $s^{1/\alpha} = \Delta x$,
\[
 \sum_{i=1}^\infty  \frac{ x_i^{\alpha\mu - \beta} }{ [x_i^\alpha + (1-s) C + O(x_i^{-\alpha}s)]^{\mu+1} } s^{1/\alpha} =  \int_{0}^\infty  g_s(u) du.
\]

For every $x \in [0,\infty)$, $g_s(x) \xrightarrow{s\to0} \frac{x^{\alpha\mu-\beta}}{(x^\alpha +C)^{\mu+1}}$. By the dominated convergence theorem,
\begin{align*}
\mathcal{B}^{(\mu)}(1-s) &\sim
\frac{L\Delta}{2} \mu! \, s^{\frac{\beta-1}{\alpha} - \mu} \int_0^\infty \frac{x^{\alpha\mu - \beta}}{[x^\alpha + C]^{\mu+1}} dx.
\end{align*}

Therefore as $s \to 0$,
\begin{align*}
\mathcal B^{(\mu)} \sim \frac{L\Delta}{2\alpha} C^{-\frac{\beta-1}{\alpha}-1} \Gamma\left(\frac{\beta-1}{\alpha}+1\right)\Gamma\left(\mu - \frac{\beta-1}{\alpha}\right) s^{\frac{\beta-1}{\alpha} - \mu}.
\end{align*}

Moreover, the Tauberian condition established above gives $T^{1+\omega}(B_T-B_{T-1}) \le K$, so \cref{th:tauberian} applies with $p = \mu - \frac{\beta-1}{\alpha}$ (so that $2+\mu-p = 1+\omega$ and $1+\mu-p = \omega$):
\[
\lim_{z\to 1^-} (1-z)^{\mu-\frac{\beta-1}{\alpha}}\,\mathcal B^{(\mu)}(z)
= \Gamma\Big(\mu-\frac{\beta-1}{\alpha}\Big)\lim_{T\to\infty} T^{\omega} B_T ,
\]
hence
\[
B_T \sim \frac{L\Delta}{2\alpha} \frac{\Gamma\left(\frac{\beta-1}{\alpha} + 1\right)}{\left[ \gamma  L T \left( 1 - \frac{\gamma  Tr(\Lambda)}{3} \right) \right]^{\frac{\beta-1}{\alpha} + 1}}.
\]

\paragraph{Variance.}

The variance is defined as 
\[
V_T = \frac{1}{2}\sum_{i=1}^\infty \sum_{k=0}^{T-1}  \frac{L^2}{i^{2\alpha}}\gamma^2\eta_k^2 \sigma^2 \prod_{m=k+1}^{T-1} \left[ 1 - 2\bar{\nu}_m \frac{L}{i^\alpha} \right].
\]

The same method can be used for the product.
Let $j = T-m$.
\[
\prod_{m=k+1}^{T-1} \left[ 1 - 2\bar{\nu}_m \frac{L}{i^\alpha} \right] 
=
\prod_{j=1}^{T-k-1} \left[ 1 - 2\bar{\nu}_{T-j} \frac{L}{i^\alpha} \right] =
\exp\left[\sum_{j=1}^{T-k-1} \log\left( \b_i\left(\frac{j}{T}\right)^2 - \a_i \left(\frac{j}{T}\right) + 1\right)\right].
\]

By Euler-MacLaurin, 
\begin{multline*}\sum_{j=1}^{T-k-1} \log\left( \b_i\left(\frac{j}{T}\right)^2 - \a_i \left(\frac{j}{T}\right) + 1\right) = T \int_0^{1-\frac{k+1}{T}} \log\left( \b_i u^2 - \a_i u + 1 \right) du  \\+ \frac{\log(\b_i v_k^2 - \a_i v_k + 1)}{2} +o(1).\end{multline*}

Hence,
\[
\prod_{m=k+1}^{T-1} \left[ 1 - 2\bar{\nu}_m \frac{L}{i^\alpha} \right]= \underbrace{(1+o(1))\sqrt{\b_i v_k^2 - \a_i v_k + 1}}_{\s_{i,k} \xrightarrow{i} 1}  \exp\left[ T \int_0^{1-\frac{k+1}{T}} \log\left( \b_i u^2 - \a_i u + 1 \right) du \right],
\]
with $v_k = 1 - \frac{k+1}{T}$.
Let $\ell = T - k$ and $x = \frac{\ell-1}{T}$. The sum over $\ell$ is a Riemann's sum, so
\begin{align*}
V_T &= \sum_{i=1}^\infty \sum_{\ell=1}^{T} \frac{L^2}{i^{2\alpha}}\gamma ^2 \sigma^2 \left(\frac{\ell}{T}\right)^2 \s_{i,T-\ell} \exp\left[ T \int_0^{\frac{\ell-1}{T}} \log\left( \b_i u^2 - \a_i u + 1 \right) du \right], \\
&= \sum_{i=1}^\infty T \int_0^1 \frac{L^2}{i^{2\alpha}}\gamma ^2 \sigma^2 x^2 \s_{i,T(1-x)}\exp\left[ T \int_0^{x} \log\left( \b_i u^2 - \a_i u + 1 \right) du \right] dx +O(1/T).
\end{align*}

\paragraph{Case $1 < \alpha < 2$:}

Knowing that $\s_{i,k} = 1+O(i^{-\alpha})$ uniformly, the $z$-transform is:
\begin{align*}
\mathcal V(z) &=  \frac{1}{2} \sum_{T=0}^\infty \left( \sum_{i=1}^\infty \int_0^1 \frac{L^2}{i^{2\alpha}}\gamma ^2 \sigma^2 x^2  (1+O(
i^{-\alpha}))\exp\left[ T \int_0^{x-\frac{1}{T}} \log\left( \b_i u^2 - \a_i u + 1 \right) du \right] dx + O(1/T) \right)z^T, 
\\&=
\frac{1}{2}\sum_{i=0}^\infty \int_0^1 \frac{L^2}{i^{2\alpha}}\gamma ^2 \sigma^2 x^2 (1+O(
i^{-\alpha}))  \frac{ze^{I_i(x)}}{(1-ze^{I_i(x)})^2} dx + \underbrace{\frac{1}{2}\sum_{T=0}^\infty O(1/T)z^T}_{=:R(z)}.
\end{align*}

Indeed, $\sum_{T=0}^\infty T y^T = \sum_{T=0}^\infty y\frac{d}{dy}y^{T} = y\frac{d}{dy} \frac{1}{1-y} = \frac{y}{(1-y)^2} $. For $i \to \infty$,
\begin{align*}
I_i(x) &= \int_0^{x}\log\left( \b_i u^2 - \a_i u + 1 \right) du 
= \int_0^{x} \b_i u^2 - \a_i u + O(i^{-2\alpha})du ,
\\&= i^{-\alpha}\left(\frac{\b}{3}x^3 - \frac{\a}{2}x^2\right) + O(i^{-2\alpha})
=
-i^{-\alpha}C(x) +O(i^{-2\alpha}),
\end{align*}
Where $C(x) = -\frac{\b}{3}x^3 + \frac{\a}{2}x^2$.

\begin{align*}
\mathcal V(z)-R(z) &=\frac{1}{2} \sum_{i=1}^\infty \int_0^1 \frac{L^2}{i^{2\alpha}}\gamma ^2 \sigma^2 x^2 (1+O(
i^{-\alpha}))\frac{ze^{I_i(x)}}{(1-ze^{I_i(x)})^2} dx ,
\\&=
\frac{1}{2}\sum_{i=1}^\infty \int_0^1 \frac{L^2}{i^{2\alpha}}\gamma ^2 \sigma^2 x^2  (1+O(
i^{-\alpha}))\frac{(1-s)e^{I_i(x)}}{(1-(1-s)(1-i^{-\alpha}C(x) +O(i^{-2\alpha})))^2} dx,
\\&=
\frac{1}{2}\sum_{i=1}^\infty \int_0^1 \frac{L^2}{i^{2\alpha}}\gamma ^2 \sigma^2 x^2 \frac{1+O(i^{-\alpha})}{(s + (1-s)i^{-\alpha}C(x)+O(i^{-2\alpha}))^2} dx,
\\&=
\frac{1}{2}\int_0^1 L^2 \gamma ^2 \sigma^2   \sum_{i=1}^\infty  \frac{x^2}{i^{2\alpha}} \frac{1+O(i^{-\alpha})}{(s + (1-s)i^{-\alpha}C(x)+O(i^{-2\alpha}))^2} dx.
\end{align*}

Here the sum diverges when $z \to 1$, so there is no need to differentiate $\mathcal V-R$. We then have to turn the sum into an integral.
Set $x_i = is^{1/\alpha}$. Therefore $\Delta x_i = s^{1/\alpha}$.

\begin{align*}
\mathcal V(z) -R(z)& =
\frac{1}{2}\sum_{i=1}^\infty \int_0^1 L^2 \gamma ^2 \sigma^2    \frac{u^2}{(x_is^{-1/\alpha})^{2\alpha}} \frac{1+O((x_is^{-1/\alpha})^{-\alpha})}{(s + (1-s) (x_is^{-1/\alpha})^{-\alpha}C(u)+O((x_is^{-1/\alpha})^{-2\alpha}))^2} du,
\\&=
\frac{L^2 \gamma ^2 \sigma^2}{2}\sum_{i=1}^\infty \int_0^1     \frac{u^2}{x_i^{2\alpha}s^{-2}} \frac{1+O(x_i^{-\alpha}s)}{(s + (1-s) x_i^{-\alpha}sC(u)+O(x_i^{-2\alpha}s^{2}))^2} du,
\\&=
\frac{L^2 \gamma ^2 \sigma^2}{2}s^{-1/\alpha}\sum_{i=1}^\infty \underbrace{\int_0^1     \frac{u^2}{x_i^{2\alpha}} \frac{1+O(x_i^{-\alpha}s)}{(1 + (1-s) x_i^{-\alpha}C(u)+O(x_i^{-2\alpha}s))^2} s^{1/\alpha}du}_{=:U_i(s)}.
\end{align*}

Set $g_s : 
\begin{aligned}[t]
[0,\infty) &\to \R,\\ x &\mapsto U_i(s)
\end{aligned}$ with $i = \lceil x s^{-1/\alpha}\rceil $. Then $\int_0^\infty g_s(x)dx = \sum_{i=1}^\infty U_i(s)$.

Moreover, $\displaystyle \forall i\geq1, g_s(x) \xrightarrow{s\to0}\int_0^1 \frac{u^2}{x^{2\alpha}}\frac{1}{(1+x^{-\alpha}C(u))^2}du.$

By the Dominated convergence theorem,
\[\sum_{i=1}U_i(s) \xrightarrow{s\to 0} \int_0^\infty\int_0^1 \frac{u^2}{x^{2\alpha}}\frac{1}{(1+x^{-\alpha}C(u))^2}dudx.
\]

\bigskip

Let $w = C(u)x^{-\alpha}$, then $dx = -\frac{1}{\alpha} C(u)^{1/\alpha} w^{-1/\alpha - 1} dw$.
\begin{align*}
\int_{0}^\infty \frac{x^{-2\alpha}}{(1 + C(u)x^{-\alpha})^2} dx
&= \int_{0}^\infty \frac{C(u)^{-2} w^2}{(1+w)^2} \left( \frac{1}{\alpha} C(u)^{1/\alpha} w^{-1/\alpha - 1} \right) dw,
\\&= \frac{1}{\alpha} C(u)^{\frac{1-2\alpha}{\alpha}} \int_0^\infty \frac{w^{1-1/\alpha}}{(1+w)^2} dw,
\\&= \frac{1}{\alpha} C(u)^{\frac{1-2\alpha}{\alpha}} \Gamma\left(2 - \frac{1}{\alpha}\right) \Gamma\left(\frac{1}{\alpha}\right).
\end{align*}

Therefore, the behavior of the z-transform is:
\begin{align*}
\mathcal V(1-s) - R(1-s) &\sim \frac{L^2 \gamma ^2 \sigma^2 }{2\alpha}s^{-1/\alpha}   \Gamma\left(2 - \frac{1}{\alpha}\right) \Gamma\left(\frac{1}{\alpha}\right) \int_0^1 u^2 C(u)^{\frac{1-2\alpha}{\alpha}} du .
\end{align*}

By the Tauberian theorem, if $\mathcal V(1-s)-R(1-s) \sim K s^{-\omega}$ with $\omega = \frac{1}{\alpha} > 0$, then the $T$-th coefficient behaves as $V_T-O(1/T) \sim \frac{K}{\Gamma(\omega)} T^{\omega - 1}$.
\begin{align*}
V_T -O(1/T)&\sim \frac{1}{\Gamma\left(\frac{1}{\alpha}\right)} \left[ \frac{L^2 \gamma ^2 \sigma^2 }{2\alpha} \Gamma\left(2 - \frac{1}{\alpha}\right) \Gamma\left(\frac{1}{\alpha}\right) \int_0^1 u^2 C(u)^{\frac{1-2\alpha}{\alpha}} du \right] T^{\frac{1-\alpha}{\alpha}} ,
\\
V_T&\sim \frac{L^2 \gamma ^2 \sigma^2}{2\alpha} \Gamma\left(2 - \frac{1}{\alpha}\right) \left( \int_0^1 u^2 C(u)^{\frac{1-2\alpha}{\alpha}} du \right) T^{\frac{1-\alpha}{\alpha}}.
\end{align*}
Because $O(1/T)$ is negligible before $T^{\frac{1-\alpha}{\alpha}}$.

We can now substitute $C(u)$ into the temporal integral and factor out the leading terms:
\begin{align*}
\int_0^1 u^2 C(u)^{\frac{1-2\alpha}{\alpha}} du &= \int_0^1 u^2 \left( \frac{\a}{2}u^2 - \frac{\b}{3}u^3 \right)^{\frac{1-2\alpha}{\alpha}} du ,
\\&= \left(\frac{\a}{2}\right)^{\frac{1-2\alpha}{\alpha}} \int_0^1 u^2 \cdot u^{2\left(\frac{1-2\alpha}{\alpha}\right)} \left( 1 - \frac{2\b}{3\a}u \right)^{\frac{1-2\alpha}{\alpha}} du ,
\\&= \left(\frac{\a}{2}\right)^{\frac{1-2\alpha}{\alpha}} \int_0^1 u^{\frac{2-2\alpha}{\alpha}} \left( 1 - \frac{2\b}{3\a}u \right)^{\frac{1-2\alpha}{\alpha}} du.
\end{align*}

Let $\mathcal{I}_{\alpha, \gamma } = \int_0^1 u^{\frac{2-2\alpha}{\alpha}} \left( 1 - \frac{\gamma  \tr(\Lambda)}{3} u \right)^{\frac{1-2\alpha}{\alpha}} du$ .

The final asymptotic equivalent for the variance under a linear schedule is therefore :
\begin{align*}V_T &\sim  \frac{L^2 \gamma ^2 \sigma^2}{2\alpha} \Gamma\left(2 - \frac{1}{\alpha}\right) \left(\gamma  L\right)^{\frac{1-2\alpha}{\alpha}} \mathcal{I}_{\alpha, \gamma } T^{\frac{1-\alpha}{\alpha}}, \qquad \text{if }1<\alpha<2.
\end{align*}

\paragraph{Case $\alpha > 2$.}
The spectral tail decays too rapidly; the integral $\mathcal{I}_{\alpha, \gamma }$ diverges.

This time, we cannot concentrate on large $i$. However, with a fixed $i$, knowing $C(u) =\gamma  L u^2 +O(u^3)$ as $u \to 0$, we can cut the integral along $u$.
\begin{align*}
\mathcal V(z)-R(z) &= \frac{1}{2} \sum_{i=1}^\infty \int_0^1 \frac{L^2}{i^{2\alpha}}\gamma ^2 \sigma^2 u^2   S_{i,T(1-u)}\frac{ze^{I_i(u)}}{(1-ze^{I_i(u)})^2} du,
\\
&=
\frac{1}{2} \sum_{i=1}^\infty \int_0^1 \frac{L^2}{i^{2\alpha}}\gamma ^2 \sigma^2 u^2 (1+O(i^{-\alpha}u))\frac{ze^{I_i(u)}}{(1-ze^{I_i(u)})^2} du,
\\&=
\frac{1}{2} \sum_{i=1}^\infty \int_0^1 \frac{L^2}{i^{2\alpha}}\gamma ^2 \sigma^2 u^2 (1+O(i^{-\alpha}u))\frac{z(1-C(u)i^{-\alpha}+O(i^{-\alpha}u^2))}{(1-z(1-C(u)i^{-\alpha}+O(i^{-\alpha}u^2)))^2} du,
\\
\mathcal V(1-s) - R(1-s)&=
\frac{1}{2} \sum_{i=1}^\infty \int_0^1 \frac{L^2}{i^{2\alpha}}\gamma ^2 \sigma^2 u^2 (1+O(i^{-\alpha}u))\frac{(1-s)(1-C(u)i^{-\alpha}+O(i^{-\alpha}u^2))}{(1-(1-s)(1-C(u)i^{-\alpha}+O(i^{-\alpha}u^2)))^2} du.
\end{align*}

Let $u = s^{1/2} w$. $du=s^{1/2}dw$,
\begin{align*}
\mathcal V(1-s) - R(1-s) 
&= \frac{L^2 \gamma ^2 \sigma^2}{2} \sum_{i=1}^\infty \frac{1}{i^{2\alpha}} \int_0^{s^{-1/2}} \frac{sw^2 \left(1 + {O}(s + i^{-\alpha}s^{1/2}w)\right)}{\left(s + (1-s)i^{-\alpha}C(s^{1/2}w) + {O}(i^{-\alpha}sw^2)\right)^2} s^{1/2}dw,
\\&=
\frac{L^2 \gamma ^2 \sigma^2}{2} s^{-1/2}\sum_{i=1}^\infty
\underbrace{
\frac{1}{i^{2\alpha}} \int_0^{s^{-1/2}} \frac{w^2 \left(1 + {O}(s + i^{-\alpha}s^{1/2}w)\right)}{\left(1 + (1-s)i^{-\alpha}C(s^{1/2}w)/s + {O}(i^{-\alpha}w^2)\right)^2} dw
}_{U_i(s)}.
\end{align*}

First, $\displaystyle
\frac{C(s^{1/2}w)}{s}  = \frac{-\frac{\b}{3}(s^{1/2}w)^3+\frac{\a}{2}(s^{1/2}w)^2}{s} \xrightarrow{s \to 0} \frac{\a}{2}w^2.
$ For every $i \geq 1$, by the dominated convergence theorem,
\begin{align*}
U_i(s) &= \frac{1}{i^{2\alpha}}\int_0^\infty \frac{w^2 \left(1 + {O}(s + i^{-\alpha}s^{1/2}w)\right)}{\left(1 + (1-s)i^{-\alpha}C(s^{1/2}w)/s + {O}(i^{-\alpha}w^2)\right)^2} \mathbf 1_{[0, s^{-1/2}]}(w)dw,
\\&
\xrightarrow{s\to0} \frac{1}{i^{2\alpha}}\int_0^\infty\frac{w^2}{(1+\frac{\a}{2}w^2i^{-\alpha})^2}dw.
\end{align*}

Therefore, using the DCT once again,
\[
\sum_{i=1}^\infty U_i(s) \xrightarrow{s\to0} \sum_{i=1}^\infty \frac{1}{i^{2\alpha}} \int_0^\infty\frac{w^2}{(1+\frac{\a}{2}w^2i^{-\alpha})^2}dw.
\]

Let $u = \sqrt{\frac{\a}{2}i^{-\alpha}}w$.

The integral becomes:
\[
\int_0^{\infty} \frac{\frac{2}{i^{-\alpha}\a}\, w^2}{(1+w^2)^2} \sqrt{\frac{2}{i^{-\alpha}\a}}\,\mathrm{d}w
=  (i^{-\alpha}\gamma  L)^{-3/2} \int_0^{\infty} \frac{w^2}{(1+w^2)^2}\,\mathrm{d}w = (i^{-\alpha}\gamma  L)^{-3/2} \frac{\pi}{4}.
\]

Therefore
\begin{align*}
\mathcal V(1-s)  - R(1-s) &\sim \frac{L^2 \gamma ^2 \sigma^2}{2} s^{-1/2}\sum_{i=1}^\infty (L\gamma )^{-3/2} i^{-\alpha/2} \sim \frac{1}{2}(L\gamma )^{1/2} \sigma^2 s^{-1/2} \zeta(\alpha/2)\frac{\pi}{4}.
\end{align*}

Applying the Tauberian theorem with $\omega = 1/2$ yields the $T^{-1/2}$ rate (the $O(1/T)$ term is negligible):
\begin{align*}
V_T &\sim \frac{1}{\Gamma(1/2)} \left[ \frac{\pi}{8} L^{1/2} \gamma ^{1/2} \sigma^2 \zeta\left(\frac{\alpha}{2}\right) \right] T^{-1/2} = \frac{\sqrt{\pi}}{8} L^{1/2} \gamma ^{1/2} \sigma^2 \zeta\left(\frac{\alpha}{2}\right) T^{-1/2}.
\end{align*}

\end{proof}

\subsubsection{WSD Schedule (\cref{th:wsd})}
\label{app:wsd}
\begin{proof}

\begin{align*}
\Pi_{T,i}
&= \prod_{j=0}^{T-1}
   \left[1 - 2\frac{L}{i^\alpha}
   \!\left(\gamma\eta_j - \frac{\gamma^2\eta_j^2\operatorname{tr}(\Lambda)}{2}\right)\right] ,
   \\
&= \left(1 - \a_i + \b_i\right)^{(1-c)T+1}
   \prod_{j=(1-c)T+1}^{T-1}
   \left(1 - \a_i\frac{T-j}{cT} + \b_i\!\left(\frac{T-j}{cT}\right)^{\!2}\right).
\end{align*}

Set $\ell = j-(1-c)T$ (from $1$ to $cT-1$), so $\dfrac{T-j}{cT} = 1 - \dfrac{\ell}{cT}$.
Then set $p = cT - \ell$ (from $cT-1$ to $1$), so $1 - \dfrac{\ell}{cT} = \dfrac{p}{cT}$.
\[
\sum_{j=(1-c)T+1}^{T-1}
\log\!\left(1 - \a_i\frac{T-j}{cT}+\b_i\!\left(\frac{T-j}{cT}\right)^{\!2}\right)
= \sum_{p=1}^{cT-1}
  \log\!\left(1 - \a_i\frac{p}{cT}+\b_i\!\left(\frac{p}{cT}\right)^{\!2}\right).
\]
By Euler–-Maclaurin ($\sum_{p=1}^{cT-1}$ has boundary values $f(0)=0$, $f(1)=\log(1-\a_i+\b_i)$):
\[
\sum_{p=1}^{cT-1}
\log\!\left(1-\a_i\frac{p}{cT}+\b_i\!\left(\frac{p}{cT}\right)^{\!2}\right)
= cT\!\int_0^1\!\log\!\left(1-\a_ix+\b_ix^2\right)dx
   {- \frac{\log(1-\a_i+\b_i)}{2}+O\left(\frac{1}{i^\alpha T}\right)}.
\]
Then we can write:
\begin{align*}
\Pi_{T,i}
&= \s_{i,T}\exp\!\left[
  (1-c)T\log(1-\a_i+\b_i)
  + cT\!\int_0^1\!\log\!\left(1-\a_ix+\b_ix^2\right)dx
\right],
\end{align*}
with $\s_{i,T} = \sqrt{1-\a_i+\b_i}(1+O(\tfrac{1}{i^\alpha T}))$. Let $\s_i := \lim_{T\to\infty }\s_{i,T} = \sqrt{1-\a_i+\b_i}$.
\paragraph{Bias.}
The method is identical to the linear schedule, with
\[
I_i = (1-c)\log(1-\a_i+\b_i)
    + c\!\int_0^1\!\log\!\left(1-\a_ix+\b_ix^2\right)dx.
\]
For $i$ large, $I_i \sim -i^{-\alpha}\gamma  L\kappa(c)$
with $\kappa(c) = 2-c-\gamma \tr(\Lambda)(1-\frac{2c}{3})$,
so the Tauberian argument gives
\[
B_T \sim \frac{L\Delta}{2\alpha}
\frac{\Gamma\!\left(\frac{\beta-1}{\alpha}+1\right)}
{\left[\gamma  LT\!\left(2-c-\gamma \tr(\Lambda)\!\left(1-\frac{2c}{3}\right)\right)
\right]^{\frac{\beta-1}{\alpha}+1}}.
\]

\paragraph{Variance.}
\begin{align*}
V_T
&= \frac{1}{2}\sum_{i=1}^\infty \sum_{k=0}^{T-1}
   \frac{L^2}{i^{2\alpha}}\gamma^2\eta_k^2\sigma^2
   \prod_{m=k+1}^{T-1}\!\left[1-2\bar\nu_m\frac{L}{i^\alpha}\right]
= V_T^A + V_T^B,
\end{align*}
where $V_T^A$ (resp.\ $V_T^B$) sums over $k \le (1-c)T$ (constant phase)
and $k > (1-c)T$ (cooldown phase).

\subparagraph{Constant part: $V_T^A$.}
For the cooldown product from $m=(1-c)T+1$ to $T-1$,
set $\ell=T-m$ (from $cT-1$ to $1$); then $\eta_m = \ell/(cT)$
and the factor is $1-\a_i\ell/(cT)+\b_i(\ell/(cT))^2$. Let $\s_i^A := (1-\a_i+\b_i)^{-1/2}\xrightarrow{i\to\infty}1$. Hence

\begin{align*}
V_T^A
&= \frac{1}{2}\sum_{i=1}^\infty \sum_{k=0}^{(1-c)T}
   \frac{L^2\gamma ^2\sigma^2}{i^{2\alpha}}
   [1-\a_i+\b_i]^{(1-c)T-k}
   \exp\!\left[\sum_{\ell=1}^{cT-1}
   \log\!\left(1-\a_i\frac{\ell}{cT}+\b_i\!\left(\frac{\ell}{cT}\right)^{\!2}\right)\right],
\\
&\sim
\frac{1}{2}\sum_{i=1}^\infty \sum_{k=0}^{(1-c)T}
   \frac{L^2\gamma ^2\sigma^2}{i^{2\alpha}}
   [1-\a_i+\b_i]^{(1-c)T-k}
   \frac{1}{\sqrt{1-\a_i+\b_i}}
   \exp\!\left[cT\!\int_0^1\!\log\!\left(1-\a_ix+\b_ix^2\right)dx\right],
\\
&\sim
\frac{1}{2}\sum_{i=1}^\infty \frac{L^2\gamma ^2\sigma^2}{i^{2\alpha}}
   \frac{[1-\a_i+\b_i]^{(1-c)T+1}-1}{(1-\a_i+\b_i)-1}\,\s_i^A
   \exp\!\left[cT\!\int_0^1\!\log\!\left(1-\a_ix+\b_ix^2\right)dx\right].
\end{align*}
With $\a_i=\a i^{-\alpha}$, $\b_i=\b i^{-\alpha}$,
$J_i=\int_0^1\log(1-\a_iu+\b_iu^2)du = \q i^{-\alpha}+O(i^{-2\alpha})$ for $i\gg 1$,
\[
V_T^A \sim
\frac{L^2\gamma ^2\sigma^2}{2(\b-\a)}
\sum_{i=1}^\infty i^{-\alpha} \s_i^A
\left[(1-\a_i+\b_i)^{(1-c)T+1}e^{cJ_iT} - e^{cJ_iT}\right].
\]
The $z$-transform is
\begin{align*}
\mathcal{V}^A(z) = \frac{L^2\gamma ^2\sigma^2}{2(\b-\a)}
\sum_{i=1}^\infty i^{-\alpha} \s_i^A
&\left[
  \frac{1-\a_i+\b_i}{1-z(1-\a_i+\b_i)^{1-c}e^{cJ_i}}
  - \frac{1}{1-ze^{cJ_i}}
\right],
\\=
\frac{L^2\gamma ^2\sigma^2}{2(\b-\a)}
\sum_{i=1}^\infty i^{-\alpha} \s_i^A
&\bigg[
  \frac{1+(-\a+\b)i^{-\alpha}}{1-z(1-\a_i+\b_i)^{1-c}(1+c\q i^{-\alpha} +O(i^{-2\alpha}))}
  \\
  &- \frac{1}{1-z(1+c\q i^{-\alpha} +O(i^{-2\alpha}))}
\bigg].
\end{align*}
Using
$\log(1-\a_i+\b_i) = (\b-\a)i^{-\alpha}+O(i^{-2\alpha})$,
$\exp(cJ_i) = 1+c\q i^{-\alpha}+O(i^{-2\alpha})$,
the two poles are $\kappa_1 = (\a-\b)(1-c)-c\q $ and $\kappa_2=-c\q $.

\begin{align*}
\mathcal{V}^A(1-s) 
&=
\frac{L^2\gamma ^2\sigma^2}{2(\b-\a)}
\sum_{i=1}^\infty i^{-\alpha}
\left[
  \frac{1}{s+(1-s)\kappa_1i^{-\alpha} +O(i^{-2\alpha})}
  - \frac{1}{s+(1-s)\kappa_2i^{-\alpha} +O(i^{-2\alpha})}
\right],
\\&=
\frac{L^2\gamma ^2\sigma^2}{2(\b-\a)}
\sum_{i=1}^\infty x_i^{-\alpha} s
\left[
  \frac{1}{s+(1-s)\kappa_1 x_i^{-\alpha} s +O(x_i^{-2\alpha} s^{2})}
  -  \frac{1}{s+(1-s)\kappa_2 x_i^{-\alpha} s +O(x_i^{-2\alpha} s^{2})}
  \right],
\\&=
\frac{L^2\gamma ^2\sigma^2}{2(\b-\a)} s^{-1/\alpha}
\sum_{i=1}^\infty x_i^{-\alpha} 
\bigg[
  \frac{1}{1+(1-s)\kappa_1 x_i^{-\alpha}  +O(x_i^{-2\alpha} s)}
  \\& \qquad -  \frac{1}{1+(1-s)\kappa_2 x_i^{-\alpha}  +O(x_i^{-2\alpha} s)}
  \bigg] s^{1/\alpha}.
\end{align*}

With $x_i =is^{1/\alpha}$, $\Delta x = s^{1/\alpha}$.

By the DCT we can replace the sum by an integral over $[0,\infty)$:
\begin{align*}
\mathcal{V}^A(1-s)
&\sim \frac{L^2\gamma ^2\sigma^2}{2(\b-\a)} s^{-1/\alpha}
  \int_0^\infty x^{-\alpha}
  \!\left[\frac{1}{1+\kappa_1 x^{-\alpha}}
         -\frac{1}{1+\kappa_2 x^{-\alpha}}\right]dx,
\\&\sim
\frac{L^2\gamma ^2\sigma^2}{2\alpha(\b-\a)}
\,\Gamma\!\left(\tfrac{1}{\alpha}\right)\Gamma\!\left(1-\tfrac{1}{\alpha}\right)
\!\left[\kappa_1^{\frac{1-\alpha}{\alpha}}-\kappa_2^{\frac{1-\alpha}{\alpha}}\right]
s^{-1/\alpha}.
\end{align*}
and the Tauberian theorem with an exponent $1/\alpha$ yields
\[
V_T^A \sim
\frac{L^2\gamma ^2\sigma^2}{2\alpha(\b-\a)}
\,\Gamma\!\left(1-\tfrac{1}{\alpha}\right)
\!\left[\kappa_1^{\frac{1-\alpha}{\alpha}}-\kappa_2^{\frac{1-\alpha}{\alpha}}\right]
T^{\frac{1-\alpha}{\alpha}}.
\]
($\b-\a<0$ and $\kappa_1>\kappa_2>0$, so
$\kappa_1^{(1-\alpha)/\alpha}<\kappa_2^{(1-\alpha)/\alpha}$, so $V_T^A>0$).

\subparagraph{Cooldown part: $V_T^B$.}
Set $\ell = T-k$ (from $1$ to $cT-1$ as $k$ goes from $T-1$
to $(1-c)T+1$). With the schedule $\eta_k =  (T-k)/(cT)$,
one has $\eta_k =   \ell/(cT)$.
For the product from $m=k+1$ to $T-1$,
set $j=T-m$ (from $\ell-1$ to $1$); then $\eta_m=  j/(cT)$
and the factor is $1-\a_ij/(cT)+\b_i(j/(cT))^2$. Hence
\begin{align*}
V_T^B
&= \frac{1}{2}\sum_{i=1}^\infty \sum_{\ell=1}^{cT-1}
   \frac{L^2\sigma^2}{i^{2\alpha}}
   \!\left(\gamma \frac{\ell}{cT}\right)^{\!2}
   \exp\!\left[\sum_{j=1}^{\ell-1}
   \log\!\left(1-\a_i\frac{j}{cT}+\b_i\!\left(\frac{j}{cT}\right)^{\!2}\right)\right],
\\
&\sim
\frac{1}{2}\sum_{i=1}^\infty \sum_{\ell=1}^{cT-1}
   \frac{L^2\sigma^2\gamma ^2}{i^{2\alpha}}
   \!\left(\frac{\ell}{cT}\right)^{\!2}
   \frac{1}{\sqrt{1-\a_i\frac{\ell}{cT}+\b_i\!\left(\frac{\ell}{cT}\right)^{\!2}}}
   \exp\!\left[cT\!\int_0^{\frac{\ell}{cT}}
   \log\!\left(1-\a_ix+\b_ix^2\right)dx\right],
\\
&\sim
\frac{1}{2}\sum_{i=1}^\infty cT\!
\int_0^1\frac{L^2\sigma^2\gamma ^2}{i^{2\alpha}}\,u^2
   \exp\!\left[cT\!\int_0^u\log\!\left(1-\a_ix+\b_ix^2\right)dx\right]du
\;\sim\; V_{cT}^{\mathrm{linear}}.
\end{align*}
By the computations of \cref{app:linear} with $T \leftarrow cT$:

\paragraph{Case $1 < \alpha < 2$.}
Both $V_T^A$ and $V_T^B$ are $\Theta(T^{(1-\alpha)/\alpha})$:
\begin{align*}
V_T &\sim
\frac{L^2\gamma ^2\sigma^2}{2\alpha(\b-\a)}
\Gamma\!\left(\frac{\alpha-1}{\alpha}\right)
\!\left[\kappa_1^{\frac{1-\alpha}{\alpha}}-\kappa_2^{\frac{1-\alpha}{\alpha}}\right]
T^{\frac{1-\alpha}{\alpha}}
+
\frac{L^2\gamma ^2\sigma^2}{2\alpha}
\Gamma\!\left(2-\frac{1}{\alpha}\right)
(\gamma  L)^{\frac{1-2\alpha}{\alpha}}
\mathcal{I}_{\alpha,\gamma }\,(cT)^{\frac{1-\alpha}{\alpha}}.
\end{align*}

\paragraph{Case $\alpha > 2$.}
$V_T^A = O(T^{(1-\alpha)/\alpha}) = o(T^{-1/2})$,
so $V_T^A \ll V_T^B$ and
\[
V_T \sim V_{cT}^{\mathrm{linear}}
\sim \frac{\sqrt{\pi}}{8}\,L^{1/2}\gamma ^{1/2}\sigma^2\,
\zeta\!\left(\frac{\alpha}{2}\right)(cT)^{-1/2}.
\]

\end{proof}

\subsection{Optimal Base Learning-Rate}
\label{sec:optimal_gamma}
We want to calculate the optimal learning rate $\widetilde \gamma ^\star$ that minimizes the total asymptotic risk. We consider the functions $\gamma  \mapsto B_T(\gamma ), \gamma  \mapsto \widetilde B_T(\gamma )$, $\gamma  \mapsto V_T(\gamma )$ and $\gamma  \mapsto \widetilde V_T(\gamma )$.

\subsubsection{Proof of \cref{lemma:eta_star_to_0}}
\label{proof:eta_star_to_0}

\begin{proof}\leavevmode

\paragraph{Item 1.}
Write $\widetilde V_T(\gamma )= z(\gamma )\,T^{-r}$, where $z$ is continuous and strictly
positive on the admissible interval and $r\geq0$ does not depend on $\gamma $.
We always have $r<\omega$ (indeed $\omega-r = \omega>0$ for constant, 
$\omega-r=\frac{\beta}{\alpha}>0$ for WSD $\alpha<2$, and
$\omega-r=\frac12+\frac{\beta-1}{\alpha}>0$ for WSD $\alpha>2$).

Fix $\epsilon>0$. On the interval
$[\epsilon,\gammax)$, the function $z$ attains a minimum $z_\epsilon>0$ since $z(\gamma)\xrightarrow{\gamma\to\gammax}\infty$ for constant and $\lim_{\gamma\to\gammax}z(\gamma) >0$ for WSD, so
\[
    \forall \gamma  \in [\epsilon, \gammax),\quad  \widetilde{\mathcal R}_T(\gamma )\ \geq \widetilde V_T(\gamma )
    \ \ge\ z_\epsilon T^{-r}.
\]
Choose $x_T=T^{-\w}$ with
$\w\in\big(0,\,1-\tfrac{r}{\omega}\big)$ (a nonempty interval since $r<\omega$). Then
$x_T\to0$, hence $x_T\tr(\Lambda)\to0$, and since $z(\gamma )\xrightarrow{\gamma \to0}0$, $z(x_T) =o(1)$.
The simplified equivalents apply:
\[
    \widetilde{\mathcal R}_T(x_T)
    = O\big(x_T^{-\omega}T^{-\omega}\big)+z(x_T)T^{-r}
    = O\big(T^{-\omega(1-\w)}\big)+o\big(T^{-r}\big)
    = o\big(T^{-r}\big),
\]
both exponents being $>r$ by the choice of $\w$.

As a result, for $T$ large enough,
$\displaystyle \min_{\gamma }\widetilde{\mathcal R}_T(\gamma )\le \widetilde{\mathcal R}_T(x_T)
< \tfrac{z_\epsilon}{2}T^{-r}\le \inf_{\gamma \ge\epsilon}\widetilde{\mathcal R}_T(\gamma )$,
so every minimizer lies in $(0,\epsilon)$. Since $\epsilon>0$ is arbitrary, a minimizer of the asymptotic risk satisfies
$\gamma\to0$ as $T \to \infty$.

\paragraph{Item 2.}
We have $\gamma\tr(\Lambda)\to0$, so the correction
factors in the equivalents of \cref{th:constant,th:wsd} satisfy, near any minimizer $\gamma$ of $\eqrisk$, which tends to $0$ by item 1,
$1-\gamma \frac{\tr(\Lambda)}{3}=1+o(1)$ and $2-c-\gamma \tr(\Lambda)(1-\tfrac{2c}{3})=2-c+o(1)$.

The leading-order risk is obtained by freezing these factors at their $\gamma \to0$ limits:
\[
    \frrisk_T(\gamma )=C_B\,\gamma ^{-\omega}\,T^{-\omega}+C_V\,\gamma ^{v}\,T^{-r},
    \qquad C_B,C_V>0,
\]
where the exponents are
\[
\begin{cases}
   v=1 \text{ and } r=0 & \text{for constant schedule},\\
    v=\frac{1}{\alpha} \text{ and } r=1 - \frac{1}{\alpha} & \text{for WSD schedule with } 1 < \alpha <2,\\
     v=r=\frac{1}{2}  & \text{for WSD schedule with } \alpha>2.
\end{cases}
\]

Let us now minimize $\gamma \mapsto \frrisk_T(\gamma)$.
It is a smooth function with a unique interior critical point, which gives
\[
    \frac{d}{d\gamma }\frrisk_T(\gamma )=0
    \iff \gamma ^{v+\omega}=\frac{\omega\,C_B}{v\,C_V}\,T^{r-\omega}
    \iff \gammastar_T=\Big(\frac{\omega\,C_B}{v\,C_V}\Big)^{\frac1{v+\omega}}T^{\frac{r-\omega}{v+\omega}}.
\]

So $\gammastar_T = \Theta(T^{-\vartheta})$ where $\vartheta = -\frac{r-\omega}{v+\omega} = \frac{\omega-r}{v+\omega}$.

\paragraph{Item 3.}
Comparing
\cref{th:constant,th:wsd} with $\frrisk_T$ term by term,
\[
  \eqrisk_T(\gamma) = \varphi_B(\gamma)\,C_B\,\gamma^{-\omega}T^{-\omega}
                    + \varphi_V(\gamma)\,C_V\,\gamma^{v}T^{-r},
\]
where $\varphi_B,\varphi_V$ collect the correction factors
($1-\gamma \frac{\tr(\Lambda)}{3}, \kappa(c,\gamma)/(2-c)$, $\kappa_1$, $\kappa_2$, $\mathcal I_{\alpha,\gamma})$, are independent of $T$ and tend to $1$ as $\gamma \to 0$. Therefore, by positivity of all terms for sufficiently large $T$,
\[
  \min\{\varphi_B(\gamma),\varphi_V(\gamma)\}  \frrisk_T(\gamma)
  \leq \eqrisk_T(\gamma)
  \leq \max\{\varphi_B(\gamma),\varphi_V(\gamma)\}  \frrisk_T(\gamma).
\]

Moreover $\lim_{\gamma\to 0}\min\{\varphi_B(\gamma),\varphi_V(\gamma)\}=1$ and $\lim_{\gamma \to0}\max\{\varphi_B(\gamma),\varphi_V(\gamma)\}=1$.
Write $\m_T:=\inf_\gamma\eqrisk_T(\gamma)$ and
$\frm_T:=\inf_\gamma\frrisk_T(\gamma)=\frrisk_T(\gammastar_T)$.
Let $\gamma_T$ be a minimizer of $\eqrisk_T$. Then $\gamma_T\to0$ by 1., while by
2., $\gammastar_T\to0$. Hence as $T \to \infty$,
\begin{align}
\label{eq:min_eqrisk}
  &\m_T \leq \eqrisk_T(\gammastar_T)
      \leq \max\{\varphi_B(\gammastar
      _T),\varphi_V(\gammastar_T)\} \cdot\frrisk_T(\gammastar_T)
      = \frm_T\,(1+o(1)),
      \\
  &\frm_T  \leq \frrisk_T(\gamma_T)
      \leq \frac{\eqrisk_T(\gamma_T)}{\min\{\varphi_B(\gamma_T),\varphi_V(\gamma_T)\}}  
      = \m_T\,(1+o(1)).
\end{align}
Therefore $\m_T\sim \frm_T$, and combining with \cref{eq:min_eqrisk},
$\eqrisk_T(\gammastar_T)\sim \frm_T\sim \m_T$, which is \cref{lemma:eta_star_to_0}, part 3.

\end{proof}

\subsubsection{Proof of \cref{th:gammastar}}
\label{sec:gammastar}

\begin{lemma}[Product Error Bound]\label{lemma:error_r}
Let $(\gamma_T)_{T \ge 1}$ and $(\tau_T)_{T \ge 1}$ be two sequences of positive real numbers such that $\lim_{T\to \infty} \gamma_T = 0$ and $\lim_{T\to \infty} \tau_T =  \infty$.

Consider a family of analytical terms $(\Pi_{T,i})_{T,i}$ that can be decomposed as $\Pi_{T,i} = S_{T,i} \exp\left(-\tau_T i^{-\alpha} + X_{T,i}\right)$.
Assume there exist constants $C_0, C_1 > 0$ such that, for $T$ sufficiently large, the normalization factor and the exponent remainder satisfy uniformly for all $i \ge 1$:
\[
|S_{T,i}- 1| \le C_0 \gamma_T i^{-\alpha} \quad \text{and} \quad |X_{T,i}| \le C_1 \gamma_T \tau_T i^{-\alpha} 
\]

If we express this term as $\Pi_{T,i} = (1+r_{T,i})e^{-\tau_T i^{-\alpha}}$, then there exists a constant $C > 0$ such that, for $T$ large enough to ensure $C_1 \gamma_T \le 1/2$, the residual $r_{T,i}$ is uniformly bounded by:
\begin{equation}\label{eq:uniform_r}
   |r_{T,i}| \le C \gamma_T (1+\tau_T i^{-\alpha}) e^{\frac{1}{2}\tau_T i^{-\alpha}}
\end{equation}
\end{lemma}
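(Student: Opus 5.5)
We write $r_{T,i} = S_{T,i}e^{X_{T,i}} - 1$ and split it as
\[
r_{T,i} = (S_{T,i}-1)e^{X_{T,i}} + \big(e^{X_{T,i}}-1\big).
\]
Each piece is then bounded separately, using the elementary inequality $|e^{x}-1| \le |x|e^{|x|}$, valid for all real $x$. Throughout we abbreviate $y := \tau_T i^{-\alpha} \ge 0$. The point is that all error terms are controlled by $\gamma_T$ times a polynomial factor in $y$, times an exponential $e^{|X_{T,i}|}$. That exponential is absorbed into $e^{y/2}$ by the smallness $C_1\gamma_T \le 1/2$.

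\textbf{Bounding the exponent remainder.} By hypothesis, $|X_{T,i}| \le C_1\gamma_T y$. As soon as $C_1\gamma_T\le 1/2$, this gives $|X_{T,i}|\le y/2$, hence $e^{|X_{T,i}|}\le e^{y/2}$. Therefore
\[
|e^{X_{T,i}}-1| \le |X_{T,i}|\,e^{|X_{T,i}|} \le C_1\gamma_T\, y\, e^{y/2}.
\]

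\textbf{Bounding the normalization term.} Since $e^{X_{T,i}} \le e^{y/2}$,
\[
|S_{T,i}-1|\,e^{X_{T,i}} \le C_0\gamma_T i^{-\alpha} e^{y/2} \le C_0\gamma_T e^{y/2},
\]
using $i^{-\alpha}\le 1$ for $i\ge 1$.

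\textbf{Conclusion.} Adding the two bounds gives
\[
|r_{T,i}| \le \max(C_0,C_1)\,\gamma_T(1+y)e^{y/2},
\]
which is \cref{eq:uniform_r} with $C=\max(C_0,C_1)$, uniformly in $i\ge1$. The hypotheses $\gamma_T\to0$ and $\tau_T\to\infty$ are used only to guarantee that $C_1\gamma_T\le1/2$ holds for $T$ large, and that the assumed uniform bounds are in force.

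\textbf{Main obstacle.} The only delicate point is that $X_{T,i}$ may be large, of order $\gamma_T\tau_T$, which need not vanish. A naive Taylor bound $|e^{X}-1| = O(|X|)$ is therefore not uniform. This is handled by keeping the factor $e^{|X|}$ and dominating it by half of the main decay rate, which is exactly why the conclusion carries $e^{\frac12\tau_T i^{-\alpha}}$ rather than a bounded constant.
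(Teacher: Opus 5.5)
Your proof is correct and follows the paper's argument essentially line for line: the same splitting $r_{T,i}=(S_{T,i}-1)e^{X_{T,i}}+(e^{X_{T,i}}-1)$, the same inequality $|e^x-1|\le|x|e^{|x|}$, and the same absorption of $e^{|X_{T,i}|}$ into $e^{\frac12\tau_T i^{-\alpha}}$ once $C_1\gamma_T\le\frac12$. The only difference is cosmetic: you get $C=\max(C_0,C_1)$ where the paper takes $C=C_0+C_1$, and both are valid.
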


\begin{proof}
We use the elementary bound $|e^x-1|\le|x|e^{|x|}$, valid for every $x\in\R$.
Splitting
\[
   r_{T,i}=(S_{T,i}-1)\,e^{X_{T,i}}+\big(e^{X_{T,i}}-1\big)
\]
and applying the hypotheses,
\[
   |r_{T,i}|
   \le\Big(C_0\,\gamma_T\,i^{-\alpha}+C_1\,\gamma_T\,\tau_T\, i^{-\alpha}\Big)\,
      e^{C_1\gamma_T\tau_T i^{-\alpha}} .
\]
Since $\gamma_T \xrightarrow{T\to\infty}0$, we may further restrict to $T$ large
enough that $C_1\gamma_T\leq\tfrac12$; then $C_1\gamma_T\tau_T i^{-\alpha}\le\tfrac12\tau_T i^{-\alpha}$
for every $i\ge1$, and with $i^{-\alpha}\le1$ and $C:=C_0+C_1$ we obtain, uniformly in
$i\ge1$,
\begin{equation*}
   |r_{T,i}|\le C\,\gamma_T\,(1+\tau_T i^{-\alpha})\,e^{\frac12\tau_T i^{-\alpha}}.
\end{equation*}
\end{proof}

Let us define $\epsilon_T(\gamma) := \risk_T(\gamma)/\eqrisk_T(\gamma)-1$ so that $\mathcal R_T(\gamma) = (1+\epsilon_T(\gamma)) \widetilde {\mathcal R}_T$. \cref{th:wsd} is then equivalent to $\forall \gamma \in (0, \gammax),\; \epsilon_T(\gamma) \xrightarrow{T\to\infty} 0$. Pointwise convergence at a fixed $\gamma$ is not enough, since $\gammastar_T$ is not fixed. We need convergence along particular sequences $(\gamma_T)$ so that we can have $\mathcal R_T (\gammastar_T)\sim \widetildeR_T(\gammastar_T)$.

\begin{lemma}[Equivalence of Risks]
\label{lemma:uniform_epsilon}
Let $(\gamma_T)_T$ be a sequence in $(0, \gammax)$. Assume $\gamma_T \xrightarrow{T \to \infty}0$ and $T\gamma_T \xrightarrow{T \to \infty}\infty$. Considering $\epsilon_T$ defined above,
\[
\epsilon_T(\gamma_T) \xrightarrow{T \to \infty}0, \quad \text{which is equivalent to} \quad \risk_T(\gamma_T) \sim \eqrisk_T(\gamma_T).
\]
\end{lemma}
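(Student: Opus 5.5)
The plan is to prove separately that $B_T(\gamma_T)\sim\widetilde B_T(\gamma_T)$ and $V_T(\gamma_T)\sim\widetilde V_T(\gamma_T)$. Since both risks are sums of two nonnegative terms, this gives $\epsilon_T(\gamma_T)\to0$. The Tauberian route of \cref{th:tauberian} works at fixed $\gamma$ only, so I will abandon it. Instead I will compare the exact sums over $i$ with their continuous limits directly, keeping all bounds explicit in $\gamma_T$.

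\textbf{Step 1: uniform product representation.} For each schedule I write $\Pi_{T,i}=S_{T,i}\exp(-\tau_T i^{-\alpha}+X_{T,i})$. The effective time is $\tau_T:=\kappa\,L\gamma_T T$, with $\kappa=2-\gamma_T\tr(\Lambda)$ for the constant schedule and $\kappa=\kappa_1$ for WSD. I take $\log(1-a)=-a+O(a^2)$ with $a=O(\gamma_T i^{-\alpha})$ and sum over the $T$ factors; the Euler--Maclaurin expansion of \cref{app:linear,app:wsd} handles the cooldown part. This gives $|X_{T,i}|\le C_1\gamma_T\tau_T i^{-\alpha}$, because the quadratic remainders contribute $O(T\gamma_T^2 i^{-2\alpha})$. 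It also gives $|S_{T,i}-1|\le C_0\gamma_T i^{-\alpha}$. The constants are uniform in $i$ because $\gamma_T<\gammax$ and every factor is bounded below by $p_0>0$ (\cref{lemma:stability}). \cref{lemma:error_r} then yields $\Pi_{T,i}=(1+r_{T,i})e^{-\tau_T i^{-\alpha}}$ with $|r_{T,i}|\le C\gamma_T(1+\tau_T i^{-\alpha})e^{\tau_T i^{-\alpha}/2}$.

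\textbf{Step 2: bias.} I split $B_T=\frac{L\Delta}{2}\sum_i i^{-\alpha-\beta}e^{-\tau_T i^{-\alpha}}+\frac{L\Delta}{2}\sum_i i^{-\alpha-\beta}r_{T,i}e^{-\tau_T i^{-\alpha}}$.

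For the first sum I substitute $x=i\tau_T^{-1/\alpha}$. Since $\tau_T\to\infty$ (this is where $T\gamma_T\to\infty$ is used), the mesh $\tau_T^{-1/\alpha}\to0$. A Riemann-sum argument for the monotone-tailed integrand $x^{-\alpha-\beta}e^{-x^{-\alpha}}$ then gives $\frac{L\Delta}{2\alpha}\Gamma(\omega)\tau_T^{-\omega}(1+o(1))$, which is $\widetilde B_T$.

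For the remainder I use Step 1, which bounds it by $C\gamma_T\sum_i i^{-\alpha-\beta}(1+\tau_T i^{-\alpha})e^{-\tau_T i^{-\alpha}/2}$. The same Riemann comparison shows this is $O(\gamma_T\tau_T^{-\omega})=o(\widetilde B_T)$. The finitely many small indices $i$, where $\tau_T i^{-\alpha}$ is huge, are exponentially negligible.

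\textbf{Step 3: variance, the main obstacle.} The variance contains a double sum over $k$ and $i$, and for WSD it has the two pieces $V^A_T$ and $V^B_T$ and two regimes of $\alpha$.

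I apply the Step 1 representation to each partial product $\prod_{j>k}$, now with effective time $\tau_T(k)=L\gamma_T\sum_{j>k}2\eta_j(1+O(\gamma_T))$. The leading term then becomes an explicit sum $\sum_i\sum_k\gamma_T^2\eta_k^2 i^{-2\alpha}e^{-\tau_T(k)i^{-\alpha}}$.

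\begin{itemize}
\item For $1<\alpha<2$: I turn the $k$-sum into an integral over $u=(T-k)/T$, which is exact up to $O(1/T)$ relative error. I then turn the $i$-sum into an integral via $x=i(\gamma_T T)^{-1/\alpha}$. This reproduces $\mathcal W_\gamma(\alpha,c)\sigma^2(L\gamma_T)^{1/\alpha}T^{1/\alpha-1}$, with errors that are $o(1)$ relative to it.
\item For $\alpha>2$: the mass sits at fixed $i$ and at $T-k=O(\sqrt{cT/(\gamma_T i^{-\alpha})})$. I rescale $u=w\,(\gamma_T T)^{-1/2}$ as in \cref{app:linear}, and dominated convergence in $w$ and $i$ gives the $\zeta(\alpha/2)$ formula. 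The dominating function is $i^{-\alpha/2}w^2/(1+w^2)^2$, which is summable.
\end{itemize}

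The remainder $r$ is controlled exactly as in Step 2, contributing an extra factor $O(\gamma_T)$. The delicate point is this: the naive bound $e^{+\tau i^{-\alpha}/2}$ must be absorbed by the main exponential uniformly in $k$, including $k$ near $T$, where $\tau_T(k)$ is small. There, however, $r$ is itself $O(\gamma_T)$ with no exponential growth, so the absorption works.

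Combining Steps 2 and 3 gives $\epsilon_T(\gamma_T)\to0$.
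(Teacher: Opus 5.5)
Your proposal is correct and follows essentially the paper's own route. It uses the same sandwich of $\mathcal R_T/\widetilde{\mathcal R}_T$ between the bias and variance ratios, and the same representation $\Pi_{T,i}=(1+r_{T,i})e^{-\tau i^{-\alpha}}$ via \cref{lemma:error_r}, whose $O(\gamma_T)$ remainder is absorbed by the halved exponential. It then evaluates the main sums by a Laplace/Riemann-sum argument with mesh $(\gamma_T T)^{-1/\alpha}\to 0$, which is where $\gamma_T T\to\infty$ enters, and for $\alpha>2$ uses a fixed-mode dominated-convergence argument with the summable envelope $i^{-\alpha/2}$.

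The differences are cosmetic. You keep $\kappa_1$ unfrozen in $\tau_T$, whereas the paper freezes it at $2-c$ and pushes the $O(\gamma_T)$ discrepancy into $X_{T,i}$. You also convert the whole $k$-sum at once rather than splitting $V^A_T$ and $V^B_T$. One correction: the $k$-sum-to-integral error is only $o(1)$ uniformly in $i$ (of order $\gamma_T$ on the constant phase), not $O(1/T)$ as you state. This is still enough for the argument.
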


\begin{proof}
Since $B_T,V_T,\widetilde B_T,\widetilde V_T>0$, the ratio
$\mathcal R_T/\widetilde{\mathcal R}_T$ lies between $B_T/\widetilde B_T$ and
$V_T/\widetilde V_T$. It therefore suffices to bound each of these two ratios.
Let $\tau:=\gamma_T L(2-c)T$. 
\[
   \gamma_T\xrightarrow[T\to\infty]{}0
   \qquad\text{and}\qquad
   \tau=(2-c)L\,\gamma_T\,T\xrightarrow[T\to\infty]{}\infty.
\]

From the proof of \cref{th:wsd}, the per-coordinate contraction factor is
$\Pi_{T,i}=\s_{i,T}\,e^{TI_i}$, where
\[
   \s_{i,T}=1+O(\gamma_T i^{-\alpha}),
   \qquad
   I_i=-\,i^{-\alpha}\gamma_T L\,\kappa(c)+O(\gamma_T^2 i^{-2\alpha}),
   \qquad
   \kappa(c)=2-c-\gamma_T\tr(\Lambda)\big(1-\tfrac{2c}{3}\big).
\]
Both $\a_i=2\gamma_T Li^{-\alpha}$ and $\b_i=\gamma_T^2\tr(\Lambda)Li^{-\alpha}$ are
$O(\gamma_T i^{-\alpha})$ uniformly in $i\ge1$; since
$\gamma_T\to0$, so that for $T$ large enough these quantities are below the radius of convergence of the
expansions of $\log(1-\a_i+\b_i)$ and of $\int_0^1\log(1-\a_i u+\b_i u^2)\,du$, and the
two $O(\cdot)$ above are uniform in $i\ge1$. Fix $C_0,C_1>0$ such that, for such $T$,
\begin{equation}\label{eq:uniform_OO}
   |\s_{i,T}-1|\le C_0\,\gamma_T\, i^{-\alpha},
   \qquad
   |X_i|\le C_1\,\gamma_T\,\tau\, i^{-\alpha},
   \qquad X_i:=TI_i+\tau i^{-\alpha},
\end{equation}
The second bound follows from
$X_i=-i^{-\alpha}\gamma_T LT\,\big(\kappa(c)-(2-c)\big)+O(\gamma_T^2Ti^{-2\alpha})$ together with
$\kappa(c)-(2-c)=-\gamma_T\tr(\Lambda)(1-\tfrac{2c}{3})=O(\gamma_T)$ and
$\gamma_T^2Ti^{-2\alpha}=O(\gamma_T\tau i^{-\alpha})$.

We compare $\Pi_{T,i}$ to $e^{-\tau i^{-\alpha}}$, i.e.\ we replace $\kappa(c)$ by its
$\gamma_T\to0$ limit $2-c$, and define $r_{T,i}$ by
\[
   \Pi_{T,i}=(1+r_{T,i})\,e^{-\tau i^{-\alpha}},
   \qquad\text{so that}\qquad
   1+r_{T,i}=\s_{i,T}\,e^{X_i}.
\]
Thus \cref{lemma:error_r} gives:
\begin{equation*}
   |r_{T,i}|\le C\,\gamma_T\,(1+\tau i^{-\alpha})\,e^{\frac12\tau i^{-\alpha}}.
\end{equation*}
The three ingredients of $r_{T,i}$ are thus: the normalisation $\s_{i,T}-1$, the
Euler--Maclaurin remainder hidden in $I_i$, and the freezing
$\kappa(c)\rightarrow 2-c$.

\paragraph{Bias.}
$B_T(\gamma_T)=\frac{L\Delta}{2}\sum_{i=1}^\infty i^{-(\alpha+\beta)}\Pi_{T,i} = \frac{L\Delta}{2}\sum_{i=1}^\infty i^{-(\alpha+\beta)}(1+r_{T,i})e^{-\tau i^{-\alpha}}$.
Using \cref{eq:uniform_r} of \cref{lemma:error_r}, 
\[
B_T(\gamma_T) = \frac{L\Delta}{2} \left(S_1(\tau) +S_2(\tau)\right)\quad\text{where } S_1(\tau):=\sum_{i=1}^\infty i^{-(\alpha+\beta)}e^{-\tau i^{-\alpha}} \text{ and } S_2(\tau):=\sum_{i=1}^\infty r_{T,i} i^{-(\alpha+\beta)}e^{-\tau i^{-\alpha}}.
\] 

Applying Euler--Maclaurin to $x\mapsto x^{-(\alpha+\beta)}e^{-\tau x^{-\alpha}}$
(with the substitution $u=\tau x^{-\alpha}$, $\mathrm dx=\tfrac1\alpha\tau^{1/\alpha}u^{-1/\alpha-1}\mathrm du$),
\[
   S_1(\tau)=\int_0^\infty x^{-(\alpha+\beta)}e^{-\tau x^{-\alpha}} dx\,(1+o(1))
   =\frac{\tau^{-\omega}}{\alpha}\int_0^\infty u^{\omega-1}e^{-u}
   du\,(1+o(1))
   =\frac{\Gamma(\omega)}{\alpha}\,\tau^{-\omega}\,\bigl(1+\delta(\tau)\bigr),
\]
with $\omega=\tfrac{\beta-1}{\alpha}+1$ and $\delta(\tau)\xrightarrow{\tau\to\infty}0$
the (sum $\to$ integral) discrepancy. This is exactly the $\gamma$-free
content of the Tauberian step of \cref{th:wsd}, read here as a plain Laplace
asymptotic. In particular $\frac{\alpha\,\tau^{\omega}}{\Gamma(\omega)}S_1(\tau)=1+\delta(\tau)$. Note that extending the lower bound of the continuous integral from $1$ to $0$ introduces a residual error $\int_0^1 x^{-(\alpha+\beta)}e^{-\tau x^{-\alpha}}\,\mathrm dx$. Because of the exponential term, this residual decays exponentially as $O(e^{-\tau})$ and is therefore safely absorbed into the $1+o(1)$ asymptotic equivalence. 

The correction to $S_1(\tau)$ is $S_2(\tau)=\sum_{i=1}^{\infty}i^{-(\alpha+\beta)}e^{-\tau i^{-\alpha}}r_{T,i}$.
Using \cref{eq:uniform_r}, the factor $\gamma_T$ pulls out and it remains to compare
\[
   \sum_{i=1}^\infty  i^{-(\alpha+\beta)}(1+\tau i^{-\alpha})\,e^{-\frac12\tau i^{-\alpha}} \quad \text{to } S_1(\tau).
\]
Applying Euler--MacLaurin again, the whole sum is still $O(\tau^{-\omega})$.
Therefore
\[
   \frac{\alpha\,\tau^{\omega}}{\Gamma(\omega)}S_2(\tau) = \frac{\alpha\,\tau^{\omega}}{\Gamma(\omega)}\sum_{i=1}^\infty \frac{e^{-\tau i^{-\alpha}}}{i^{\alpha+\beta}}\,r_{T,i}=O(\gamma_T).
\]
Combining $S_1(\tau)$ and $S_2(\tau)$,
$\frac{\alpha\tau^\omega}{\Gamma(\omega)}\sum_i i^{-(\alpha+\beta)}\Pi_{T,i}=1+\delta(\tau)+O(\gamma_T)$,
so $B_T(\gamma_T)=\frac{L\Delta}{2\alpha}\Gamma(\omega)\tau^{-\omega}(1+\delta(\tau)+O(\gamma_T))$.
On the other hand \cref{th:wsd} gives
$\widetilde B_T(\gamma_T)=\frac{L\Delta}{2\alpha}\Gamma(\omega)\,[\gamma_T L\kappa(c)T]^{-\omega}
=\frac{L\Delta}{2\alpha}\Gamma(\omega)\,\tau^{-\omega}\,(1+O(\gamma_T))$,
the last equality because $[\kappa(c)/(2-c)]^{-\omega}=1+O(\gamma_T)$. Dividing,
\[
   \frac{B_T(\gamma_T)}{\widetilde B_T(\gamma_T)}=1+O(\gamma_T)+\delta(\tau).
\]

\paragraph{Variance.}
$V_T = \frac{1}{2}\sum_{i=1}^\infty \sum_{k=0}^{T-1}
   \frac{L^2}{i^{2\alpha}}\gamma_T^2\eta_k^2\sigma^2
   \prod_{m=k+1}^{T-1}\!\left[1-2\bar\nu_m\frac{L}{i^\alpha}\right] = V^A_T+V^B_T,$
where 
\begin{itemize}

        \item Regarding the constant part $V_T^A$,
        \begin{align*}
        V_T^A
        &= \frac{1}{2}\sum_{i=1}^\infty \sum_{k=0}^{(1-c)T}
           \frac{L^2\gamma_T ^2\sigma^2}{i^{2\alpha}}
           [1-\a_i+\b_i]^{(1-c)T-k}
           \exp\!\left[\sum_{\ell=1}^{cT-1}
           \log\!\left(1-\a_i\frac{\ell}{cT}+\b_i\!\left(\frac{\ell}{cT}\right)^{\!2}\right)\right],
        \\&= \frac{1}{2}\sum_{i=1}^\infty \sum_{k=0}^{(1-c)T}
           \frac{L^2\gamma_T ^2\sigma^2}{i^{2\alpha}}
           (\c_i)^{(1-c)T-k}(1+r^A_{T,i})\,e^{-\kappa_2 Ti^{-\alpha}}, \quad (\text{$\c_i$ consists in two geometric sums)}
           \\&= 
           \frac{L\gamma_T ^2\sigma^2}{4\bar\gamma_T}\sum_{i=1}^\infty  i^{-\alpha}
           (1+r^A_{T,i}) \left[e^{-\kappa_2 Ti^{-\alpha}} - e^{-\kappa_1 Ti^{-\alpha}} \right] = S_1 + S_2,
        \end{align*}
        where $r^A_{T,i}$ satisfies $|r^A_{T,i}| \leq C\gamma_T (1+\kappa_2 T i^{-\alpha})e^{\frac12 \kappa_2 T i^{-\alpha}} $ (\cref{lemma:error_r}) with $\tau$ replaced by $\kappa_2 T \leq \tau$; and $S_2$ the term containing $r_{T,i}^A$.

        By Euler-Maclaurin,
        $\sum_{i=1}^{\infty} i^{-\alpha} e^{-\kappa T i^{-\alpha}} = \frac{\Gamma\left(1 - \frac{1}{\alpha}\right)}{\alpha} (\kappa T)^{\frac{1-\alpha}{\alpha}} (1 + \delta'(T))$.
        We have $S_1= \widetilde V_T^A (1+\delta'(T))$ and $S_2 = O(\gamma_T)S_1$, so $\frac{V_T^A}{\widetilde V_T^A} = 1+O(\gamma_T) +\delta'(T)$.

        \item For $\alpha<2$, regarding the cooldown part $V_T^B$, denoting $x=\frac{\ell}{cT}$,
        \begin{align*}
        V_T^B
        &= \frac{1}{2}\sum_{i=1}^\infty \sum_{\ell=1}^{cT-1}
        \frac{L^2\sigma^2}{i^{2\alpha}}
           \!\left(\gamma_T \frac{\ell}{cT}\right)^{\!2}
           \exp\!\left[\sum_{j=1}^{\ell-1}
           \log\!\left(1-\a_i\frac{j}{cT}+\b_i\!\left(\frac{j}{cT}\right)^{\!2}\right)\right],
        \\&=
        \frac{L^2\sigma^2\gamma_T^2}{2}\sum_{i=1}^\infty i^{-2\alpha}\sum_{\ell=1}^{cT-1}
        x^2
        (1+r^B_{T,i,\ell})\exp\bigg[{-cTi^{-\alpha}\underbrace{\gamma_TLx^2\left(1-\frac{\gamma_T\tr(\Lambda)x}{3}\right)}_{=:\psi_T(x)}}\bigg],
        \end{align*}
        where $r^B_{T,i,\ell}$ obeys the bound of \cref{lemma:error_r} with $\tau$ replaced by $cT\psi_T(x)$. The bound stays uniform as $cT\psi_T(x) \leq \kappa_2T\leq \tau$. \cref{as:eta} gives $\gamma_T\tr(\Lambda) <2$ so $\frac13\gamma_TLx^2 \leq \psi_T(x)\leq \gamma_TLx^2$ for $x \in [0,1]$. Moreover,
        \[
        \sum_{\ell=1}^{cT-1}
        x^2e^{-cTi^{-\alpha}\psi_T(x)}=cT\int_0^1 x^2 e^{-cTi^{-\alpha}\psi_T(x)}dx \cdot (1+O(1/T))).
        \]
        The Riemann sum over $i$ is made possible with a step size of $(\gamma_T L cT)^{-1/\alpha}$ that tends to 0 since $\gamma_TT \to \infty.$ The integral converges if $\alpha<2$, assumed here.
        By the same decomposition $(S_1,S_2)$ with $S_2 = O(\gamma_T)S_1$, we get $\frac{V^B_T}{\widetilde V^B_T}=1+O(\gamma_T) + \delta_\alpha''(T)$.

        \item For $\alpha>2$, reading the proof of \cref{th:wsd} ($c=1$) with $\gamma_T$ explicit and
        keeping the finite upper limit $s^{-1/2}$, the substitution $u=s^{1/2}w$ gives the
        per-mode contribution
        \[
           v^B_i(T)=\tfrac{\sqrt\pi}{8}\sigma^2 L^{1/2}\gamma_T^{1/2}T^{-1/2}\,i^{-\alpha/2}\,F(\theta_i)\,(1+O(\gamma_T)+O(1/T)),
           \qquad \theta_i:=\gamma_T L\,i^{-\alpha}T,
        \]
        where $F(\theta):=\tfrac{4}{\sqrt\pi}\int_0^{\sqrt\theta}y^2e^{-y^2}\mathrm dy$ increases from
        $F(0)=0$ to $\lim_{\theta \to \infty}F(\theta)=1$, and the $o(1)$ (Euler--Maclaurin) is uniform. Since
        $\widetilde V^B_T=\tfrac{\sqrt\pi}{8}\sigma^2 L^{1/2}\gamma_T^{1/2}\zeta(\tfrac\alpha2)T^{-1/2}$
        replaces every $F(\theta_i)$ by $1$,
        \[
           \frac{V^B_T}{\widetilde V^B_T}
           =1-\underbrace{\frac{1}{\zeta(\alpha/2)}\sum_{i\ge1}i^{-\alpha/2}\bigl(1-F(\theta_i)\bigr)+O(\gamma_T)+O(1/T)}_{=:-\delta''(\gamma_T,T)}.
        \]
        \[
           |\delta''(\gamma_T,T)|
           =\frac{1}{\zeta(\alpha/2)}\sum_{i\ge1}i^{-\alpha/2}\bigl(1-F(\theta_i)\bigr) +o(1).
        \]
        Since $\gamma_TT\to\infty$, for each fixed $i$,
        $1-F(\theta_i)\to0$. Dominated by the summable $i^{-\alpha/2}$ ($\alpha/2>1$),
        DCT gives $|\delta''(\gamma_T,T)|\to0$. Denote $|\delta''(\gamma_T,T)|$ by $\delta_\alpha''(T).$
    \end{itemize}

Hence $V_T/\widetilde V_T=1+O(\gamma_T)+\delta'(T)+\delta_\alpha''(T)$ with $\delta'(T)$ and $\delta_\alpha''(T)$ that tend to 0 as $T\to \infty$.

\paragraph{Conclusion.}
Collecting both ratios,
\[
   \epsilon_T(\gamma_T)=
   O(\gamma_T) + o(1).
\]
Hence, $\epsilon_T(\gamma_T)\to 0$.

\textbf{Constant schedule.} The bias is treated as above with $\tau := 2\bar\gamma_T L T$, since $\Pi_{T,i} = (1 - 2\bar\gamma_T L i^{-\alpha})^T$. For the variance, $V_T/V_\infty = 1 - \zeta(\alpha)^{-1}\sum_{i} i^{-\alpha}(1-2\bar\gamma_T L i^{-\alpha})^T = 1 - O\big((\gamma_T T)^{1/\alpha - 1}\big) \to 1$ since $\gamma_T T \to \infty$.
\end{proof}
We know that $\gammastar_T = \Theta(T^{-\vartheta})$, $\vartheta \in (0,1)$, so $T\gammastar_T =\Theta(T\cdot T^{-\vartheta}) \xrightarrow{T\to \infty} \infty$ and $\gammastar_T = \Theta(T^{-\vartheta} )\xrightarrow{T\to \infty} 0$. \cref{lemma:uniform_epsilon} gives $\epsilon_T(\gammastar_T) \xrightarrow{T\to \infty} 0$.
Therefore we have the following:
\begin{corollary}\label{cor:epsilon}
With $(\gammastar_T)$ defined as above,
\[
\risk_T(\gammastar_T) \sim \eqrisk_T(\gammastar_T), \quad T \to \infty.
\]
\end{corollary}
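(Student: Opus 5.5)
The corollary follows from \cref{lemma:uniform_epsilon} applied to the specific sequence $\gamma_T := \gammastar_T$. The only task is to check that this sequence satisfies the lemma's hypotheses: $\gammastar_T \in (0,\gammax)$ for $T$ large, $\gammastar_T \to 0$, and $T\gammastar_T \to \infty$.

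\textbf{Step 1: the size of $\gammastar_T$.} By item~2 of \cref{lemma:eta_star_to_0}, $\gammastar_T = (\omega C_B/(v C_V))^{1/(v+\omega)}\,T^{-\vartheta}$, with $\vartheta = \frac{\omega-r}{v+\omega}$. I would verify directly, for each schedule and regime, that $\vartheta \in (0,1)$.

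The lower bound $\vartheta>0$ follows from $r<\omega$, which was checked case by case in item~1.

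For the upper bound, use $v+r=1$ to write $\vartheta = 1-\frac{1}{v+\omega}$. Since $v+\omega>0$, this gives $\vartheta<1$.

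Consequently $\gammastar_T = \Theta(T^{-\vartheta}) \to 0$, which in particular places $\gammastar_T$ inside $(0,\gammax)$ for $T$ large. Likewise $T\gammastar_T = \Theta(T^{1-\vartheta}) \to \infty$.

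\textbf{Step 2: apply the lemma.} With the hypotheses verified, \cref{lemma:uniform_epsilon} gives $\epsilon_T(\gammastar_T)\to 0$. Equivalently, $\risk_T(\gammastar_T) = (1+o(1))\,\eqrisk_T(\gammastar_T)$, which is the claimed equivalence.

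\textbf{Main obstacle.} All of the analytic work has already been done inside \cref{lemma:uniform_epsilon}: the uniform control of the Euler--Maclaurin remainders via \cref{lemma:error_r}, and the DCT argument for the $\alpha>2$ variance. What remains to check is that the lemma indeed covers every schedule under consideration.

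The constant schedule is handled by the separate paragraph at the end of that proof. There the variance ratio $1-O((\gamma_T T)^{1/\alpha-1})$ still tends to $1$ because $T\gammastar_T\to\infty$.

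The WSD case with $1<\alpha<2$ requires in addition that the Riemann step $(\gamma_T L cT)^{-1/\alpha}$ vanish. This again follows from $T\gammastar_T\to\infty$.

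So the key quantitative input is simply $1-\vartheta>0$, which I would state explicitly before invoking the lemma.
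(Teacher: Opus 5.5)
Your proposal is correct and follows the paper's argument: the paper also applies \cref{lemma:uniform_epsilon} to $\gamma_T=\gammastar_T$, using $\gammastar_T=\Theta(T^{-\vartheta})$ with $\vartheta\in(0,1)$ to get $\gammastar_T\to 0$ and $T\gammastar_T\to\infty$. Your explicit checks, $\vartheta>0$ from $r<\omega$ and $\vartheta=1-\frac{1}{v+\omega}<1$ from $v+r=1$, spell out what the paper takes directly from \cref{lemma:eta_star_to_0}.
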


We need to prove that these assumptions are also valid for a minimizing sequence of base learning rates.

\begin{lemma}\label{lemma:bounds}
Let $(\gamma_T)$ be a sequence of base learning rates such that $\risk_T(\gamma_T) = O(\risk_T(\gammastar_T)), \; T\to \infty$, and $(\gammastar_T)$ the minimizer of $\gamma \mapsto \frrisk_T(\gamma)$. Then $\gamma_T$ satisfies:
\[\gamma_T \to 0 \quad \text{and} \quad \gamma_T T \to \infty.\]
\end{lemma}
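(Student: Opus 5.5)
We argue by contradiction, with two separate claims, and use only lower bounds on the exact bias and variance that hold uniformly in $\gamma$. By \cref{cor:epsilon} and \cref{lemma:eta_star_to_0}, $\risk_T(\gammastar_T)=\Theta(T^{-q})$ with $q=\omega/(v+\omega)$, and $q<\omega$. In every case $q>r$: for the constant schedule $r=0$; for WSD $q-r=\omega/(v+\omega)-(1-v)>0$ holds because $v+r=1$ and $\vartheta<1$ give $v(1-\vartheta)>0$. Hence it suffices to prove two facts. First, if $\gamma_T T$ stays bounded along a subsequence, then $B_T(\gamma_T)$ is bounded below by a constant. Second, if $\gamma_T\ge\epsilon>0$ along a subsequence, then $V_T(\gamma_T)\gtrsim T^{-r}$. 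Each contradicts $\risk_T(\gamma_T)=O(T^{-q})$, since $q>0$ and $q>r$.

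\textbf{Step 1: $\gamma_T T\to\infty$.} We use \cref{th:risk_alpha_beta}. Each factor satisfies $1-2\gamma\eta_k Li^{-\alpha}+\gamma^2\eta_k^2\tr(\Lambda)Li^{-\alpha}\ge 1-2\gamma\eta_k L i^{-\alpha}$. For $i$ large enough that $2\gammax Li^{-\alpha}\le 1/2$, we have $\log(1-x)\ge -2x$, so
\[
B_T\ge \tfrac{L\Delta}{2}\sum_{i\ge i_0} i^{-\alpha-\beta}\exp\big(-4\gamma L i^{-\alpha}\textstyle\sum_k\eta_k\big)\ge \tfrac{L\Delta}{2}\sum_{i\ge i_0} i^{-\alpha-\beta}e^{-4L\gamma T i^{-\alpha}}.
\]
If $\gamma_T T\le M$, this is bounded below by a positive constant, which is not $O(T^{-q})$. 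More quantitatively, comparing with the integral as in the bias computation gives $B_T\gtrsim(\gamma_T T)^{-\omega}$ whenever $\gamma_T T\to\infty$. This last bound is not needed here.

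\textbf{Step 2: $\gamma_T\to0$.} Suppose $\gamma_T\ge\epsilon$ along a subsequence. For the constant schedule, the variance formula in the proof of \cref{th:constant} gives
\[
V_T=\frac{\gamma^2\sigma^2}{4\bar\gamma}\sum_i Li^{-\alpha}\big[1-(1-2\bar\gamma Li^{-\alpha})^T\big].
\]
The $i=1$ term is at least $\frac{\gamma\sigma^2L}{4}\big(1-(1-2\bar\gamma L)^T\big)$. Here $\bar\gamma\ge\epsilon(1-\epsilon\tr(\Lambda)/2)$ is bounded away from $0$ once $\gamma$ is also kept away from $\gammax$. Near $\gammax$, $\bar\gamma\to0$, but then $\gamma^2/\bar\gamma\to\infty$, so the prefactor blows up. Splitting into these two subcases gives $V_T\gtrsim 1=T^{-r}$.

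For WSD we lower bound $V_T$ by the contribution of the cooldown steps $k\ge T-cT/2$ only, keeping the middle modes $i\asymp T^{1/\alpha}$ when $\alpha<2$ and the mode $i=1$ when $\alpha>2$. On these steps the remaining product $\prod_{j>k}$ has length $\le cT$, and its exponent is bounded by $O(\gamma L i^{-\alpha}(T-k)^2/(cT))$. For the chosen modes, restricted to $T-k\lesssim (cT\,i^{\alpha}/\gamma)^{1/2}$, this exponent is $O(1)$, so the product stays above a constant. Summing $\gamma^2\eta_k^2 L^2 i^{-2\alpha}$ over that window then reproduces the scale $\gamma^{v}T^{-r}$ of \cref{tab:equiv_combined}, with a constant uniform for $\gamma\in[\epsilon,\gammax)$ because $\gamma\ge\epsilon$. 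This yields $V_T\gtrsim T^{-r}$, the desired contradiction.

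\textbf{Main obstacle.} The hardest part is this uniform lower bound on the WSD variance. The asymptotics of \cref{th:wsd} are pointwise in $\gamma$, so they cannot be used directly along $\gamma_T\to\gammax$. The bound must instead come from crude elementwise estimates of the product, in the spirit of the Tauberian-condition estimates above. I would first try the window argument just described. The case $\gamma\to\gammax$ is handled by observing that the lower bound $\gamma^2\eta_k^2$ on the summand only improves as $\gamma$ grows, while the product remains $\ge p_0^{\,\cdot}$-controlled on a window of length $O(1/\gamma)$.
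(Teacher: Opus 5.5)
Your proof is correct and follows essentially the paper's route: the bias lower bound $B_T\ge\frac{L\Delta}{2}\sum_{i\ge i_0}i^{-\alpha-\beta}e^{-4L\gamma_T T i^{-\alpha}}$ forces $\gamma_T T\to\infty$, and a $\gamma$-uniform variance lower bound from a window of cooldown steps, on which the product stays above a constant, forces $\gamma_T\to0$ (the paper keeps the factor $\gamma_T^{v}$ in that bound to obtain directly $\gamma_T=O(T^{-\vartheta})$ instead of arguing by contradiction, and for the constant schedule it simply keeps the last step, $V_T\ge\frac{\sigma^2L^2\zeta(2\alpha)}{2}\gamma_T^2$, which makes your case split near $\gamma_{\max}$ unnecessary). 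Two small repairs are needed: first, $q>r$ does not follow from $\vartheta<1$ (your quantity $v(1-\vartheta)$ equals $1-q$), and the correct identity is $q-r=v\vartheta>0$, which holds since $\omega>r$ by item~1 of \cref{lemma:eta_star_to_0}; second, for the mode $i=1$ with $\alpha>2$, $2\bar\nu_kL$ can reach $1/\zeta(\alpha)>1/2$, so replace $\log(1-x)\ge-2x$ by $\log(1-x)\ge -x/p_0$ with $p_0=1-1/\zeta(\alpha)$, or use a mode $i\ge i_{\min}$ as the paper does.
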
 

\begin{proof}
We recall the definitions:
\[\left\{\begin{aligned}
\bar\nu_k &:= \frac{2\gamma_T\eta_k - \gamma_T^2\eta_k^2\tr(\Lambda)}{2} = \gamma_T\eta_k - \frac{1}{2}\tr(\Lambda)\gamma_T^2\eta_k^2, \\
\Pi_{T,i} &:= \prod_{k=0}^{T-1} \left[ 1 - 2\bar\nu_k \frac{L}{i^\alpha} \right].
\end{aligned}\right.\]

Knowing that $\forall x \in[0,1/2], \; -2x \leq \log(1-x)\leq -x$, we can bound $\Pi_{T,i}=\exp(\sum_k\log(1-2\bar\nu_k L i^{-\alpha}))$ for $i$ sufficiently large so that $\forall k, \,2\bar\nu_k L i^{-\alpha } \leq 1/2$. We denote $i_{\min}$ the index from which it is satisfied. It is independent of $T$ and $\gamma_T$ since $\forall k, 2L\bar\nu_k \leq \frac{L}{\tr(\Lambda)} = \frac{1}{\zeta(\alpha)}\leq1$.
\begin{equation}\label{eq:bounds_Pi}
\exp\left( \sum_{k=0}^{T-1} -4Li^{-\alpha}\bar\nu_k \right) \leq \Pi_{T,i} \leq \exp\left( \sum_{k=0}^{T-1} -2Li^{-\alpha}\bar\nu_k  \right) .
\end{equation}

\paragraph{Constant schedule.}

\begin{itemize}
\item By \cref{eq:bounds_Pi} We have 
\[B_T \geq  \frac12\sum_{i\geq i_{\min}} L\Delta i^{-\alpha-\beta} \exp(-4Li^{-\alpha} T\bar\gamma_T),
\]with $\bar\gamma_T = \bar\nu_0$. Therefore, to obtain $B_T\to 0$, we must have $T\bar\gamma_T \to \infty$. Since the term $- \frac{1}{2}\tr(\Lambda)\gamma_T^2\eta_k^2$ is negative, this requires $T\gamma_T \to \infty$.
\item For the variance, if we keep only the last term of the sum over $k$, with $\prod_k (1-2\bar\nu_k)$ equal to 1,
\[
\displaystyle V_T \geq \frac12 \sum_{i\geq i_{\min}}  L^2 i^{-2\alpha} \gamma_T^2 \sigma^2. 
\]
Therefore we must have $\gamma_T \to 0$.

\end{itemize}

\paragraph{WSD schedule.}

First, let $\ell = T-k$.

If $\displaystyle \ell \geq cT, \quad \sum_{j=1}^{\ell-1}\eta_{j+k} = \underbrace{\frac{cT}{2}}_{\text{decay part}} + \underbrace{(\ell - cT)}_{\text{constant part}}$. If $\ell < cT$ there is only a decay part so $cT/2$ is an upper bound. Moreover, $\frac{\ell^2}{2cT}-(\ell- \frac{cT}{2})= \frac{cT}{2}(\frac{\ell}{cT}-1)^2 \geq 0$. Therefore:
\begin{equation}\label{eq:slope}
\sum_{j=k+1}^{T-1}\eta_j = \sum_{j=1}^{\ell-1}\eta_{j+k} \leq \frac{\ell^2}{2cT}.
\end{equation}

\begin{itemize}
\item  Let us show that $\gamma_T T \to \infty$.

We have $\bar\nu_k \leq \gamma_T \eta_k$ and $\sum_{k=0}^{T-1}\eta_k \leq T$. Since $\exp$ is increasing, \cref{eq:bounds_Pi} gives
\[B_T \geq  \frac12\sum_{i\geq i_{\min}} L\Delta i^{-\alpha-\beta} \exp\left(-4Li^{-\alpha} \sum_{k=0}^{T-1}\bar\nu_k\right) \geq \frac12\sum_{i\geq i_{\min}} L\Delta i^{-\alpha-\beta} \exp\left(-4Li^{-\alpha} T \gamma_T\right),
\]
So, to have $B_T\to 0$, we find the same conclusion $T\gamma_T \to \infty.$

\item Let us show that $\gamma_T \to 0$. Suppose first $\alpha<2$.
Using \cref{eq:slope},
\begin{align*}
\displaystyle V_T &\geq \frac12 \sum_{i\geq i_{\min}}  L^2 i^{-2\alpha} \gamma_T^2 \sigma^2\sum_{k=0}^{T-1}\eta_k^2 \prod_{j=k+1}^{T-1}[1-2\bar\nu_j],
\\
&\stackrel{\labelcref{eq:bounds_Pi}}{\geq}
\frac12 \sum_{i\geq i_{\min}}  L^2 i^{-2\alpha} \gamma_T^2 \sigma^2\sum_{k=0}^{T-1}\eta_k^2 \exp\left( \sum_{j=k+1}^{T-1} -4Li^{-\alpha}\bar\nu_j\right),
\\
&\stackrel{\labelcref{eq:slope}}{\geq} \frac12 \sum_{i\geq i_{\min}}  L^2 i^{-2\alpha} \gamma_T^2 \sigma^2\sum_{k=0}^{T-1}\eta_k^2 \exp\left( -2Li^{-\alpha}\gamma_T\frac{(T-k)^2}{cT} \right).
\end{align*}

We index the sum on the decay part from $\ell=\lceil cT/2 \rceil$ to $\ell=\lfloor cT \rfloor$, and on dimensions $i\geq i_{\min}$ greater than $\gamma LcT$. Let $i_{\min}'(\gamma LcT) := \max\{i_{\min}, (2\gamma L c T)^{1/\alpha}\}$. 

For $i \geq i_{\min}'(\gamma LcT),\; 2Li^{-\alpha} \gamma \frac{\ell^2}{cT} \leq \left(\frac{\ell}{cT}\right)^2$ and $\frac{\ell}{cT}\leq 1$ , so 
\begin{equation}\label{eq:exponent}
\displaystyle \exp\left( -2Li^{-\alpha}\gamma\frac{\ell^2}{cT} \right) \geq \exp \left(- \bigg(\frac{\ell}{cT}\bigg)^2\right) \geq 1/e.
\end{equation}
Therefore, as all terms are positive:
\begin{align*}
V_T 
&\geq \frac12 \sum_{i\geq i_{\min}}  L^2 i^{-2\alpha} \gamma_T^2 \sigma^2\sum_{\ell=1}^{T}\eta_{T-\ell}^2 \exp\left( -2Li^{-\alpha}\gamma_T\frac{\ell^2}{cT} \right),
\\
&\stackrel{\labelcref{eq:exponent}}{\geq} \frac12 \sum_{i\geq i'_{\min}(\gamma_T LcT)}  L^2 i^{-2\alpha} \gamma_T^2 \sigma^2\sum_{\ell=\lceil cT/2 \rceil}^{\lfloor cT \rfloor}\frac{\eta_{T-\ell}^2 }{e} \quad\text{and}\displaystyle \quad \eta_{T-\ell}=\frac{\ell}{cT}\geq\frac12 \text{ on the block}, \\
&\geq \frac{\sigma^2\gamma_T^2L^2}{2e}\left(\sum_{\ell=\lceil cT/2\rceil}^{\lfloor cT\rfloor}\frac14\right)\sum_{i\geq i'_{\min}(\gamma_T LcT)}i^{-2\alpha},
\\
&\geq \frac{\sigma^2\gamma_T^2L^2\,cT}{32\,e}\int_{i'_{\min}(\gamma_T LcT)}^{\infty}x^{-2\alpha}dx
\ =\ \frac{\sigma^2\gamma_T^2L^2\,cT}{32\,e\,(2\alpha-1)}\,(i'_{\min}(\gamma_T LcT))^{1-2\alpha} ,
\\&
\geq \frac{\sigma^2\gamma_T^2L^2\,cT}{32\,e\,(2\alpha-1)}\,(2\gamma_T LcT)^{\frac{1-2\alpha}{\alpha}} \quad \text{ for }\gamma_T T \text{ large enough so that } i_{\min}'(\gamma_TLcT)=(2\gamma_TLcT)^{1/\alpha},\\
&= \frac{2^{\frac{1-2\alpha}{\alpha}}\sigma^2L^{1/\alpha}}{32\,e\,(2\alpha-1)}\gamma_T^{1/\alpha} (cT)^{\frac{1-\alpha}{\alpha}}.
\end{align*}

\item Suppose $\alpha>2$. 

We now keep only the dimension $i=i_{\min}$, and index the sum on the decay part over the
block $\ell \in \left[\tfrac12 A^{-1/2},\, A^{-1/2}\right]$, where
$A := \frac{2Li_{\min}^{-\alpha}\gamma_T}{cT}$ is such that the exponent equals $A\ell^2$.
This block is contained in $\{1,\dots,\lfloor cT\rfloor\}$ and contains at least
$\tfrac14 A^{-1/2}$ integers, since $\gamma_T T \to \infty$ gives $A^{-1/2}\leq cT$ and
$\gamma_T L \leq \frac{2}{\zeta(\alpha)} \leq 2$ gives $A^{-1/2}\geq 4$ for $T$ large enough.

For $\ell$ in this block,
\begin{equation}\label{eq:exponent_big_alpha}
\exp\left( -2Li_{\min}^{-\alpha}\gamma_T\frac{\ell^2}{cT} \right) = \exp\left(-A\ell^2\right) \geq 1/e
\qquad\text{and}\qquad
\eta_{T-\ell}^2 = \left(\frac{\ell}{cT}\right)^2 \geq \frac{1}{4A(cT)^2}.
\end{equation}
Therefore, as all terms are positive,
\begin{align*}
V_T
&\stackrel{\labelcref{eq:exponent_big_alpha}}{\geq}
\frac{L^2 i_{\min}^{-2\alpha}\gamma_T^2\sigma^2}{2}\frac{1}{e}\frac{A^{-1/2}}{4}\frac{1}{4A(cT)^2}
\ =\ \frac{L^2 i_{\min}^{-2\alpha}\sigma^2}{32\,e}\,\frac{\gamma_T^2\,A^{-3/2}}{(cT)^2},
\\
&=\ \frac{i_{\min}^{-\alpha/2}L^{1/2}\sigma^2}{2^{13/2}\,e}\ \gamma_T^{1/2}\,(cT)^{-1/2},
\end{align*}
using $A^{-3/2}=\left(2Li_{\min}^{-\alpha}\right)^{-3/2}\gamma_T^{-3/2}(cT)^{3/2}$, together with
$2-\tfrac32=\tfrac12$ and $-2+\tfrac32=-\tfrac12$.

\item \emph{Conclusion for $\alpha <2$ and $\alpha>2$.}
Remind that by \cref{cor:epsilon}, $\risk_T(\gammastar_T) \sim \eqrisk_T(\gammastar_T)$.

This gives $\risk_T(\gammastar_T) = O(T^{-q})$ with 
$q=\begin{cases}
\frac{\alpha+\beta-1}{\alpha+\beta} & \text{if }\alpha<2,
\\
\frac{2\alpha+2\beta-2}{3\alpha+2\beta-2} & \text{if } \alpha>2.
\end{cases}$

Combining with the bounds above yields $\gamma_T = O(T^{-\alpha q + \alpha - 1})$ if $\alpha <2$ and $\gamma_T = O(T^{1-2q })$ if $\alpha >2$, which gives $\gamma_T = O(T^{-\vartheta})$ where $\vartheta$ is the exponent defined \cref{lemma:eta_star_to_0} verifying $\gammastar_T = \Theta(T^{-\vartheta})$. In particular, $\gamma_T \to 0$ and $\gamma_TT\to \infty$.

\end{itemize}

\end{proof}

\label{proof:ccl_risks}
\begin{proof}[Proof of \cref{th:gammastar}.]
Let us combine the preceding lemmas. Let $(\gamma_T)$ a sequence which satisfies 
\begin{equation}\label{eq:ccl_risks}
\risk_T(\gamma_T)\leq\risk_T(\gammastar_T)(1 + o(1)).
\end{equation}
\begin{itemize}
\item
\cref{eq:ccl_risks} implies $\risk_T(\gamma_T) = O(\risk_T(\gammastar_T))$. By \cref{lemma:bounds}, $\gamma_T\to 0$ and $\gamma_T T \to \infty$.
\item
By \cref{lemma:uniform_epsilon}, we have therefore $\risk_T(\gamma_T) \sim \eqrisk_T(\gamma_T)$. 
\item
By \cref{cor:epsilon}, $\risk_T(\gammastar_T) \sim \eqrisk_T(\gammastar_T)$.
\item
By \cref{lemma:eta_star_to_0}, item 3., $\eqrisk_T(\gammastar_T)\sim \min_\gamma \eqrisk_T(\gamma)\leq \eqrisk_T(\gamma_T)$, so $\eqrisk_T(\gamma_T) \geq \eqrisk_T(\gammastar_T)(1+o(1)).$
\end{itemize}

So $\risk_T(\gamma_T) \sim \eqrisk_T(\gamma_T) \geq \eqrisk_T(\gammastar_T)(1+o(1)) \sim \risk_T(\gammastar_T)(1+o(1)) .$ 

Knowing \cref{eq:ccl_risks}, we get $\risk_T(\gamma_T) \sim \risk_T(\gammastar_T)$.

\end{proof}

\subsection{Optimal Cooldown Length}
\label{app:optimal_c}
We define for $c\in (0,1]$:
\begin{equation}\label{eq:h}
    h(c) := \frac{\mathcal{W}(\alpha, c)}{(2-c)^{1/\alpha}}=\frac{1}{4\alpha}\Gamma\left(1-\frac{1}{\alpha}\right) \left[ \frac{\alpha}{2-\alpha} c^{\frac{1-\alpha}{\alpha}} (2-c)^{-1/\alpha} - (2-c)^{-1} \right].
\end{equation}

\begin{lemma}\label{lemma:c_star}
Suppose $1<\alpha<2$.
The function $h$ defined in \cref{eq:h} is strictly decreasing over $(0,1]$.
\end{lemma}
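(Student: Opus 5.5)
We prove that $h'(c)<0$ on $(0,1)$ by an explicit sign analysis of the derivative. Continuity of $h$ on $(0,1]$ then gives strict monotonicity on the closed-right interval, even though $h'(1)=0$.

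\emph{Setting up the sign condition.} Write $a:=1/\alpha\in(\tfrac12,1)$. The positive prefactor $\frac{1}{4\alpha}\Gamma(1-\frac1\alpha)$ plays no role. Note that $\frac{\alpha}{2-\alpha}=\frac{1}{2a-1}=:K>0$. We therefore study
\[
g(c):=K\,c^{a-1}(2-c)^{-a}-(2-c)^{-1}.
\]
Differentiating the first term gives $K c^{a-1}(2-c)^{-a}\bigl[\frac{a-1}{c}+\frac{a}{2-c}\bigr]$. The bracket simplifies to $\frac{c-(2-2a)}{c(2-c)}$. Multiplying $g'$ by $(2-c)^2>0$, we obtain
\[
(2-c)^2 g'(c)=K\,\varphi(c)-1,\qquad \varphi(c):=c^{a-2}(2-c)^{1-a}\bigl(c-(2-2a)\bigr).
\]
So it suffices to show $K\varphi(c)<1$ for $c\in(0,1)$.

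\emph{Case analysis.}
\begin{itemize}
\item If $c\le 2-2a$, then $\varphi(c)\le 0$, so the inequality $K\varphi(c)<1$ is immediate. Note that $2-2a\in(0,1)$.
\item If $c\in(2-2a,1]$, the key observation is that $\varphi(1)=2a-1=1/K$. Hence $K\varphi(1)=1$, i.e.\ $h'(1)=0$. The claim therefore reduces to showing that $\varphi$ is strictly increasing on $(2-2a,1)$.
\end{itemize}

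\emph{Monotonicity of $\varphi$ (the main step).} We show that the logarithmic derivative
\[
\psi(c):=\frac{a-2}{c}-\frac{1-a}{2-c}+\frac{1}{c-2+2a}
\]
is positive on $(2-2a,1)$. As a sanity check, at $c=1$ one gets $\psi(1)=2a-3+\frac{1}{2a-1}=\frac{4(1-a)^2}{2a-1}>0$.

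To prove positivity on the whole interval, put everything over the common denominator $c(2-c)(c-2+2a)$, which is positive there. The numerator is then a quadratic $N(c)$ in $c$ with coefficients depending on $a$. We aim to show $N>0$ on $[2-2a,1]$:
\begin{itemize}
\item At the left endpoint, only the last term contributes: $N(2-2a)=(2-2a)\cdot 2a>0$.
\item At the right endpoint, $N(1)=(2a-1)\psi(1)=4(1-a)^2>0$.
\item The leading coefficient of $N$ is $-(a-2)+(1-a)=3-2a-1+\dots$; we expect it to come out negative, i.e.\ $N$ concave. A concave function positive at both endpoints is positive on the whole interval, which closes the argument.
\item If the leading coefficient instead turns out to be positive, we bound the minimum directly, via the vertex or the discriminant, as a function of $a\in(\tfrac12,1)$.
\end{itemize}
This verification of the explicit quadratic is the only computational obstacle.

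\emph{Conclusion.} Combining the two cases gives $h'<0$ on $(0,1)$. By continuity of $h$ on $(0,1]$, $h$ is strictly decreasing on $(0,1]$, which is \cref{lemma:c_star}.
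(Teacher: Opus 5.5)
Your argument is the paper's proof written in the variable $a=1/\alpha$: your $K\varphi$ is exactly the paper's $w$, and you use the same split at $c=2-2/\alpha$, the same logarithmic-derivative monotonicity step, and the same normalization $w(1)=1$. The computation you left open closes at once, because the $c^2$ coefficient of $N$ is $(2-a)-(1-a)-1=0$ (the $(1-a)$ term enters with a minus sign, not a plus). Hence $N(c)=4(1-a)(2-a-c)$, which is positive on $(0,1]$ since $2-a>1$, and this equals the paper's numerator $4(1-\alpha)(\alpha c+1-2\alpha)$ divided by $\alpha^2$, a positivity the paper itself only asserts.
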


\begin{proof}

Taking the derivative of this expression with respect to $c$ yields:
\[
    h'(c) = \frac{1}{4\alpha}\Gamma\left(1-\frac{1}{\alpha}\right) (2-c)^{-2} \underbrace{\left[ \frac{2-2\alpha+\alpha c}{2-\alpha} c^{\frac{1-2\alpha}{\alpha}} (2-c)^{\frac{\alpha-1}{\alpha}} - 1 \right]}_{=: w(c)-1}.
\]
For $c \in (0,1)$,
\begin{itemize}
\item If $c \leq \frac{2\alpha-2}{\alpha}$:\quad $\frac{2-2\alpha+\alpha c}{2-\alpha}\leq 0$ so $w(c)-1 <0$.
\item If $c > \frac{2\alpha-2}{\alpha}$:\quad $w(c)>0$ and
\[
\frac{w'(c)}{w(c)} = \frac{\alpha^2 c(2-c) + (2-2\alpha+\alpha c)(2-4\alpha+\alpha c)} {\alpha c(2-c)(2-2\alpha+\alpha c)} >0.
\]
Therefore $w$ increases so $w(c) < \lim_{c\to1}w(c) =1$. Thus, $h'(c) < 0$.
\end{itemize}

Consequently, $c<1 \Rightarrow h'(c) < 0$ while $h'(1)=0$, which implies that $h$ is strictly decreasing on the interval $(0, 1].$ 

\end{proof}

\subsection{Computation of the Risk with a Minibatch}
\label{app:batch}
With a batch size $\bs$, the mean stochastic gradient is (suppose w.l.o.g. $\vect_*=0$):
\[
g_t = \frac{1}{\bs}\sum_{i=1}^\bs
\left[  {\featvec}_{i} {\featvec}_i^\top \vect_t - \varepsilon_i {\featvec}_i   \right].
\]

So $\vect_{t+1} = \frac{1}{\bs}\sum_{i=1}^\bs(I - \gamma\eta_t{\featvec}_i {\featvec}_i^\top)\vect_t +\gamma\eta_t \varepsilon_i {\featvec}_i$.

\begin{align*}
M_{t+1} &= \frac{1}{\bs^2}\E\left[\sum_{i=1}^\bs((I-\gamma \eta_t {\featvec}_i {\featvec}_i^\top)\vect_t + \gamma \eta_t \varepsilon_i {\featvec}_i)\sum_{j=1}^\bs((I-\gamma \eta_t {\featvec}_j {\featvec}_j^\top)\vect_t + \gamma \eta_t \varepsilon_j {\featvec}_j)^\top\right] ,
\\&=
\frac{1}{\bs^2}\sum_{i,j} \E\left[((I-\gamma \eta_t {\featvec}_i {\featvec}_i^\top)\vect_t + \gamma \eta_t \varepsilon_i {\featvec}_i)((I-\gamma \eta_t {\featvec}_j {\featvec}_j^\top)\vect_t + \gamma \eta_t \varepsilon_j {\featvec}_j)^\top\right] ,
\\&=
\frac{1}{\bs^2}\sum_{i,j} \E\left[((I-\gamma \eta_t {\featvec}_i {\featvec}_i^\top)\vect_t \vect_t^\top (I-\gamma \eta_t {\featvec}_j {\featvec}_j^\top)^\top \right] +\gamma^2\eta_t^2 \E[\varepsilon_i\varepsilon_j {\featvec}_i {\featvec}_j^\top],
\\&=
\frac{1}{\bs^2}\sum_{i,j} \E[\vect_t \vect_t^\top]-\gamma \eta_t\E[{\featvec}_i{\featvec}_i^\top \vect_t \vect_t^\top]-\gamma \eta_t\E[ \vect_t \vect_t^\top{\featvec}_j{\featvec}_j^\top]+ \gamma^2\eta_t^2\E\left[ {\featvec}_i {\featvec}_i^\top \vect_t \vect_t^\top {\featvec}_j {\featvec}_j^\top\right] +\gamma^2\eta_t^2 \E[\varepsilon_i\varepsilon_j {\featvec}_i {\featvec}_j^\top],
\\&=
\frac{1}{\bs^2}\sum_{i,j} \E[\vect_t \vect_t^\top]-\gamma \eta_t\E[{\featvec}_i{\featvec}_i^\top \vect_t \vect_t^\top]-\gamma \eta_t\E[ \vect_t \vect_t^\top{\featvec}_j{\featvec}_j^\top]+ \gamma^2\eta_t^2\E\left[ {\featvec}_i {\featvec}_i^\top \vect_t \vect_t^\top {\featvec}_j {\featvec}_j^\top\right] +\gamma^2\eta_t^2 \sigma^2 \Lambda \delta_{i,j},
\\&=
M_t - \gamma \eta_t\Lambda M_t - \gamma \eta_t M_t \Lambda +\frac{1}{\bs}\gamma^2\eta_t^2\sigma^2\Lambda  +\frac{1}{\bs^2}\sum_{i,j}  \gamma^2\eta_t^2\E\left[ {\featvec}_i {\featvec}_i^\top \vect_t \vect_t^\top {\featvec}_j {\featvec}_j^\top\right] .
\end{align*}

However,
\[
\E\left[ {\featvec}_i {\featvec}_i^\top \vect_t \vect_t^\top {\featvec}_j {\featvec}_j^\top\right] =
\begin{cases}
    \Lambda M_t \Lambda & \text{if }i \neq j,
    \\
    \tr (\Lambda) \Lambda \E[\diag(\vect_t)^2] & \text{if }i=j.
\end{cases}
\]

Taking the diagonal vector only, we get $\begin{cases}
    \Lambda^2 m_t & \text{if }i \neq j,
    \\
    (\tr \Lambda) \Lambda m_t & \text{if }i=j.
\end{cases}$

Therefore
\[
m_{t+1} = \left(I - 2\gamma \eta_t\Lambda+ \gamma^2\eta_t^2\frac{\bs-1}{\bs}\Lambda^2 + \gamma^2\eta_t^2\frac{1}{\bs}\tr (\Lambda)\Lambda \right)m_t + \gamma^2\eta_t^2 \frac{\sigma^2}{\bs} \lambda.
\]

\subsection{Minibatch Asymptotic Analysis}

\label{sec:batch_asymptotics}

\paragraph{Constant schedule. Bias:}

With $\eta_j=1$, the effective rate $\bar\gamma$ becomes $\bar\gamma := \gamma\Big(1-\tfrac{\gamma\tr(\Lambda)}{2\bs}\Big)$. Moreover, each term of the sum can be written:
\[
    \rho_i := 1 - 2\gamma Li^{-\alpha} + \gamma^2\tfrac{\tr(\Lambda)}{\bs}Li^{-\alpha}
             + \gamma^2\tfrac{\bs-1}{\bs}L^2i^{-2\alpha}
           = 1 - 2\bar\gamma\,Li^{-\alpha} + \gamma^2\tfrac{\bs-1}{\bs}L^2i^{-2\alpha}.
    \qquad
\]
Hence $B_t=\tfrac12\sum_i \tfrac{L\Delta}{i^{\alpha+\beta}}\rho_i^{\,t}$ has $z$-transform
$\mathcal B(z)=\tfrac{L\Delta}{2}\sum_i i^{-(\alpha+\beta)}\,[1-z\rho_i]^{-1}$, and we follow the
batch-free proof. After $\mu$ differentiations, with $s=1-z$ and the substitution
$x_i=is^{1/\alpha}$ (so $i^{-\alpha}=x_i^{-\alpha}s$ and $i^{-2\alpha}=x_i^{-2\alpha}s^2$),
the denominator reads
\[
    1-(1-s)\rho_i
    = s + (1-s)\big(2\bar\gamma L\,i^{-\alpha} + \gamma^2\tfrac{\bs-1}{\bs}L^2 i^{-2\alpha}\big)
    = s\Big[\,1 + (1-s)\,2\bar\gamma L\,x_i^{-\alpha}
             + \underbrace{(1-s)\,\gamma^2\tfrac{\bs-1}{\bs}L^2\,x_i^{-2\alpha}\,s}_{\;\to\,0\ \text{as }s\to0}\Big].
\]
The quadratic term carries an extra factor $s$ and therefore vanishes pointwise; by
dominated convergence the Riemann sum converges to the same integral as in the batch-free
case, with a new $\bar\gamma$. The Tauberian step then gives
\[
    B_T \sim \frac{L\Delta}{2\alpha}\,
    \frac{\Gamma\!\big(\tfrac{\beta-1}{\alpha}+1\big)}
         {\big(2\bar\gamma\,LT\big)^{\frac{\beta-1}{\alpha}+1}}
       = \frac{L\Delta}{2\alpha}\,
    \frac{\Gamma\!\big(\tfrac{\beta-1}{\alpha}+1\big)}
         {\big(2\gamma\big(1-\tfrac{\gamma\tr\Lambda}{2\bs}\big)LT\big)^{\frac{\beta-1}{\alpha}+1}},
\]
which is \cref{th:constant} with the single replacement
$\tr(\Lambda)\to\tr(\Lambda)/\bs$.

\paragraph{Constant schedule. Variance:}

We can directly compute the new variance. In this case, the $\zeta$ function cannot be used because the final sum is of the form $\sum_{i=1}^\infty \frac{1}{Ai^{\alpha}+B}$ where $B\neq 0$. 
\begin{align*}
V_T &= \frac{\sigma^2}{2\bs} \sum_{i=1}^\infty  \sum_{k=0}^{T-1} L^2i^{-2\alpha}\gamma^2   \left[ 1-2\gamma Li^{-\alpha}+\gamma^2 \frac{\tr(\Lambda)}{\bs}L i^{-\alpha} + \gamma^2\frac{\bs-1}{\bs}L^2i^{-2\alpha}\right]^{T-1-k},
\\&= 
 \frac{\sigma^2}{2\bs} \sum_{i=1}^\infty  L^2i^{-2\alpha}\gamma^2   \frac{1-\left[ 1-2\gamma Li^{-\alpha}+\gamma^2 \frac{\tr(\Lambda)}{\bs}L i^{-\alpha} + \gamma^2\frac{\bs-1}{\bs}L^2i^{-2\alpha}\right]^T}{1-\left[ 1-2\gamma Li^{-\alpha}+\gamma^2 \frac{\tr(\Lambda)}{\bs}L i^{-\alpha} + \gamma^2\frac{\bs-1}{\bs}L^2i^{-2\alpha}\right]},
\\ &\xrightarrow{T\to\infty} \frac{\gamma L\sigma^2}{4b} \sum_{i=1}^\infty \frac{1}{\left(1-\frac{\gamma \tr(\Lambda)}{2\bs} \right)i^{\alpha}-\frac{L\gamma}{2}\frac{\bs-1}{\bs}}.
\end{align*}

\paragraph{WSD schedule (including linear).}

Let $\a_i = 2\gamma L i^{-\alpha}$, $\b_i = \gamma^2\frac{\tr(\Lambda)}{\bs}Li^{-\alpha}$ and $\c_i  =  \gamma^2\frac{\bs-1}{\bs}L^2i^{-2\alpha} = O(i^{-2\alpha})$.

In the exponential, there is now $I_i=\int\log (1 - \a_iu+\b_iu^2+\c_i u^2)du$.

There are two cases:
\begin{itemize}
    \item If large dimensions dominate (bias for all $\alpha$, and variance for $\alpha<2$), then $I_i = -\frac{\a_i}{2}+\frac{\b_i}{3} + O(i^{-2\alpha})$.
    \item If small $u$ dominate (i.e. not the entire variance accumulation), for variance and $\alpha>2$,
    \[
        I_i = \int_0^x \left(-\a_i u + O(x^2)\right) du = -\frac{\a_i}{2}x^2 + O(x^3).
    \]
    
\end{itemize}

Therefore $\c_i$ never appears in the leading asymptotic equivalents. Compared with the batch-free case, the only two changes are $\tr(\Lambda)\to\tr(\Lambda)/\bs$ (through $\b_i$)
and $\sigma^2\to\sigma^2/\bs$ in the noise prefactor. Substituting these into the proofs of \cref{th:wsd} yields \cref{th:batch}.

\end{document}